\PassOptionsToPackage{table}{xcolor}
\documentclass{article}

\usepackage{iclr2027_conference,times}

\usepackage{microtype}
\usepackage{hyperref}
\usepackage{url}
\definecolor{darkblue}{rgb}{0.0,0.0,0.55}
\hypersetup{colorlinks=true, citecolor=darkblue, linkcolor=darkblue, urlcolor=darkblue}
\usepackage{booktabs}
\usepackage{amsmath,amssymb,amsthm}
\usepackage[table]{xcolor}
\usepackage{graphicx}
\usepackage{xspace}
\usepackage{algorithm}
\usepackage{algorithmic}
\usepackage{subcaption}
\usepackage{adjustbox}
\usepackage{multirow}
\usepackage{array}
\usepackage{tabularx}
\usepackage{flafter}
\usepackage{placeins}


\definecolor{tblHeader}{HTML}{E2EAF2}
\definecolor{tblSubheader}{HTML}{F1F4F7}
\definecolor{tblRazor}{HTML}{F9E8EB}
\definecolor{tblPrecursor}{HTML}{ECD9E6}
\definecolor{tblPositive}{HTML}{E6F2EA}
\definecolor{tblNegative}{HTML}{F4D8CF}
\definecolor{tblWarning}{HTML}{FAF0DD}
\definecolor{tblAccent}{HTML}{A61B29}
\newcommand{\thead}[1]{\textbf{\strut #1}}

\usepackage{amsmath,amsfonts,bm}

\def\eqref#1{equation~\ref{#1}}
\def\plaineqref#1{\ref{#1}}

\def\1{\bm{1}}

\DeclareMathAlphabet{\mathsfit}{\encodingdefault}{\sfdefault}{m}{sl}
\SetMathAlphabet{\mathsfit}{bold}{\encodingdefault}{\sfdefault}{bx}{n}

\newcommand{\method}{\textsc{Razor}\xspace}
\newcommand{\rcsloo}{RCS\discretionary{-}{}{-}LOO\xspace}
\newcommand{\thinkclose}{\texttt{\textless/think\textgreater}\xspace}
\newcommand{\rcs}{RCS\xspace}
\newcommand{\rcsrefill}{RCS\discretionary{-}{}{-}Refill\xspace}
\newcommand{\reaprms}{REAP\discretionary{-}{}{-}RMS\xspace}
\newcommand{\calib}{\textsc{RazorCal}\xspace}
\newcommand{\con}{c}
\newcommand{\resid}{r}
\newcommand{\routeset}{S}
\newcommand{\remset}{\mathcal{A}}

\theoremstyle{plain}
\newtheorem{proposition}{Proposition}
\theoremstyle{definition}
\newtheorem{assumption}{Assumption}
\newtheorem{remark}{Remark}

\definecolor{prune25}{rgb}{0.90,0.98,0.90}
\definecolor{prune50}{rgb}{0.78,0.94,0.78}
\definecolor{modelA}{rgb}{0.88,0.92,0.98}
\definecolor{modelB}{rgb}{0.95,0.90,0.97}

\title{RAZOR: Pruning Replaceable Experts in LLMs}

\newcommand{\linkicon}[1]{\hbox to 0.95em{%
  \includegraphics[height=1.55ex]{figures/icons/#1.pdf}\hfil}}
\author{Mingyang Song, Mao Zheng \\
Foundation Model Department, Tencent, China \\
\texttt{nickmysong@tencent.com} \\[5pt]
\normalfont\small\linkicon{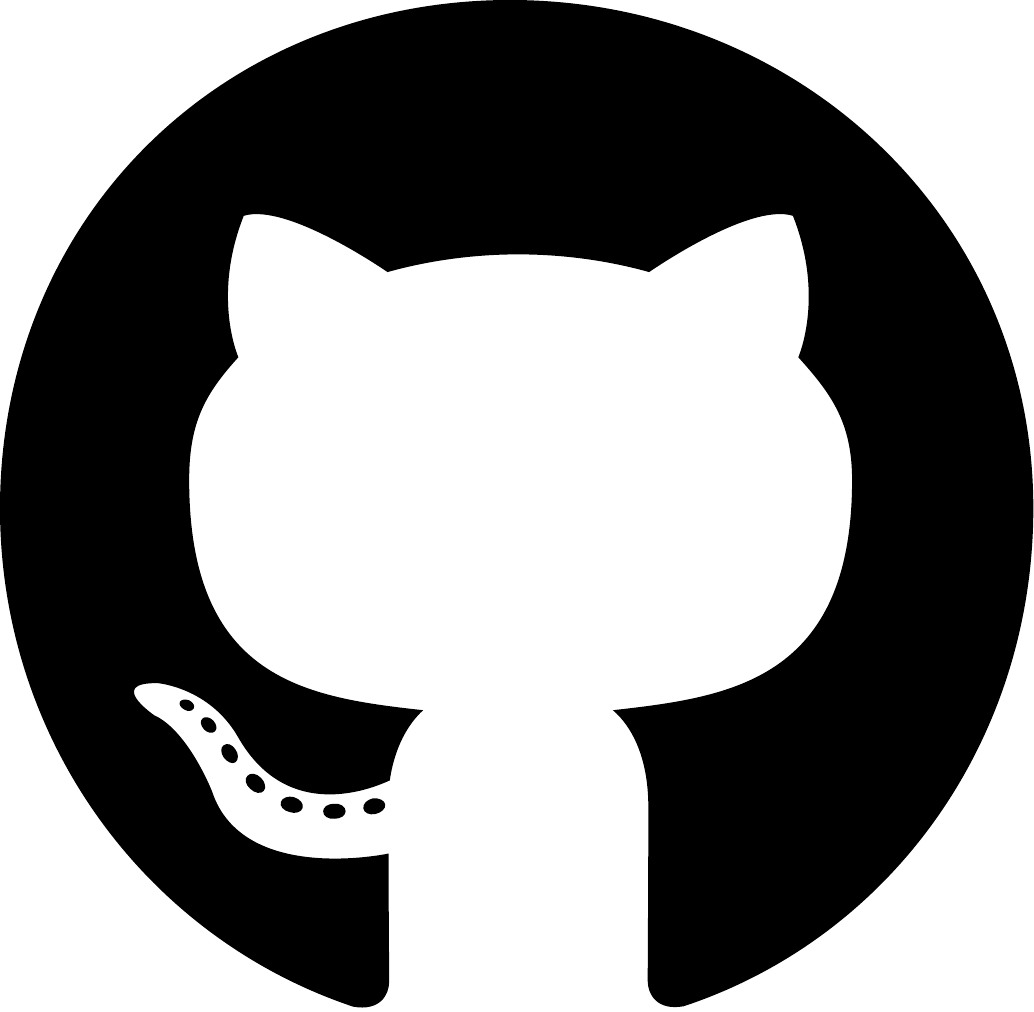}\href{https://github.com/nick7nlp/Razor}{Code}
\quad
\normalfont\small\linkicon{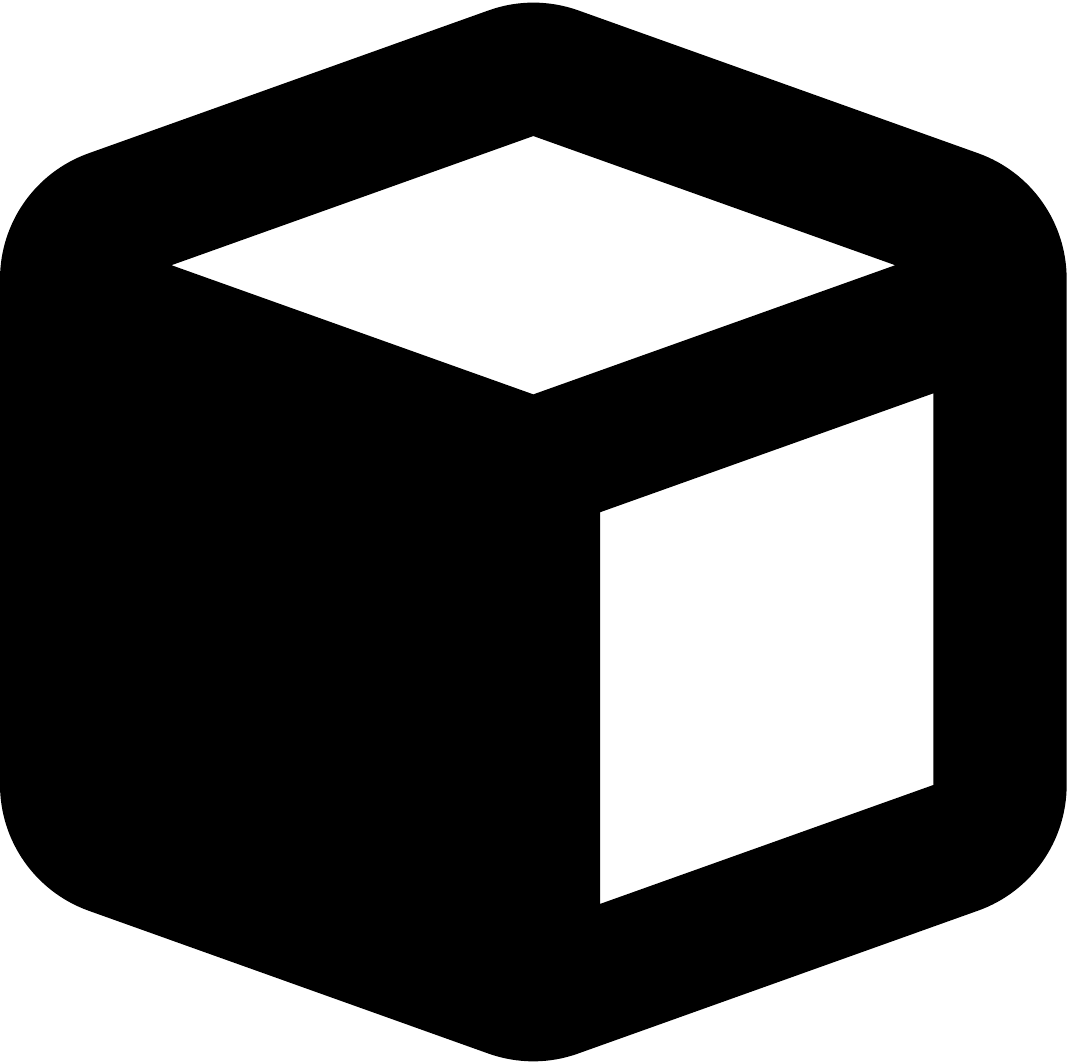}\href{https://huggingface.co/collections/Nickyang/razor}{Models}}

\newif\ifpreprint
\preprinttrue          %

\ifpreprint\iclrfinalcopy\fi

\begin{document}
\maketitle
\ifpreprint\lhead{Work in Progress}\fi

\begin{abstract}
Mixture-of-experts (MoE) models activate only a few experts per token but store
the entire expert pool. Pruning this pool requires identifying experts whose
removal preserves model behavior. Routing frequency and output magnitude do
not fully describe deletion damage, which also depends on how the surviving
and replacement experts compensate for the removed output.
We introduce \method, a training-free pruning method based on consensus
residuals, the deviations of expert outputs from their original weighted
mixture. At a fixed layer input, these residuals give the exact output change
for a single deletion under survivor renormalization and router refill.
\method aggregates this damage by conditional root mean square and selects
experts under a layerwise budget using forward computation alone, without
gradients, subset search, or recovery training. Against frequency,
activation-norm, and REAP baselines on
GLM-4.7-Flash and Qwen3.6-35B-A3B at 25\% and 50\% expert removal, it attains
the highest macro average over nine reasoning-intensive tasks in all four
model--budget settings, gaining 2.12--5.59 points over REAP and lowering
reverse KL in all four. On DeepSeek-V4-Flash-0731 and Hy3, it also achieves
the highest macro average among the three residual criteria.
Local exactness does not guarantee better joint pruning. Generation analyses
show changes in diversity, formatting, and termination despite higher task scores.
\end{abstract}

\section{Introduction}

Mixture-of-experts (MoE) models increase capacity while activating only a small
subset of experts per token, yet sparse computation does not reduce the cost of
storing the full expert pool
\citep{shazeer2017,lepikhin2021gshard,fedus2022switch}. Whole-expert pruning
addresses this burden by reducing the pool itself
\citep{lu2024notallexperts,muzio2024seer,reap2026,eep2024}. Our focus is pruning
reasoning models while retaining performance on tasks that require multi-step
problem solving. Changes to intermediate predictions can affect later steps,
motivating an evaluation of both predictive fidelity and complete task outputs.
At a fixed budget, we assess \emph{functional replaceability}, the extent to
which the surviving computation can compensate for a removed expert.

Common pruning scores characterize experts by selection frequency or output
magnitude \citep{muzio2024seer,fantasticexperts2025,reap2026}. These statistics
do not capture how expert outputs combine, even when the output norm is
weighted by routing mass. A large contribution may be
replaceable by the surviving mixture, whereas a small one may supply a component
that the survivors cannot recover. Isolated importance therefore differs from
deletion damage, which depends on the computation available after removal.

\begin{figure}[!t]
  \centering
  \includegraphics[width=\textwidth]{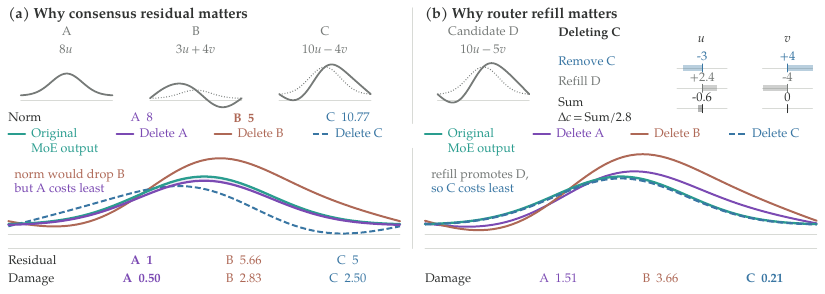}
  \caption{\textbf{Replaceability depends on output geometry and router refill.}
  A constructed single-token example without (a) and with (b) refill.
  Damage is L2 distance to the original mixture. Bold marks row minima.
  Appendix~\ref{app:geometry} gives the construction and plotting details.}
  \label{fig:razor-intuition}
\end{figure}

Deletion changes a routed mixture by renormalizing the survivors' weights and,
when the router refills the vacated slot, promoting an unselected expert. Figure~\ref{fig:razor-intuition} isolates both effects in a constructed
single-token example. Without refill, panel~(a) shows that removing the expert
closest to the original mixture is less harmful than removing the one with the
smallest output norm. With refill, panel~(b) shows that a promoted expert can
compensate for a removed balancing contribution and change the preferred
deletion. Output magnitude, fixed-support damage, and refill-aware damage can
therefore select different experts. More generally, identical gates and output
norms can correspond to different least-harmful deletions
(Proposition~\ref{prop:magnitude_insufficient}).

We study three consensus-residual criteria that account for different parts
of the fixed-input deletion counterfactual.
The reference is the original weighted MoE output, which we call the
\emph{consensus} without assuming expert agreement. \rcs measures each expert's
routing-weighted deviation from this mixture. \rcsloo accounts for survivor
renormalization, and \rcsrefill additionally includes router-selected
replacement. We call the refill criterion with conditional root mean square
(RMS) aggregation \method.
\rcsloo and \rcsrefill recover exact single-deletion output changes under their
respective fixed-input assumptions. Their comparison tests whether modeling
these additional effects improves the pruned model. We retain the
highest-scoring experts within each layer's budget using forward computation,
without gradients, subset search, or recovery
training. Local change remains a surrogate for the model-level objective
because joint deletions interact and altered activations
propagate across layers.

Our benchmark suite spans six capability categories, namely competition
mathematics, coding from function level to repository level, long-context
inference, graduate-level knowledge, verifiable constraint following, and tool
use. Scores assess final answers or executable outputs, providing evidence
about task performance rather than the correctness of intermediate reasoning.
We compare against pruning baselines on GLM-4.7-Flash and Qwen3.6-35B-A3B
at 25\% and 50\% expert removal,
where \method has the highest nine-task macro average of all evaluated methods
in every setting, exceeding REAP by 2.12--5.59 points, and also lowers reverse
KL relative to REAP in all four. DeepSeek-V4-Flash-0731 and Hy3 extend the
evaluation to two additional backbones, where \method also achieves the
highest nine-task macro average among the residual criteria.

The improvements have two limits. All four backbones fall below their
unpruned macro scores at 50\% removal, and better task performance does not
consistently preserve generation behavior. On Qwen3.6-35B-A3B, diversity does not
follow the task ranking, and \method at 50\% removal emits fewer stray
closing delimiters than REAP while reaching the output cap more often than the
unpruned model. These results motivate evaluating response behavior alongside
accuracy when pruning reasoning models.

\section{Methodology}
\label{sec:method}

\subsection{Pruning objective and routed mixture}
\label{sec:prelim}

We retain $B$ of $E$ routed experts per layer, with $k\leq B<E$, without
recovery training. Preserving the original output distribution at this budget
is the goal. We use local Euclidean output change as a tractable surrogate,
without assuming that minimizing it also minimizes model-level divergence.

For token representation $x$, the router selects $k$ experts
$\routeset(x)$. Expert $i$ produces $f_i(x)$ with normalized weight
$w_i(x)\geq0$, where $\sum_{i\in\routeset(x)}w_i(x)=1$. The routed output is
\begin{equation}
 \con(x)=\sum_{i\in\routeset(x)}w_i(x)f_i(x).
\label{eq:consensus}
\end{equation}
We call this mixture the \emph{consensus}, without assuming that its experts
agree. Unchanged residual-connection, dense, and shared-expert branches cancel
in the local comparison. A common routed-output scale does not change
within-layer rankings (Appendix~\ref{app:implementation}), and
Appendix~\ref{app:config} lists each backbone's scale.

The statistic $w_i\|f_i\|_2$ measures the magnitude of expert $i$'s weighted
output contribution. Deletion removes that contribution and renormalizes the
survivors, so its local effect depends on information that even the complete
set of gates and output norms cannot supply.

\begin{proposition}[Magnitude does not identify the least harmful deletion]
\label{prop:magnitude_insufficient}
Even with equal known gates, the expert-output norms do not in general determine
which single-expert deletion minimizes the local output shift after survivor
renormalization without refill.
\end{proposition}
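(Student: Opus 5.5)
Since the statement is a non-identifiability claim, the plan is to exhibit two configurations with identical gates and identical expert-output norms whose least-harmful single deletions differ. First we derive the closed form for the deletion effect. With $S$ the routed set and renormalized survivor weights $w_j/(1-w_i)$, deleting expert $i$ without refill produces
\begin{equation*}
 \con_{-i}(x)=\frac{\con(x)-w_i f_i(x)}{1-w_i},\qquad
 \con_{-i}(x)-\con(x)=\frac{w_i}{1-w_i}\bigl(\con(x)-f_i(x)\bigr).
\end{equation*}
The shift is therefore governed by the consensus residual $\|\con-f_i\|_2$ scaled by $w_i/(1-w_i)$. With equal gates $w_i=1/k$ this factor is the same for all experts, so the least harmful deletion is $\arg\min_i\|\con-f_i\|_2$. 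The proof thus reduces to showing that this quantity depends on more than the norms $\|f_i\|_2$.

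For the construction, we take $k=3$, equal gates $1/3$, and unit norms $\|f_i\|_2=1$ in $\mathbb{R}^2$, so that magnitude statistics are constant across experts and across both configurations. In configuration A we set $f_1=f_2=e_1$ and $f_3=-e_1$. The consensus is $e_1/3$, and the residuals are $2/3,2/3,4/3$, so expert $1$ (or $2$) is the best deletion and expert $3$ the worst. In configuration B we relabel so that $f_1=-e_1$ and $f_2=f_3=e_1$, which makes expert $1$ the worst deletion. The gates and norms are identical in A and B, yet the minimizer changes from $\{1,2\}$ to $\{2,3\}$. If ties are an objection, we use the alternative configuration A$'$ with $f_1=e_1$, $f_2=(\cos\theta,\sin\theta)$, and $f_3=-e_1$ for small $\theta>0$, which produces a unique minimizer. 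We also note that if $k=2$ with equal gates, both residuals equal $\|f_1-f_2\|/2$, so the example needs $k\ge3$. This is consistent with the statement's phrase ``in general''.

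The main care point is correctly deriving the renormalized deletion formula and checking that the scaling factor is uniform under equal gates. Once this is established, the rest is arithmetic, and strict uniqueness can be obtained by the small perturbation described above.
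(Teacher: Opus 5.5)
Your reduction is correct. With equal gates the factor $w_i/(1-w_i)$ is the same for every expert, so the least harmful deletion is $\arg\min_i\|f_i-c\|_2$, as in Proposition~\ref{prop:loo}.

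The problem is your primary pair A/B. Its least-harmful sets are $\{1,2\}$ and $\{2,3\}$, and these share expert $2$. The rule ``always delete expert $2$'' uses nothing beyond the gates $(1/3,1/3,1/3)$ and norms $(1,1,1)$, and it is optimal in both configurations. A and B therefore show only that the set of minimizers changes, not that a rule using gates and norms must err. Breaking the tie is the step that makes the argument work, not an optional refinement for the case where ``ties are an objection''. You also never state the configuration that pairs with A$'$.

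The repair is immediate. Let B$'$ be A$'$ with labels $1$ and $2$ swapped: $f_1=(\cos\theta,\sin\theta)$, $f_2=e_1$, $f_3=-e_1$. In A$'$ the consensus is $f_2/3$, and the squared residuals are $\tfrac{10}{9}-\tfrac{2}{3}\cos\theta$, $\tfrac{4}{9}$, and $\tfrac{10}{9}+\tfrac{2}{3}\cos\theta$. So expert $2$ is the unique minimizer for every $\theta\in(0,\pi)$, not only for small $\theta$. In B$'$ the unique minimizer is expert $1$, with identical gates and norms.

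With that repair your argument is valid, but it takes a different route from the paper. The paper uses scalar outputs $(8,3,10)$ and $(8,3,-10)$ under equal gates. The mixtures are $7$ and $1/3$, the deletion shifts are $(1/2,2,3/2)$ and $(23/6,4/3,31/6)$, and the unique minimizers are experts $1$ and $2$. No closed form or tie-breaking is needed.

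The paper's example also says more than a relabeling of equal-norm experts. Its norms are distinct, so a rule could tell the experts apart by magnitude. Yet flipping the sign of one output, with every norm fixed, moves the optimum. In the first configuration, deleting the smallest-norm expert is actually the worst choice. Your construction proves the statement as written, but only in its degenerate form, where the gates and norms give a rule no way to distinguish the experts at all.
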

\begin{proof}
Consider equally weighted scalar outputs $(8,3,10)$ and $(8,3,-10)$. Both have
the same output norms and gates. Their original mixtures are $7$ and $1/3$.
The absolute shifts after deleting the first, second, or third expert are
$(1/2,2,3/2)$ in the first case and $(23/6,4/3,31/6)$ in the second. The unique
least harmful deletion is therefore the first expert in one case and the second
in the other. A rule observing only the gates and norms cannot distinguish them.
\end{proof}

Reweighting output norms cannot recover the missing directional information.
Given $f_i$ and $w_i$, however, the shared vector $\con$ determines the survivor
output.

\subsection{From consensus residual to exact deletion effect}
\label{sec:residual}

To isolate one deletion, we hold the layer input and expert outputs fixed and
remove a selected expert with $w_i(x)<1$. We first model router refill, then
recover deletion without replacement as a special case.
\begin{assumption}[Refill]
\label{asm:refill}
Deleting $i\in\routeset(x)$ promotes the highest-ranked unselected expert $r$
under the router's selection rule. Its nonnegative mixture score, divided by
the original selected-score sum, defines the pseudo-weight $w_r(x)$.
The surviving scores and this promoted score are then renormalized over their
combined mass $1-w_i(x)+w_r(x)$.
\end{assumption}
\begin{assumption}[Fixed support]
\label{asm:support}
The surviving weights are renormalized over
$\routeset(x)\setminus\{i\}$ without promoting an unselected expert.
Equivalently, $w_r(x)\equiv0$ in Assumption~\ref{asm:refill}.
\end{assumption}
We write $\tilde\con^{-i}$ for the refilled mixture and $\con^{-i}$ for the
fixed-support one. With a fixed ranking, the same highest-ranked unselected
expert supplies the replacement for every single deletion at that token.
The first comparison therefore uses the routed outputs and one additional
expert output, whereas the second needs only the routed outputs.
Neither requires evaluating a separate end-to-end pruned model for each expert.
\begin{remark}
\label{rem:denominator}
Nonnegative $w_r(x)$ and $w_i(x)<1$ ensure
$D_i(x)\equiv1-w_i(x)+w_r(x)\geq1-w_i(x)>0$. This argument does not require
the selection ranking to match the ordering of mixture weights, which may
differ when routing uses a selection-only correction bias
(Appendix~\ref{app:implementation}).
\end{remark}

\begin{proposition}[Single-expert deletion under refill]
\label{prop:refill}
Under Assumption~\ref{asm:refill}, the token-level deletion damage is
\begin{equation}
 \delta_i(x)\equiv\big\|\con(x)-\tilde\con^{-i}(x)\big\|_2
 =\frac{\big\|w_i(x)\resid_i(x)-w_r(x)\resid_r(x)\big\|_2}{D_i(x)},
\label{eq:reroute}
\end{equation}
where $\resid_j(x)=f_j(x)-\con(x)$ is the consensus residual of expert $j$.
\end{proposition}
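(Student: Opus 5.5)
The plan is a direct algebraic computation: write the refilled mixture explicitly and subtract it from $\con(x)$ over a common denominator. By Assumption~\ref{asm:refill}, the normalized weights after deleting $i$ are $w_j(x)/D_i(x)$ for $j\in\routeset(x)\setminus\{i\}$ and $w_r(x)/D_i(x)$ for the promoted expert $r$, where $D_i(x)=1-w_i(x)+w_r(x)$. This holds because renormalizing the surviving raw scores together with the promoted score over their combined mass is the same as renormalizing the already-normalized weights, once $w_r$ is divided by the same original selected-score sum. Hence
\[
\tilde\con^{-i}(x)=\frac{\sum_{j\in\routeset(x)\setminus\{i\}}w_j(x)f_j(x)+w_r(x)f_r(x)}{D_i(x)}
=\frac{\con(x)-w_i(x)f_i(x)+w_r(x)f_r(x)}{D_i(x)}.
\]
Remark~\ref{rem:denominator} guarantees $D_i(x)>0$, so this expression is well defined.

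Next I compute the difference with the common denominator:
\[
D_i(x)\bigl(\con(x)-\tilde\con^{-i}(x)\bigr)=(1-w_i+w_r)\con-\con+w_if_i-w_rf_r=w_i(f_i-\con)-w_r(f_r-\con).
\]
The right-hand side is exactly $w_i\resid_i-w_r\resid_r$. Taking Euclidean norms and pulling out the positive scalar $D_i(x)$ gives \eqref{eq:reroute}.

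There is no real analytic obstacle; the computation is straightforward. The only step needing care is the first one: checking that the pseudo-weight convention in Assumption~\ref{asm:refill} really makes the post-deletion weights equal $w_j/D_i$. This comes down to tracking that dividing every raw score by the original selected-score sum commutes with the final renormalization. As a consistency check, setting $w_r\equiv0$ (Assumption~\ref{asm:support}) gives the fixed-support damage $w_i\|\resid_i\|_2/(1-w_i)$.
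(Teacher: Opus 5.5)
Your proposal is correct and matches the paper's proof: you write $\tilde\con^{-i}=(\con-w_if_i+w_rf_r)/D_i$, clear the denominator using $D_i-1=w_r-w_i$ to obtain $w_i\resid_i-w_r\resid_r$, and take norms. You also make explicit two points the paper leaves implicit, namely that the pseudo-weight convention gives post-deletion weights $w_j/D_i$, and that $D_i>0$ by Remark~\ref{rem:denominator}.
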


\begin{proof}
Suppressing $x$, the promoted slot carries mass $w_r$ while the surviving
selected mass is $1-w_i$, so $\tilde\con^{-i}=(\con-w_if_i+w_rf_r)/D_i$. Using
$D_i-1=-w_i+w_r$,
\[
 \con-\tilde\con^{-i}
 =\frac{D_i\con-\con+w_if_i-w_rf_r}{D_i}
 =\frac{w_i(f_i-\con)-w_r(f_r-\con)}{D_i}.
\]
Taking norms yields the result.
\end{proof}

Setting $w_r=0$ recovers the fixed-support case in closed form.

\begin{proposition}[Single-expert leave-one-out impact]
\label{prop:loo}
Under Assumption~\ref{asm:support}, the token-level deletion damage is
\begin{equation}
 \delta_i^{\mathrm{loo}}(x)\equiv\big\|\con(x)-\con^{-i}(x)\big\|_2
 =\frac{w_i(x)}{1-w_i(x)}\,\|f_i(x)-\con(x)\|_2 .
\label{eq:loo}
\end{equation}
\end{proposition}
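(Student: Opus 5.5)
The plan is to derive this as the special case $w_r(x)\equiv0$ of Proposition~\ref{prop:refill}, with a direct computation as a self-contained cross-check. Suppress $x$ throughout.

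\emph{Specialization.} Assumption~\ref{asm:support} states that fixed support is Assumption~\ref{asm:refill} with $w_r\equiv0$. Under this choice the denominator becomes $D_i=1-w_i$, which is strictly positive by Remark~\ref{rem:denominator} and the standing hypothesis $w_i<1$. The promoted term $w_r\resid_r$ also vanishes, so the refilled mixture $\tilde\con^{-i}$ coincides with $\con^{-i}$. Equation~\eqref{eq:reroute} then reads
\[
\big\|\con-\con^{-i}\big\|_2=\frac{\|w_i\resid_i\|_2}{1-w_i}.
\]

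\emph{Finishing the formula.} Since $w_i\geq0$, the scalar can be pulled out of the norm as $w_i\|\resid_i\|_2$. Substituting the definition $\resid_i=f_i-\con$ gives exactly equation~\eqref{eq:loo}.

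\emph{Direct check.} Without invoking Proposition~\ref{prop:refill}, one can write
\[
\con^{-i}=\frac{\con-w_if_i}{1-w_i},
\]
because the surviving weights sum to $1-w_i$. Then
\[
\con-\con^{-i}=\frac{(1-w_i)\con-\con+w_if_i}{1-w_i}=\frac{w_i(f_i-\con)}{1-w_i},
\]
and taking norms gives the same result.

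\emph{Possible obstacle.} The only delicate point is confirming that setting $w_r=0$ in Assumption~\ref{asm:refill} really reproduces the fixed-support renormalization, rather than leaving a degenerate zero-weight slot that changes the normalization. This holds because a zero-mass promoted expert contributes nothing to either the numerator or the combined mass $1-w_i+w_r$. Nonnegativity of $w_i$ is needed only to drop the absolute value when the scalar leaves the norm.
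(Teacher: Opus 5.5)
Your proposal is correct and matches the paper's proof, which likewise sets $w_r=0$ in Proposition~\ref{prop:refill} so that the denominator becomes $1-w_i$ and the numerator becomes $w_i\|f_i-\con\|_2$. Your direct cross-check via $\con^{-i}=(\con-w_if_i)/(1-w_i)$ is the same computation the paper gives in its survivor-distance appendix.
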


\begin{proof}
Setting $w_r=0$ in Proposition~\ref{prop:refill} gives denominator
$1-w_i$ and numerator $w_i\|f_i-\con\|_2$.
\end{proof}

These identities give replaceability a concrete form. Fixed-support damage is
equivalently $w_i\|f_i-\con^{-i}\|_2$, so distance to the survivor output
measures how far the remaining computation is from reproducing the removed
expert, and routing mass converts that distance into mixture change. Small
damage is therefore ambiguous by construction, arising from either a small
distance or a small routing mass. The LOO factor $1/(1-w_i)$ corrects for
self-inclusion in $\con$ and follows from the deletion counterfactual rather
than serving as a tunable importance weight (Appendix~\ref{app:survivor}).
Refill subtracts the promoted expert's weighted residual from the removed
expert's residual and changes the normalizer. The promoted expert can therefore
compensate for the removed contribution, though by an
amount set by residual magnitudes and routing weights as well as direction.
An exact match $w_i\resid_i=w_r\resid_r$ leaves the local mixture unchanged.

Exact single-deletion effects are not additive. Without refill, removing a
routed subset $\remset\subseteq\routeset(x)$ with total weight
$W_{\remset}=\sum_{i\in\remset}w_i<1$ gives
\begin{equation}
 \con-\con^{-\remset}
 =\frac{\sum_{i\in\remset}w_i(f_i-\con)}{1-W_{\remset}}.
\label{eq:setshift}
\end{equation}
Residuals can reinforce or cancel, while removed mass changes the denominator
(Appendix~\ref{app:set}). Scalar ranking does not optimize this joint effect,
and refill does not restore additivity. Promoting several ranks at once couples
the removed experts through both the numerator and the set-dependent normalizer. If $W_{\remset}=1$, no
routing mass survives and the fixed-support counterfactual is undefined, even if
$B\geq k$ experts remain in the layer. Deployment reselects top-$k$ from the
retained pool and propagates changed activations across layers, so matching its
refill rule at a single token still does not guarantee checkpoint fidelity.

\subsection{Expert scoring and pruning}
\label{sec:rms}

A pruning rule requires one score per expert. \rcs uses $w_i\|\resid_i\|_2$, \rcsloo uses
$\delta_i^{\mathrm{loo}}$, and \rcsrefill uses $\delta_i$. Unless explicitly
varied, all three aggregate their token quantities by root mean square (RMS)
over calibration tokens routed to each expert, which weights variable damage
above steady damage of the same average size. \emph{We write \textup{\method}
for \rcsrefill under conditional RMS}. This aggregation is an empirical
choice rather than a consequence of the deletion identity.
Appendices~\ref{app:rms_detail} and~\ref{app:aggregation} analyze and test it,
and Table~\ref{tab:instances} summarizes experimental coverage.

We retain the highest-scoring experts within each layer's budget and deploy
top-$k$ routing over the retained pools. Appendix~\ref{app:implementation}
gives the implementation and pseudocode.

\begin{figure}[!t]
\centering
\includegraphics[width=\textwidth]{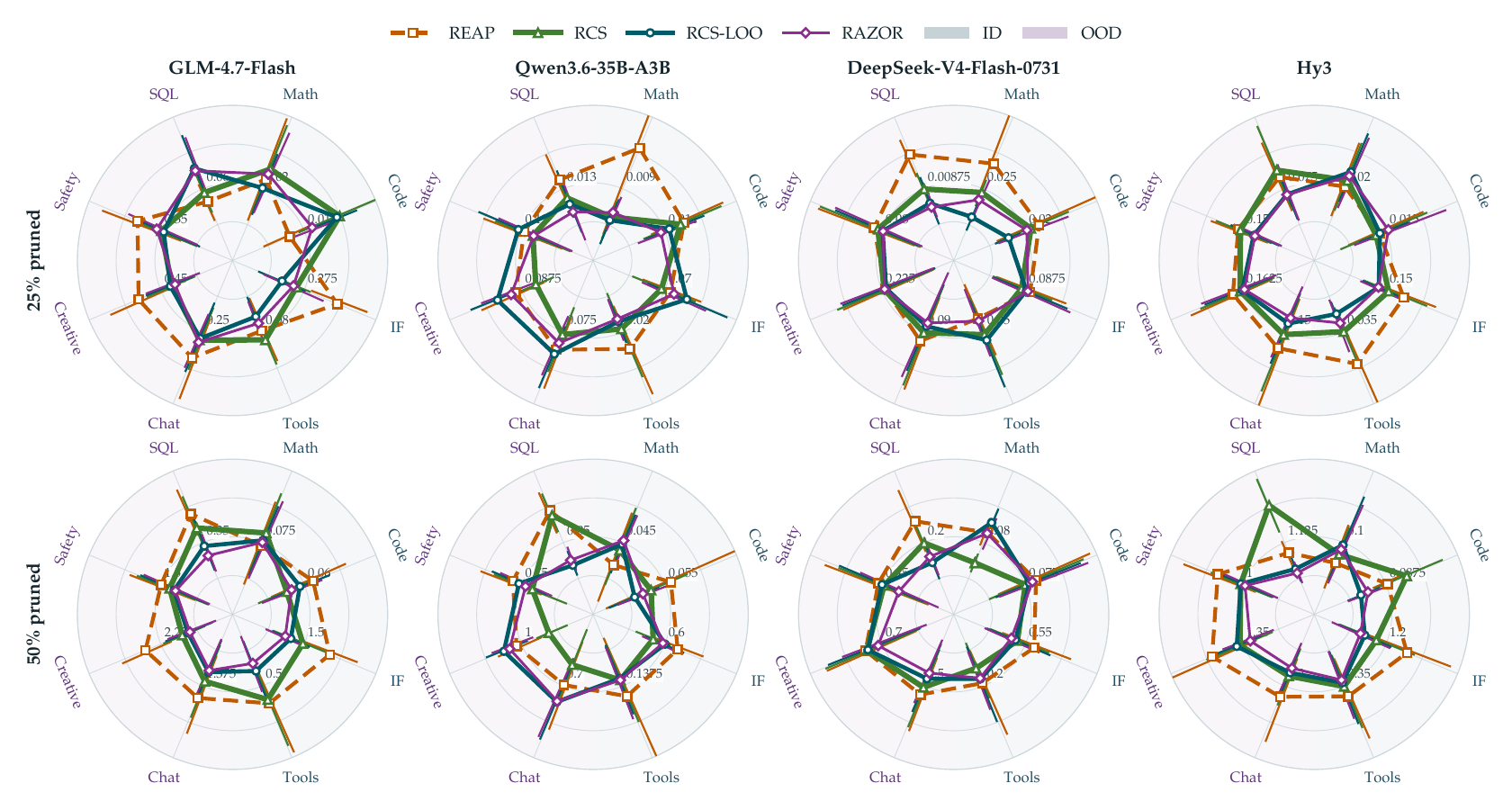}
\caption{\textbf{Domain-wise reverse KL at 25\% (top) and 50\% (bottom) removal.}
Values are in nats, with lower (inward) better. Compare methods along each spoke,
not by polygon area. Whiskers show pointwise 95\% paired-batch intervals.
Shading distinguishes ID/OOD domains.}
\label{fig:radar-four-criteria}
\end{figure}

\section{Experiments}
\label{sec:experiments}

Four research questions narrow from what a pruned model predicts to why the
scoring choices behind it matter.
\begin{enumerate}
\item[\textbf{RQ1.}] How well does pruning preserve output distributions across
domains and routing conditions?
\item[\textbf{RQ2.}] How much downstream task performance does each criterion
retain against prior scores, and does the ordering hold on further backbones?
\item[\textbf{RQ3.}] Which response properties shift even where task
accuracy is retained?
\item[\textbf{RQ4.}] Do refinements to the local deletion score improve pruning,
and how do scoring and aggregation choices affect expert selection and
deletion-set interactions?
\end{enumerate}
The experiments separately assess fidelity, task performance, response
behavior, and the effects of individual scoring choices.

\subsection{Experimental Setup}

\paragraph{Models and pruning methods.}
We compare \rcs, \rcsloo, and \method on GLM-4.7-Flash,
Qwen3.6-35B-A3B, DeepSeek-V4-Flash-0731, and Hy3 at 25\% and 50\% expert
removal, without recovery training. All three use conditional RMS unless an
aggregation ablation is specified, and each retains the highest-scoring
experts in learned-router layers. Fixed-hash layers use the common pruning and
route-remapping procedure in Appendix~\ref{app:implementation}. Matched checkpoint
and component studies use GLM-4.7-Flash and Qwen3.6-35B-A3B, whose routers
select 4/64 and 8/256 experts, respectively.

\paragraph{Baselines.}
We compare against routing frequency (Frequency), a conditional-mean
adaptation of expert activation norm (EAN) \citep{fantasticexperts2025}, and
REAP \citep{reap2026}, which averages routing-weighted output norms
conditional on use. Downstream benchmark comparisons against these three
baselines use GLM-4.7-Flash and Qwen3.6-35B-A3B, while separate mask-based
domain-wise and routing-stratified diagnostics compare against REAP on all four
backbones (Appendices~\ref{app:domain_fidelity} and~\ref{app:mechanism}).
REAP$^\star$ denotes an external checkpoint calibrated with 24,576 samples,
distinct from the matched REAP baseline.

\begin{table}[!t]
\definecolor{prune25}{HTML}{F2F6F8}
\definecolor{prune50}{HTML}{E2EBEF}
\definecolor{modelA}{HTML}{F7F0EE}
\definecolor{modelB}{HTML}{EFE1DD}
\centering
\small
\setlength{\tabcolsep}{3pt}
\renewcommand{\arraystretch}{1.05}
\caption{Nine-task benchmark results on GLM-4.7-Flash and Qwen3.6-35B-A3B at 25\% and 50\% removal.
Scores are scaled by 100. Bold marks the best pruned score in each column within a model--budget block.}
\label{tab:main}
\resizebox{\dimexpr\textwidth-2pt\relax}{!}{%
\begin{tabular}{>{\centering\arraybackslash}p{0.90cm}>{\centering\arraybackslash}p{0.60cm}c|c|ccc|c|c|c|cccc|c}
\toprule
\multicolumn{2}{c}{\multirow{2}{*}{\textbf{Model}}} & \multirow{2}{*}{\textbf{Method}} & \multicolumn{1}{c|}{Math} & \multicolumn{3}{c|}{Instruction Following} & \multicolumn{1}{c|}{Know.} & \multicolumn{1}{c|}{Tool} & \multicolumn{1}{c|}{LC} & \multicolumn{4}{c|}{Coding} & \multirow{2}{*}[-3pt]{\shortstack{\textit{Overall}\\\textit{Avg}}} \\
\cmidrule(lr){4-4} \cmidrule(lr){5-7} \cmidrule(lr){8-8} \cmidrule(lr){9-9} \cmidrule(lr){10-10} \cmidrule(lr){11-14}
& & & AIME'26 & IFEval & IFBench & \textit{Avg} & SuperGPQA & BFCL v4 & LongB. v2 & HE+ & LCB'26 & SWE & \textit{Avg} & \\
\midrule
\rowcolor{modelA}[\tabcolsep][\tabcolsep] & 0\% & --- & 31.7 & 79.7 & 40.0 & 59.9 & 38.5 & 65.3 & 25.3 & 79.9 & 37.4 & 5.2 & 40.8 & 44.8 \\
\rowcolor{prune25}[\tabcolsep][\tabcolsep] \cellcolor{modelA} &  & Frequency & 26.3 & 77.3 & 37.3 & 57.3 & 29.5 & 62.8 & 21.9 & 74.4 & 29.3 & \textbf{5.2} & 36.3 & 40.4 \\
\rowcolor{prune25}[\tabcolsep][\tabcolsep] \cellcolor{modelA} &  & EAN & 30.4 & 76.3 & 39.3 & 57.8 & 31.5 & 64.7 & 27.2 & 75.0 & 33.8 & 4.8 & 37.9 & 42.6 \\
\rowcolor{prune25}[\tabcolsep][\tabcolsep] \cellcolor{modelA} &  & REAP$^\star$ & 30.0 & 74.9 & 36.7 & 55.8 & 32.6 & 64.9 & 27.2 & 75.0 & 30.6 & 3.6 & 36.4 & 41.7 \\
\rowcolor{prune25}[\tabcolsep][\tabcolsep] \cellcolor{modelA} &  & REAP & 30.8 & 74.9 & 37.3 & 56.1 & 33.7 & 65.0 & 27.0 & 75.0 & 34.3 & 3.6 & 37.6 & 42.4 \\
\rowcolor{prune25}[\tabcolsep][\tabcolsep] \cellcolor{modelA} &  & \textbf{\rcs} & 29.6 & 77.1 & 36.0 & 56.6 & 33.3 & \textbf{65.7} & 26.0 & 73.2 & 33.8 & 4.2 & 37.1 & 42.1 \\
\rowcolor{prune25}[\tabcolsep][\tabcolsep] \cellcolor{modelA} &  & \textbf{\rcsloo} & 31.3 & 76.7 & 39.7 & 58.2 & \textbf{34.3} & 64.8 & 27.4 & 75.6 & 34.8 & 4.2 & 38.2 & 43.2 \\
\rowcolor{prune25}[\tabcolsep][\tabcolsep] \cellcolor{modelA} & \multirow{-7}{*}{25\%} & \textbf{\method} & \textbf{33.3} & \textbf{78.2} & \textbf{40.3} & \textbf{59.3} & 34.2 & 65.6 & \textbf{28.5} & \textbf{76.2} & \textbf{39.8} & 4.6 & \textbf{40.2} & \textbf{44.5} \\
\rowcolor{prune50}[\tabcolsep][\tabcolsep] \cellcolor{modelA} &  & Frequency & 0.0 & 62.8 & 30.3 & 46.6 & 15.7 & 51.1 & 18.7 & 54.9 & 17.8 & 0.2 & 24.3 & 27.9 \\
\rowcolor{prune50}[\tabcolsep][\tabcolsep] \cellcolor{modelA} &  & EAN & 15.8 & 70.6 & 29.0 & 49.8 & 25.4 & 54.8 & 23.7 & 59.2 & \textbf{33.9} & 2.6 & 31.9 & 35.0 \\
\rowcolor{prune50}[\tabcolsep][\tabcolsep] \cellcolor{modelA} &  & REAP & 23.3 & 61.0 & 25.3 & 43.2 & 27.8 & 56.0 & 22.1 & 67.1 & 26.7 & 1.2 & 31.7 & 34.5 \\
\rowcolor{prune50}[\tabcolsep][\tabcolsep] \cellcolor{modelA} &  & \textbf{\rcs} & 29.6 & 63.8 & 27.3 & 45.6 & 24.9 & \textbf{59.1} & 19.5 & 71.3 & 27.6 & 3.0 & 34.0 & 36.2 \\
\rowcolor{prune50}[\tabcolsep][\tabcolsep] \cellcolor{modelA} &  & \textbf{\rcsloo} & 26.3 & 67.5 & 30.7 & 49.1 & 29.0 & 58.6 & 20.9 & 72.0 & 29.4 & 3.2 & 34.9 & 37.5 \\
\rowcolor{prune50}[\tabcolsep][\tabcolsep] \cellcolor{modelA}\multirow{-14}{=}[1pt]{\rotatebox[origin=c]{90}{\textbf{GLM-4.7-Flash}}} & \multirow{-6}{*}{50\%} & \textbf{\method} & \textbf{30.8} & \textbf{72.0} & \textbf{32.3} & \textbf{52.2} & \textbf{33.7} & \textbf{59.1} & \textbf{25.9} & \textbf{73.2} & 29.8 & \textbf{4.0} & \textbf{35.7} & \textbf{40.1} \\
\midrule
\rowcolor{modelB}[\tabcolsep][\tabcolsep] & 0\% & --- & 76.3 & 83.9 & 33.7 & 58.8 & 62.8 & 67.6 & 50.1 & 91.5 & 71.9 & 17.0 & 60.1 & 61.6 \\
\rowcolor{prune25}[\tabcolsep][\tabcolsep] \cellcolor{modelB} &  & Frequency & 75.0 & 79.1 & 34.3 & 56.7 & 51.4 & \textbf{67.2} & 45.9 & 90.9 & 70.1 & 12.0 & 57.7 & 58.4 \\
\rowcolor{prune25}[\tabcolsep][\tabcolsep] \cellcolor{modelB} &  & EAN & 75.4 & 81.7 & 32.3 & 57.0 & 57.4 & 65.8 & 48.7 & 92.7 & 64.8 & 16.6 & 58.0 & 59.5 \\
\rowcolor{prune25}[\tabcolsep][\tabcolsep] \cellcolor{modelB} &  & REAP & 77.1 & 79.5 & 33.7 & 56.6 & 57.0 & 65.8 & 48.8 & 90.9 & 70.9 & 14.6 & 58.8 & 59.8 \\
\rowcolor{prune25}[\tabcolsep][\tabcolsep] \cellcolor{modelB} &  & \textbf{\rcs} & 76.7 & 80.0 & 38.0 & 59.0 & 57.3 & 66.3 & 48.5 & 92.1 & 71.9 & 18.0 & 60.7 & 61.0 \\
\rowcolor{prune25}[\tabcolsep][\tabcolsep] \cellcolor{modelB} &  & \textbf{\rcsloo} & 76.3 & 80.6 & 34.7 & 57.7 & 57.1 & 66.4 & 49.7 & 92.7 & \textbf{72.6} & 17.4 & 60.9 & 60.8 \\
\rowcolor{prune25}[\tabcolsep][\tabcolsep] \cellcolor{modelB} & \multirow{-6}{*}{25\%} & \textbf{\method} & \textbf{77.5} & \textbf{81.9} & \textbf{38.3} & \textbf{60.1} & \textbf{57.8} & 67.0 & \textbf{49.9} & \textbf{93.3} & \textbf{72.6} & \textbf{20.2} & \textbf{62.0} & \textbf{62.1} \\
\rowcolor{prune50}[\tabcolsep][\tabcolsep] \cellcolor{modelB} &  & Frequency & 68.8 & 71.0 & 31.7 & 51.4 & 41.1 & 43.1 & 42.5 & 89.6 & 55.0 & 7.2 & 50.6 & 50.0 \\
\rowcolor{prune50}[\tabcolsep][\tabcolsep] \cellcolor{modelB} &  & EAN & 72.1 & 74.1 & 30.3 & 52.2 & 46.1 & 58.7 & 46.1 & 88.4 & 47.1 & 8.4 & 48.0 & 52.4 \\
\rowcolor{prune50}[\tabcolsep][\tabcolsep] \cellcolor{modelB} &  & REAP & 71.7 & 73.2 & 31.0 & 52.1 & 46.3 & 63.9 & 47.5 & 88.4 & 66.4 & 12.6 & 55.8 & 55.7 \\
\rowcolor{prune50}[\tabcolsep][\tabcolsep] \cellcolor{modelB} &  & \textbf{\rcs} & 73.3 & 74.3 & \textbf{36.3} & 55.3 & 48.3 & 64.1 & \textbf{48.3} & 91.5 & 67.6 & 12.8 & 57.3 & 57.4 \\
\rowcolor{prune50}[\tabcolsep][\tabcolsep] \cellcolor{modelB} &  & \textbf{\rcsloo} & \textbf{75.4} & 78.2 & 32.0 & 55.1 & 47.0 & 64.5 & 47.9 & \textbf{92.1} & 67.9 & 13.6 & 57.9 & 57.6 \\
\rowcolor{prune50}[\tabcolsep][\tabcolsep] \cellcolor{modelB}\multirow{-13}{=}[1pt]{\rotatebox[origin=c]{90}{\textbf{Qwen3.6-35B-A3B}}} & \multirow{-6}{*}{50\%} & \textbf{\method} & 75.0 & \textbf{79.3} & 34.0 & \textbf{56.7} & \textbf{49.2} & \textbf{65.0} & \textbf{48.3} & \textbf{92.1} & \textbf{69.4} & \textbf{15.8} & \textbf{59.1} & \textbf{58.7} \\
\bottomrule
\end{tabular}%
}
\end{table}

\paragraph{Data and metrics.}
Calibration uses the 2,048-example, seven-domain \calib pool with
model-specific chat templates. Predictive fidelity is reverse KL,
$D_{\mathrm{KL}}(q\|p)$, from original ($p$) and pruned ($q$) predictions on
shared reference prefixes. Assistant-token NLL gives
$\Delta\mathrm{NLL}=\mathrm{NLL}_{q}-\mathrm{NLL}_{p}$ and relative excess PPL,
$\exp(\Delta\mathrm{NLL})-1$. Fidelity is reported over eight axes, four covered
by \calib (mathematics, code, instruction following, and tools) and four
held out from it (chat, creative writing, safety, and SQL), so
calibration-covered and held-out fidelity stay separable. This split is a
property of the fidelity axes and does not carry over to the benchmark suite
or the generation analyses. The matched
component sweep pools tokens within axes and weights axes equally, whereas
domain-wise and routing-stratified studies use grouped estimators.

The nine downstream tasks are AIME'26 \citep{maaAIME}, IFEval
\citep{ifeval2023}, IFBench \citep{ifbench2025}, SuperGPQA
\citep{supergpqa2025}, BFCL v4 \citep{bfcl2025,bfclv4}, LongBench v2
\citep{longbench2024}, HumanEval+ \citep{evalplus2023}, LiveCodeBench 2026
\citep{livecodebench2024}, and SWE-bench Verified
\citep{swebench2024}. They span competition mathematics,
coding from function level to repository level, long-context inference,
graduate-level knowledge, verifiable constraint following, and tool use. Each
is scored on its final answer or artifact, such as a numeric answer, a selected
option, or code that passes tests, rather than on the intermediate steps.
Appendix~\ref{app:benchmark_details} details task versions and score aggregation.
Response analyses measure diversity, stray closing thinking delimiters,
code-fence structure, length, and length-limit finishes. These are observable
properties of the generated text, not measures of reasoning quality, and they
qualify the task results without replacing them.

\paragraph{Hyperparameters and evaluation protocol.}
Matched comparisons use the same calibration inputs and layerwise budgets,
with the external REAP$^\star$ checkpoint reported separately.
For downstream benchmarks, we use temperature $0$, except for SWE-bench
Verified, where we use $0.7$. AIME'26 reports avg@8, and the other eight benchmarks report avg@3. Diversity and response-form
diagnostics follow the separate protocol of Section~\ref{sec:generation}.
Context and output limits, the thinking setting, and cross-study matching
are detailed in Appendices~\ref{app:razorcal}--\ref{app:benchmark_details}.
Diagnostic sampling and uncertainty are described in
Appendices~\ref{app:domain_fidelity}--\ref{app:behaviour}.

\subsection{Predictive Fidelity (RQ1)}
\label{sec:diagnostics}

We first test whether scores based on local output change preserve the
model's predictive distribution after joint pruning. Figure~\ref{fig:radar-four-criteria} compares REAP and the
three residual criteria per domain, Table~\ref{tab:loo_ablation} gives matched
equal-axis GLM-4.7-Flash and Qwen3.6-35B-A3B estimates, and
Figure~\ref{fig:routing-four-criteria} resolves the same comparison by routing
concentration under both fidelity measures, separating calibration-covered from
held-out domains. The three residual criteria use conditional RMS throughout,
so contrasts among them isolate the token quantity, whereas REAP uses the
conditional mean of its original definition, so comparisons against it vary
aggregation as well. Lower reverse KL indicates less drift,
and radar spokes are scaled independently from mostly nonzero origins, so
methods are comparable on the same spoke and not by polygon area.

\begin{table}[!t]
\definecolor{prune25}{HTML}{F2F6F8}
\definecolor{prune50}{HTML}{E2EBEF}
\definecolor{modelA}{HTML}{F7F0EE}
\definecolor{modelB}{HTML}{EFE1DD}
\centering
\small
\setlength{\tabcolsep}{3pt}
\renewcommand{\arraystretch}{1.05}
\caption{Nine-task benchmark results on DeepSeek-V4-Flash-0731 and Hy3
at 25\% and 50\% removal. These two backbones compare the residual criteria
against the unpruned reference, without the Frequency, EAN, and REAP baselines
of Table~\ref{tab:main}. Other conventions follow Table~\ref{tab:main}.}
\label{tab:additional}
\resizebox{\dimexpr\textwidth-2pt\relax}{!}{%
\begin{tabular}{>{\centering\arraybackslash}p{0.90cm}>{\centering\arraybackslash}p{0.60cm}c|c|ccc|c|c|c|cccc|c}
\toprule
\multicolumn{2}{c}{\multirow{2}{*}{\textbf{Model}}} & \multirow{2}{*}{\textbf{Method}} & \multicolumn{1}{c|}{Math} & \multicolumn{3}{c|}{Instruction Following} & \multicolumn{1}{c|}{Know.} & \multicolumn{1}{c|}{Tool} & \multicolumn{1}{c|}{LC} & \multicolumn{4}{c|}{Coding} & \multirow{2}{*}[-3pt]{\shortstack{\textit{Overall}\\\textit{Avg}}} \\
\cmidrule(lr){4-4} \cmidrule(lr){5-7} \cmidrule(lr){8-8} \cmidrule(lr){9-9} \cmidrule(lr){10-10} \cmidrule(lr){11-14}
& & & AIME'26 & IFEval & IFBench & \textit{Avg} & SuperGPQA & BFCL v4 & LongB. v2 & HE+ & LCB'26 & SWE & \textit{Avg} & \\
\midrule
\rowcolor{modelA}[\tabcolsep][\tabcolsep] & 0\% & --- & 67.9 & 86.1 & 37.7 & 61.9 & 60.3 & 72.6 & 33.8 & 83.5 & 69.3 & 21.2 & 58.0 & 59.2 \\
\rowcolor{prune25}[\tabcolsep][\tabcolsep] \cellcolor{modelA} &  & \textbf{\rcs} & 73.8 & 82.6 & 37.0 & 59.8 & 55.4 & 70.2 & 35.4 & 91.5 & 68.3 & 12.4 & 57.4 & 58.5 \\
\rowcolor{prune25}[\tabcolsep][\tabcolsep] \cellcolor{modelA} &  & \textbf{\rcsloo} & 76.7 & 83.7 & \textbf{38.7} & \textbf{61.2} & 55.6 & 70.4 & 35.2 & 89.6 & 68.8 & \textbf{18.8} & 59.1 & 59.7 \\
\rowcolor{prune25}[\tabcolsep][\tabcolsep] \cellcolor{modelA} & \multirow{-3}{*}{25\%} & \textbf{\method} & \textbf{81.3} & \textbf{84.7} & 36.3 & 60.5 & \textbf{56.0} & \textbf{72.8} & \textbf{37.4} & \textbf{93.3} & \textbf{69.1} & 18.2 & \textbf{60.2} & \textbf{61.0} \\
\rowcolor{prune50}[\tabcolsep][\tabcolsep] \cellcolor{modelA} &  & \textbf{\rcs} & 75.8 & 78.4 & 36.7 & 57.6 & \textbf{46.7} & 66.5 & 30.4 & 84.2 & 58.0 & 8.4 & 50.2 & 53.9 \\
\rowcolor{prune50}[\tabcolsep][\tabcolsep] \cellcolor{modelA} &  & \textbf{\rcsloo} & 76.7 & 79.8 & \textbf{39.0} & 59.4 & 46.5 & 68.1 & 29.6 & 85.4 & \textbf{58.4} & \textbf{10.4} & \textbf{51.4} & 54.9 \\
\rowcolor{prune50}[\tabcolsep][\tabcolsep] \cellcolor{modelA}\multirow{-7}{=}[1pt]{\rotatebox[origin=c]{90}{\footnotesize\shortstack{\textbf{DeepSeek-V4-}\\[-1pt]\textbf{Flash-0731}}}} & \multirow{-3}{*}{50\%} & \textbf{\method} & \textbf{78.8} & \textbf{80.4} & 38.7 & \textbf{59.6} & 46.4 & \textbf{72.1} & \textbf{30.8} & \textbf{86.0} & 58.2 & 9.2 & 51.1 & \textbf{55.6} \\
\midrule
\rowcolor{modelB}[\tabcolsep][\tabcolsep] & 0\% & --- & 82.9 & 90.4 & 46.7 & 68.6 & 64.7 & 64.2 & 50.0 & 90.8 & 68.3 & 15.4 & 58.2 & 63.7 \\
\rowcolor{prune25}[\tabcolsep][\tabcolsep] \cellcolor{modelB} &  & \textbf{\rcs} & 81.5 & 88.0 & 46.3 & 67.2 & 57.9 & 63.9 & 46.5 & 89.6 & 66.2 & 14.0 & 56.6 & 61.5 \\
\rowcolor{prune25}[\tabcolsep][\tabcolsep] \cellcolor{modelB} &  & \textbf{\rcsloo} & 83.0 & 89.1 & 46.0 & 67.6 & 58.3 & 64.2 & 47.3 & 90.2 & 65.7 & 13.4 & 56.4 & 61.9 \\
\rowcolor{prune25}[\tabcolsep][\tabcolsep] \cellcolor{modelB} & \multirow{-3}{*}{25\%} & \textbf{\method} & \textbf{88.9} & \textbf{89.5} & \textbf{47.3} & \textbf{68.4} & \textbf{59.2} & \textbf{64.3} & \textbf{48.2} & \textbf{90.8} & \textbf{66.9} & \textbf{15.8} & \textbf{57.8} & \textbf{63.4} \\
\rowcolor{prune50}[\tabcolsep][\tabcolsep] \cellcolor{modelB} &  & \textbf{\rcs} & 86.8 & 81.6 & 42.7 & 62.2 & 45.5 & 59.3 & 42.7 & 87.2 & 58.6 & 10.0 & 51.9 & 57.2 \\
\rowcolor{prune50}[\tabcolsep][\tabcolsep] \cellcolor{modelB} &  & \textbf{\rcsloo} & 91.6 & 83.7 & 44.7 & 64.2 & 46.1 & 59.6 & \textbf{44.7} & 89.0 & 58.2 & 10.0 & 52.4 & 58.6 \\
\rowcolor{prune50}[\tabcolsep][\tabcolsep] \cellcolor{modelB}\multirow{-7}{=}[1pt]{\rotatebox[origin=c]{90}{\textbf{Hy3}}} & \multirow{-3}{*}{50\%} & \textbf{\method} & \textbf{92.3} & \textbf{84.3} & \textbf{45.0} & \textbf{64.7} & \textbf{46.3} & \textbf{60.2} & 44.3 & \textbf{90.2} & \textbf{61.7} & \textbf{11.2} & \textbf{54.4} & \textbf{59.5} \\
\bottomrule
\end{tabular}%
}
\end{table}

\noindent\textbf{Obs 1. \method reduces predictive drift, but KL and likelihood
need not agree.}
Across the four matched GLM-4.7-Flash and Qwen3.6-35B-A3B settings, \method has lower reverse KL
than REAP in all four and than \rcsloo in three
(Table~\ref{tab:loo_ablation}). A separate routing-stratified comparison across
all four backbones illustrates the distinction between fidelity measures.
At 50\% removal, \method lowers KL relative to REAP in 73 of 80
model--domain-group--decile estimates, but keeps PPL closer to the original in
only 47. The corresponding \rcsloo counts are 72 and 47
(Appendix~\ref{app:ppl_view}).
Reduced distributional drift therefore need not imply better preservation of
reference-token likelihood.

\noindent\textbf{Obs 2. KL improvements are not uniform across domains.}
Of eight domains, the counts where \method lowers KL relative to REAP at
25\% and 50\% removal are, respectively, 5 and 7 for GLM-4.7-Flash, 6 and 5 for
Qwen3.6-35B-A3B, 6 and 8 for DeepSeek-V4-Flash-0731, and 6 and 7 for Hy3.
Yet Qwen3.6-35B-A3B's equal-domain held-out mean at 50\% is 0.2\% higher than REAP's,
and the relative-change interval includes zero. Aggregate gains thus coexist with domain-level
regressions. Appendix~\ref{app:domain_fidelity} gives the corresponding
\rcsloo results. These domain-wise and routing-stratified views reweight
overlapping predictions rather than provide independent replications.

\begin{figure}[!t]
\centering
\includegraphics[width=\textwidth]{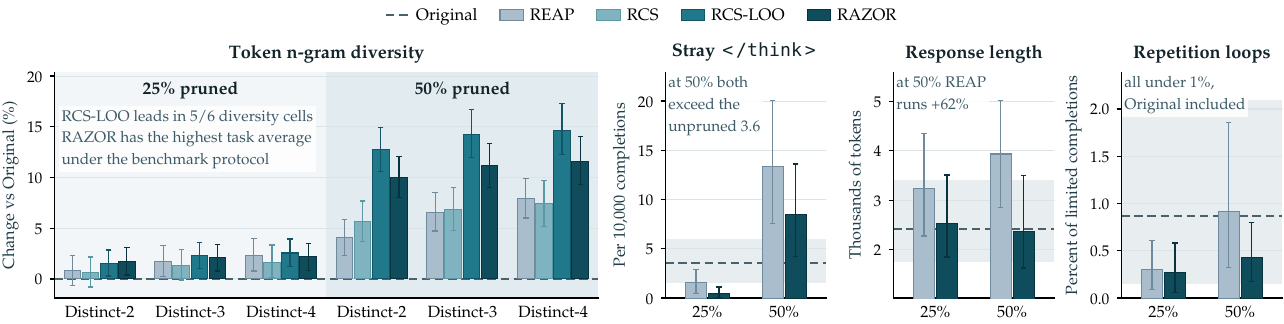}
\caption{\textbf{Qwen3.6-35B-A3B response diversity and response form.}
The left panel shows Distinct-$n$ changes from Original. The right panels show
completions containing \thinkclose per 10,000 LiveCodeBench completions, median token length,
and loop share among length-limited completions. Dashed lines mark Original,
with 95\% bands in the form panels. Whiskers show pointwise 95\% bootstrap intervals.}
\label{fig:diversity-grid}
\end{figure}

\subsection{Downstream Performance (RQ2)}
\label{sec:benchmarks}

Predictive fidelity on reference prefixes does not directly measure task
performance under decoding. Tables~\ref{tab:main} and~\ref{tab:additional} report the nine-task
suite, with unpruned references at 0\%. The first carries the baseline
comparison, and the second tests whether the ordering among the residual
criteria holds on two further routing architectures. Overall Avg
weights all nine tasks equally, while the two \textit{Avg} columns average
instruction-following and coding tasks. Macro gaps quoted below use unrounded
task-score averages.

\noindent\textbf{Obs 3. \method leads the baseline comparison in every setting
where baselines were run.}
Against Frequency, EAN, and REAP on GLM-4.7-Flash and Qwen3.6-35B-A3B,
\method has the highest macro score in all four model--budget settings.
Relative to REAP, it gains 2.12--5.59 points and wins all 36 paired task
comparisons. \rcsloo gains 0.80--3.01 points over REAP while losing three
of those 36 comparisons. DeepSeek-V4-Flash-0731 and Hy3 extend the comparison
among residual criteria, with the unpruned models as references and no
Frequency, EAN, or REAP benchmark runs. \method leads \rcsloo by 0.74--2.58 points
across all eight settings and wins 61 of 72 paired tasks against it, tying two
and losing nine, so the macro advantage does not imply a uniform taskwise
ordering. Appendix~\ref{app:benchmark_details}
gives the per-task breakdown and the scope of each comparison.

\noindent\textbf{Obs 4. Near-original performance at 25\% gives way to consistent
losses at 50\%.}
The 25\% \method checkpoints remain close to or above the originals in
macro score, whereas 50\% removal is lossy on all four backbones. For example,
GLM-4.7-Flash falls from 44.5 at 25\% to 40.1 at 50\%, against 44.8 unpruned.
Qwen3.6-35B-A3B falls from 62.1 to 58.7, against 61.6. These small gains above
the original macro scores do not establish that pruning improves the original
models.

\setcounter{topnumber}{1}

\begin{figure}[t]
\centering
\includegraphics[width=\textwidth]{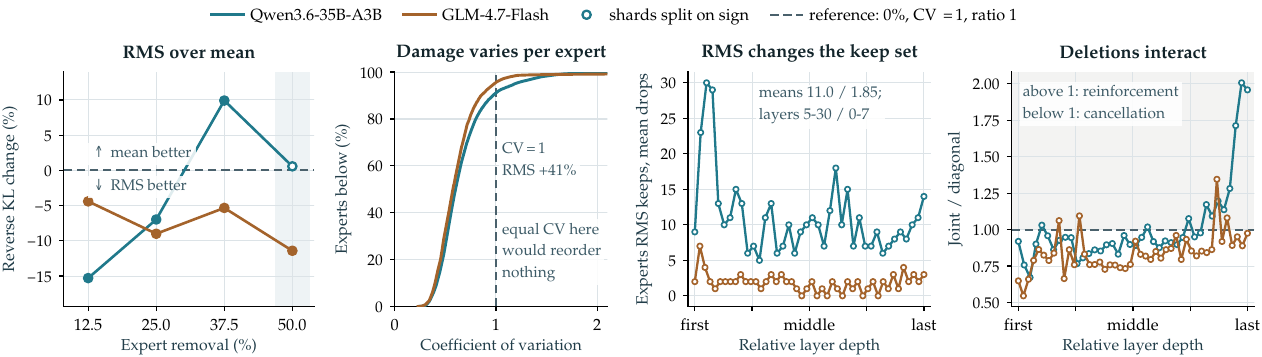}
\caption{\textbf{RMS aggregation and expert selection for \method.}
Left to right, panels show relative KL change from mean (negative favors RMS),
expert CV, RMS-only retained experts per layer, and deletion-set Gram ratios.
Shading marks 50\% removal, used in the last three panels.}
\label{fig:selection-diagnostics}
\end{figure}

\subsection{Generation Behavior (RQ3)}
\label{sec:generation}

A task score does not describe the diversity, format, length, or termination
of generated responses. We examine these properties on Qwen3.6-35B-A3B
(Figure~\ref{fig:diversity-grid}). Diversity compares all four criteria by
Distinct-$n$ change from Original ($n=2,3,4$) over 16 fixed responses per
question, weighting AIME'26, HumanEval+, and LiveCodeBench equally. Response
form compares REAP and \method on LiveCodeBench over 256 shared sample
positions per question (44,800 completions per variant). Sampling uses
temperature $0.7$, top-$p=0.95$, top-$k=20$, an 8,192-token cap, and thinking
disabled, so these results probe response behavior rather than explicit
reasoning traces. Diversity intervals use paired-question resampling and form
intervals resample questions within each variant, both conditioning on the
fixed generations
(Appendices~\ref{app:distinct_n_protocol} and~\ref{app:behaviour}).

\noindent\textbf{Obs 5. Higher diversity does not identify the best pruning
criterion.}
\rcsloo has the highest Distinct-$n$ in five of the six combinations of
removal budget and n-gram order, and exceeds REAP in all six by a median of
3.71 percentage points in relative change from Original. It does so despite
\method's stronger macro task performance. The exception is Distinct-2 at 25\%, where
\method changes by $+1.73\%$ from Original versus $+1.53\%$ for
\rcsloo. Changes are smaller at 25\% than at 50\%. In a separate four-task
analysis, Qwen3.6-35B-A3B's \rcsloo--REAP Distinct-4 gap reverses under a 128-token
prefix control that also changes question eligibility, so the reversal cannot
be attributed to length alone (Appendix~\ref{app:response_diversity}). Higher
diversity therefore establishes neither correctness nor a ranking robust to
this joint control.

\noindent\textbf{Obs 6. Closer response form does not imply preserved
termination rates.}
At 50\% removal, the rate of completions containing a stray \thinkclose
falls from REAP's 13.4 to \method's 8.5 per 10,000 completions,
whereas \rcsloo reaches 68.5
(Table~\ref{tab:behaviour-lcb}). Median length is also closer to
Original, at 2.37k tokens for \method versus 2.43k unpruned and 3.94k for REAP,
as plotted in Figure~\ref{fig:diversity-grid}. That table reports median
character lengths for the same responses. Their proportional changes differ
from those of median token lengths
(Appendix~\ref{app:behaviour}).
These response-form comparisons draw on a different sample pool from the
diversity result above and are not adjusted for length.

Length-limited finishes nevertheless reach 25.8\% for \method and 24.2\%
for REAP, against 16.0\% unpruned. Few capped responses satisfy the
high-repetition criterion. Among
length-limited completions, loop shares under the rightmost panel's criterion
(fewer than one quarter of token 4-grams unique) are 0.27\%/0.43\% for
\method and 0.30\%/0.91\% for REAP at 25\%/50\% removal, all below 1\%
(Appendix~\ref{app:behaviour}).

\subsection{Discussion (RQ4)}
\label{sec:discussion}

The checkpoint comparisons combine several scoring choices. We next vary
individual components to assess their effects on predictive fidelity and
expert selection.

\subsubsection{Scoring components}
\label{sec:main_results}

The three criteria separate components of the local deletion counterfactual.
\rcs scores the
routing-weighted consensus residual, \rcsloo adds survivor renormalization,
and \method adds router-selected replacement.
Table~\ref{tab:loo_ablation} compares them against REAP on
matched GLM-4.7-Flash and Qwen3.6-35B-A3B checkpoints at both budgets, and the
component contrasts of Appendix~\ref{app:matched_components} hold the
remaining choices fixed. The factorial study
(Appendix~\ref{app:factorial}) comes from a separate evaluation collection and
is therefore read on its own.

\noindent\textbf{Obs 7. Consensus subtraction helps, while counterfactual
refinements have mixed effects.}
Consensus subtraction lowers KL in all four model--budget settings when
routing weights and mean aggregation are held fixed, though the margin is
0.2\% in one of them, so the direction is consistent while the size is not
always meaningful. This matched contrast
isolates the reference change, unlike REAP-to-\rcs, which also changes aggregation. Subsequent
refinements are model-dependent. Under conditional RMS, adding the LOO factor
lowers KL by 0.6\%/5.6\% on GLM-4.7-Flash but raises it by 7.1\%/11.8\% on
Qwen3.6-35B-A3B at 25\%/50\% removal. Refill then improves on \rcsloo in three
of four settings, with GLM-4.7-Flash at 25\% the exception ($+0.5\%$ KL).
Thus a more complete single-deletion counterfactual need not produce a better
jointly pruned checkpoint. The choice of \method rests on the benchmark
comparisons as well as these fidelity results, rather than local exactness alone.

\subsubsection{Expert selection}
\label{sec:selection}

To relate aggregation to the retained experts,
Figure~\ref{fig:selection-diagnostics} compares RMS-versus-mean KL across four budgets,
per-expert coefficient of variation (CV), per-layer keep-set swaps, and
deletion-set Gram ratios, with Table~\ref{tab:cv_stats} placing \method and
\rcsloo statistics side by side. Hollow markers in the KL panel indicate
disagreement in sign across its four evaluation shards, and the latter three
panels use 50\% removal.

\noindent\textbf{Obs 8. RMS changes retained sets, with model-dependent fidelity
benefits.}
RMS yields lower KL than mean aggregation at all four budgets on GLM-4.7-Flash,
but higher KL above 25\% removal on Qwen3.6-35B-A3B. This differs from
\rcsloo, whose RMS-over-mean gains at 25\% and 50\% removal are small on Qwen3.6-35B-A3B
(Appendix~\ref{app:matched_components}).
Variation in the RMS premium $\sqrt{1+\mathrm{CV}^2}$ reweights experts rather
than uniformly rescaling them. At 50\% removal, RMS retains an average of
11.0 experts per layer that mean would discard on Qwen3.6-35B-A3B and 1.85 on
GLM-4.7-Flash, from retention budgets of 128 and 32, respectively. The per-layer
ranges are 5--30 and 0--7. This establishes changed selection,
not that the swaps cause benchmark gains. Appendix~\ref{app:aggregation} gives
further aggregation contrasts.

\noindent\textbf{Obs 9. Local diagnostics expose interactions that scalar
rankings do not resolve.}
The Gram-ratio panel shows net reinforcement in four GLM-4.7-Flash layers for
\method, whereas \rcsloo-selected sets show net cancellation in all 46 GLM-4.7-Flash layers
and 29 of 40 Qwen3.6-35B-A3B layers (Appendix~\ref{app:joint_numerator}).
These interactions concern the fixed-support numerator, constrained by
$\sum_i w_i(f_i-c)=0$. The denominator and refill remain outside this probe.
Single-deletion rankings also show no consistent advantage over REAP, despite
favorable \rcsloo comparisons in concentrated-routing bins
(Appendices~\ref{app:selection_conditions} and~\ref{app:per_expert_ablation}).
Thus the probes expose interactions without establishing optimal joint
selection or the cause of the benchmark gains.

\section{Related Work}
\label{sec:related}

MoE compression methods differ in the removal unit, the importance signal, and
the selection procedure applied afterwards. \method varies only the second,
removing whole experts under a fixed layerwise budget without updating
the learned expert or router weights. We compare it with methods that score
whole experts independently, model relations among experts, or search over
retained sets.
Appendix~\ref{app:related_work} treats the families that change the removal
unit or the surviving computation, including merging, router adaptation, budget
allocation, and finer-grained compression.

\paragraph{Expert scoring.}
At the individual-expert level, pruning methods use routing, activation, or
weight statistics \citep{muzio2024seer,fantasticexperts2025,aimer2026}.
REAP analyzes promoted substitution and survivor renormalization, but its
pruning score averages the removed expert's routing-weighted output norm over
tokens selecting that expert \citep{reap2026}. A unified formulation separates
the removed contribution from a rerouting residual and adopts a magnitude
proxy, motivated by the cost of direct rerouting evaluation and the high overlap
between proxy-based and exact-damage selections in its experiments
\citep{unified2026}. Its score family varies routing frequency, gate weighting,
and activation strength, recommending gate-free conditional-mean activation
scores for task-agnostic pruning. Under our fixed-input routing assumptions,
consensus residuals express the refill-aware perturbation using the routed
outputs and one additional expert output per token. This avoids constructing
a separate pruned mixture for every candidate deletion. Conditional RMS
aggregation remains a separate design choice.

\paragraph{Expert relations.}
Independent scores can miss structure shared across experts. STUN and HC-SMoE
cluster router behavior or expert outputs, while SHAPE, ConMoE, and MAESTRO use
routing co-occurrence, parameter-space replaceability, or cross-layer
transitions \citep{stun2024,hcsmoe2024,shape2026,conmoe2026,maestro2026}.
These relations support selection, merging, or path modeling, whereas \method
scores deletion with survivor renormalization and refill, without recovery.

\paragraph{Candidate-set search.}
Beyond relational ranking, other methods evaluate retained sets through
reconstruction, joint pruning--merging, or layerwise and blockwise search
\citep{lu2024notallexperts,eep2024,moeii2024}. \method performs no subset
search. Its exactness for one fixed-input deletion does not solve joint pruning,
because deletions interact and a refill candidate may itself be removed.

\section{Conclusion}

\method uses consensus residuals to score expert replaceability under
survivor renormalization and router refill. The resulting single-deletion
identities support training-free pruning from forward computation.
Across two backbones and two removal budgets, \method achieves the highest
nine-task macro average among the evaluated baselines and residual criteria,
and lowers reverse KL relative to REAP in all four settings. It also achieves
the highest macro task score among the residual criteria on two additional
backbones. These results support including survivor and replacement outputs
in expert scoring, while leaving
the connection between local damage and joint pruning empirical.
In particular, more complete local counterfactuals do not consistently improve
fidelity, and residual criteria yield less stable selections than REAP in the
Qwen3.6-35B-A3B calibration-budget study
(Appendix~\ref{app:calibration_budget}). Changes in
diversity, formatting, and termination further show why task performance and
response behavior require separate evaluation.

\paragraph{Limitations.}
Joint pruning can remove the refill candidate assumed by a single-deletion
score. Interacting deletions and changes propagated across layers also remain
outside the local counterfactual. The benchmark comparisons cover fixed
checkpoints and report no task-level uncertainty. Physical pruning reduces
stored parameters, but retaining the original top-$k$ does not imply a
proportional reduction in active computation. Scoring runtime, serving
latency, energy use, and peak memory have not been measured.

\section*{AI use statement}

Generative AI tools assisted with the refinement of academic prose, focusing
on clarity, coherence, and precision of expression. They also supported
literature search and coding.

\section*{Ethics statement}

This work studies expert pruning of post-trained MoE language models, aiming
to reduce model storage requirements while retaining their capabilities.
Removing experts can also alter behaviors shaped by post-training, including
safety-related responses, even when aggregate task performance is preserved.
Agreement with the original model on safety-related inputs measures predictive
fidelity, but does not establish safety or preservation of alignment. We
therefore emphasize independent safety and alignment evaluation before
deployment, particularly in safety-sensitive applications. The study uses
previously released calibration data (Appendix~\ref{app:razorcal}) and involves
no human-subject research.

\section*{Reproducibility statement}

Section~\ref{sec:method} and Appendix~\ref{app:geometry} provide the scoring
criterion and derivations. Appendix~\ref{app:implementation} gives the algorithm
and implementation details. Appendix~\ref{app:evaluation} documents calibration
data, model configurations, evaluation protocols, score aggregation, and
uncertainty estimates. Appendix~\ref{app:benchmark_details} states the
evaluator settings and the scope of each cross-study comparison. The scoring
and pruning implementation, the \calib calibration corpus, and the eight pruned
checkpoints evaluated here are publicly available. Each checkpoint includes
the retained expert indices. The public calibration release replaces
potentially identifying strings with placeholders, so it need not reproduce
the original expert selection exactly.

\bibliography{razor}

\begin{thebibliography}{46}
\providecommand{\natexlab}[1]{#1}
\providecommand{\url}[1]{\texttt{#1}}
\expandafter\ifx\csname urlstyle\endcsname\relax
  \providecommand{\doi}[1]{doi: #1}\else
  \providecommand{\doi}{doi: \begingroup \urlstyle{rm}\Url}\fi

\bibitem[Bai et~al.(2025)Bai, Tu, Zhang, Peng, Wang, Lv, Cao, Xu, Hou, Dong,
  Tang, and Li]{longbench2024}
Yushi Bai, Shangqing Tu, Jiajie Zhang, Hao Peng, Xiaozhi Wang, Xin Lv, Shulin
  Cao, Jiazheng Xu, Lei Hou, Yuxiao Dong, Jie Tang, and Juanzi Li.
\newblock {LongBench} v2: Towards deeper understanding and reasoning on
  realistic long-context multitasks.
\newblock In \emph{Proceedings of the 63rd Annual Meeting of the Association
  for Computational Linguistics (Volume 1: Long Papers)}, pp.\  3639--3664.
  Association for Computational Linguistics, 2025.
\newblock \doi{10.18653/v1/2025.acl-long.183}.
\newblock URL \url{https://aclanthology.org/2025.acl-long.183/}.

\bibitem[Chen et~al.(2025)Chen, Liu, Sun, Chao, Hsu, and Lee]{hcsmoe2024}
I-Chun Chen, Hsu-Shen Liu, Wei-Fang Sun, Chen-Hao Chao, Yen-Chang Hsu, and
  Chun-Yi Lee.
\newblock Retraining-free merging of sparse {MoE} via hierarchical clustering.
\newblock In \emph{Proceedings of the 42nd International Conference on Machine
  Learning}, pp.\  8594--8620. PMLR, 2025.
\newblock URL \url{https://proceedings.mlr.press/v267/chen25aq.html}.

\bibitem[Chowdhury et~al.(2024)Chowdhury, Wang, El~Maghraoui, Wang, Chen, and
  Carothers]{provablepruning2024}
Mohammed Nowaz~Rabbani Chowdhury, Meng Wang, Kaoutar El~Maghraoui, Naigang
  Wang, Pin-Yu Chen, and Christopher Carothers.
\newblock A provably effective method for pruning experts in fine-tuned sparse
  mixture-of-experts.
\newblock In \emph{Proceedings of the 41st International Conference on Machine
  Learning}, volume 235 of \emph{Proceedings of Machine Learning Research},
  pp.\  8815--8847, 2024.

\bibitem[{DeepSeek-AI}(2026)]{deepseekv4_2026}
{DeepSeek-AI}.
\newblock {DeepSeek-V4}: Towards highly efficient million-token context
  intelligence.
\newblock \emph{arXiv preprint arXiv:2606.19348}, 2026.
\newblock URL \url{https://arxiv.org/abs/2606.19348}.

\bibitem[Du et~al.(2025)Du, Yao, Ma, Wang, Zheng, Zhu, Liu, Liang, Jin, Wei,
  Zheng, Deng, Gavin, Jia, Jiang, Liao, Li, Li, Li, Li, Li, Ma, Ni, Que, Wang,
  Wen, Wu, Hsing, Xu, Yang, Wang, Zhou, Bai, Bu, Cai, Chen, Chen, Cheng, Cheng,
  Ding, Huang, Huang, Li, Li, Li, Liang, Lin, Lin, Ma, Pang, Peng, Peng, Qi,
  Qiu, Qu, Quan, Tan, Wang, Wang, Wang, Wang, Wang, Xu, Yang, Yuan, Yue, Zhan,
  Zhang, Zhang, Zhang, Zhang, Zhang, Zhao, Zheng, Zhong, Gao, Li, Liu, Liu,
  Liu, Ni, Peng, Qin, Su, Wang, Wang, Yang, Yang, Cao, Yue, Zhang, Zhou, Liu,
  Lin, Huang, and Zhang]{supergpqa2025}
Xinrun Du, Yifan Yao, Kaijing Ma, Bingli Wang, Tianyu Zheng, King Zhu, Minghao
  Liu, Yiming Liang, Xiaolong Jin, Zhenlin Wei, Chujie Zheng, Kaixin Deng,
  Shawn Gavin, Shian Jia, Sichao Jiang, Yiyan Liao, Rui Li, Qinrui Li, Sirun
  Li, Yizhi Li, Yunwen Li, David Ma, Yuansheng Ni, Haoran Que, Qiyao Wang,
  Zhoufutu Wen, Siwei Wu, Tyshawn Hsing, Ming Xu, Zhenzhu Yang, Zekun~Moore
  Wang, Junting Zhou, Yuelin Bai, Xingyuan Bu, Chenglin Cai, Liang Chen, Yifan
  Chen, Chengtuo Cheng, Tianhao Cheng, Keyi Ding, Siming Huang, Yun Huang,
  Yaoru Li, Yizhe Li, Zhaoqun Li, Tianhao Liang, Chengdong Lin, Hongquan Lin,
  Yinghao Ma, Tianyang Pang, Zhongyuan Peng, Zifan Peng, Qige Qi, Shi Qiu,
  Xingwei Qu, Shanghaoran Quan, Yizhou Tan, Zili Wang, Chenqing Wang, Hao Wang,
  Yiya Wang, Yubo Wang, Jiajun Xu, Kexin Yang, Ruibin Yuan, Yuanhao Yue,
  Tianyang Zhan, Chun Zhang, Jinyang Zhang, Xiyue Zhang, Xingjian Zhang, Yue
  Zhang, Yongchi Zhao, Xiangyu Zheng, Chenghua Zhong, Yang Gao, Zhoujun Li,
  Dayiheng Liu, Qian Liu, Tianyu Liu, Shiwen Ni, Junran Peng, Yujia Qin, Wenbo
  Su, Guoyin Wang, Shi Wang, Jian Yang, Min Yang, Meng Cao, Xiang Yue,
  Zhaoxiang Zhang, Wangchunshu Zhou, Jiaheng Liu, Qunshu Lin, Wenhao Huang, and
  Ge~Zhang.
\newblock {SuperGPQA}: Scaling {LLM} evaluation across 285 graduate
  disciplines.
\newblock In \emph{Advances in Neural Information Processing Systems}, volume
  38, Main Conference. Curran Associates, Inc., 2025.
\newblock \doi{10.52202/085713-3766}.
\newblock URL
  \url{https://proceedings.neurips.cc/paper_files/paper/2025/file/a3c5af1f56fc73eef1ba0f442739f5ca-Paper-Datasets_and_Benchmarks_Track.pdf}.

\bibitem[Fedus et~al.(2022)Fedus, Zoph, and Shazeer]{fedus2022switch}
William Fedus, Barret Zoph, and Noam Shazeer.
\newblock Switch transformers: Scaling to trillion parameter models with simple
  and efficient sparsity.
\newblock \emph{Journal of Machine Learning Research}, 23\penalty0
  (120):\penalty0 1--39, 2022.
\newblock URL \url{https://jmlr.org/papers/v23/21-0998.html}.

\bibitem[{GLM-4.5 Team}(2025)]{glm45_2025}
{GLM-4.5 Team}.
\newblock {GLM-4.5}: Agentic, reasoning, and coding ({ARC}) foundation models.
\newblock \emph{arXiv preprint arXiv:2508.06471}, 2025.
\newblock URL \url{https://arxiv.org/abs/2508.06471}.

\bibitem[Goel et~al.(2026)Goel, Maheshwari, and Chakraborty]{maestro2026}
Palaash Goel, Ayush Maheshwari, and Tanmoy Chakraborty.
\newblock It takes a {MAESTRO} to prune bad experts.
\newblock \emph{arXiv preprint arXiv:2607.08601}, 2026.
\newblock URL \url{https://arxiv.org/abs/2607.08601}.

\bibitem[He et~al.(2026)He, Zou, Jiang, Ding, Qu, Li, and
  Miller]{fishermoe2026}
Haoze He, Xinkai Zou, Xuan Jiang, Xingyuan Ding, Ao~Qu, Juncheng~Billy Li, and
  Heather Miller.
\newblock Less is {MoE}: Trimming experts in domain-specialist language models.
\newblock \emph{arXiv preprint arXiv:2606.05538}, 2026.

\bibitem[{Hugging Face Smol Models Research}(2025)]{smoltalk2}
{Hugging Face Smol Models Research}.
\newblock {SmolTalk2}.
\newblock Hugging Face dataset card, 2025.
\newblock URL \url{https://huggingface.co/datasets/HuggingFaceTB/smoltalk2}.
\newblock Accessed 2026-09-14.

\bibitem[Hyeon \& Do(2026)Hyeon and Do]{routercalib2026}
Sieun Hyeon and Jaeyoung Do.
\newblock Is retraining-free enough? the necessity of router calibration for
  efficient {MoE} compression.
\newblock \emph{arXiv preprint arXiv:2603.02217}, 2026.
\newblock URL \url{https://arxiv.org/abs/2603.02217}.

\bibitem[Jain et~al.(2025)Jain, Han, Gu, Li, Yan, Zhang, Wang, Solar-Lezama,
  Sen, and Stoica]{livecodebench2024}
Naman Jain, King Han, Alex Gu, Wen-Ding Li, Fanjia Yan, Tianjun Zhang, Sida~I.
  Wang, Armando Solar-Lezama, Koushik Sen, and Ion Stoica.
\newblock {LiveCodeBench}: Holistic and contamination free evaluation of large
  language models for code.
\newblock In \emph{The Thirteenth International Conference on Learning
  Representations}, 2025.
\newblock URL
  \url{https://proceedings.iclr.cc/paper_files/paper/2025/hash/94074dd5a072d28ff75a76dabed43767-Abstract-Conference.html}.

\bibitem[Jaiswal et~al.(2025)Jaiswal, Wang, Li, Li, Chen, Wang, Wang, Pang, and
  Du]{fantasticexperts2025}
Ajay Jaiswal, Jianyu Wang, Yixiao Li, Pingzhi Li, Tianlong Chen, Zhangyang
  Wang, Chong Wang, Ruoming Pang, and Xianzhi Du.
\newblock Finding fantastic experts in {MoEs}: A unified study for expert
  dropping strategies and observations.
\newblock \emph{arXiv preprint arXiv:2504.05586}, 2025.
\newblock URL \url{https://arxiv.org/abs/2504.05586}.

\bibitem[Jha(2026)]{halfexperts2026}
Anik Jha.
\newblock Half the experts, all the code: One-shot domain pruning of
  mixture-of-experts {LLMs} for coding.
\newblock \emph{arXiv preprint arXiv:2607.16721}, 2026.

\bibitem[Jha et~al.(2026)Jha, Hashemzadeh, Pasand, Parviz, Lee, and
  Knyazev]{ream2026}
Saurav Jha, Maryam Hashemzadeh, Ali~Saheb Pasand, Ali Parviz, Min-Joong Lee,
  and Boris Knyazev.
\newblock {REAM}: Merging improves pruning of experts in {LLMs}.
\newblock \emph{arXiv preprint arXiv:2604.04356}, 2026.
\newblock URL \url{https://arxiv.org/abs/2604.04356}.

\bibitem[Jimenez et~al.(2024)Jimenez, Yang, Wettig, Yao, Pei, Press, and
  Narasimhan]{swebench2024}
Carlos~E. Jimenez, John Yang, Alexander Wettig, Shunyu Yao, Kexin Pei, Ofir
  Press, and Karthik~R. Narasimhan.
\newblock {SWE-bench}: Can language models resolve real-world {GitHub} issues?
\newblock In \emph{The Twelfth International Conference on Learning
  Representations}, 2024.
\newblock URL \url{https://openreview.net/forum?id=VTF8yNQM66}.

\bibitem[Lasby et~al.(2026)Lasby, Lazarevich, Sinnadurai, Lie, Ioannou, and
  Thangarasa]{reap2026}
Mike Lasby, Ivan Lazarevich, Nish Sinnadurai, Sean Lie, Yani Ioannou, and
  Vithursan Thangarasa.
\newblock {REAP} the experts: Why pruning prevails for one-shot {MoE}
  compression.
\newblock In \emph{The Fourteenth International Conference on Learning
  Representations}, 2026.
\newblock URL \url{https://openreview.net/forum?id=ukGxWd2aDG}.

\bibitem[Lee et~al.(2025)Lee, Hwang, Qiao, Campos, Yao, and He]{stun2024}
Jaeseong Lee, Seung-won Hwang, Aurick Qiao, Daniel~F Campos, Zhewei Yao, and
  Yuxiong He.
\newblock {STUN}: Structured-then-unstructured pruning for scalable {MoE}
  pruning.
\newblock In \emph{Proceedings of the 63rd Annual Meeting of the Association
  for Computational Linguistics (Volume 1: Long Papers)}, pp.\  13660--13676.
  Association for Computational Linguistics, July 2025.
\newblock \doi{10.18653/v1/2025.acl-long.671}.
\newblock URL \url{https://aclanthology.org/2025.acl-long.671/}.

\bibitem[Lepikhin et~al.(2021)Lepikhin, Lee, Xu, Chen, Firat, Huang, Krikun,
  Shazeer, and Chen]{lepikhin2021gshard}
Dmitry Lepikhin, HyoukJoong Lee, Yuanzhong Xu, Dehao Chen, Orhan Firat, Yanping
  Huang, Maxim Krikun, Noam Shazeer, and Zhifeng Chen.
\newblock {GShard}: Scaling giant models with conditional computation and
  automatic sharding.
\newblock In \emph{International Conference on Learning Representations}, 2021.
\newblock URL \url{https://openreview.net/forum?id=qrwe7XHTmYb}.

\bibitem[Lewkowycz et~al.(2022)Lewkowycz, Andreassen, Dohan, Dyer, Michalewski,
  Ramasesh, Slone, Anil, Schlag, Gutman-Solo, Wu, Neyshabur, Gur-Ari, and
  Misra]{minerva2022}
Aitor Lewkowycz, Anders Andreassen, David Dohan, Ethan Dyer, Henryk
  Michalewski, Vinay Ramasesh, Ambrose Slone, Cem Anil, Imanol Schlag, Theo
  Gutman-Solo, Yuhuai Wu, Behnam Neyshabur, Guy Gur-Ari, and Vedant Misra.
\newblock Solving quantitative reasoning problems with language models.
\newblock In \emph{Advances in Neural Information Processing Systems},
  volume~35, pp.\  3843--3857. Curran Associates, Inc., 2022.
\newblock \doi{10.52202/068431-0278}.
\newblock URL
  \url{https://proceedings.neurips.cc/paper_files/paper/2022/file/18abbeef8cfe9203fdf9053c9c4fe191-Paper-Conference.pdf}.

\bibitem[Li et~al.(2016)Li, Galley, Brockett, Gao, and Dolan]{li2016diversity}
Jiwei Li, Michel Galley, Chris Brockett, Jianfeng Gao, and Bill Dolan.
\newblock A diversity-promoting objective function for neural conversation
  models.
\newblock In \emph{Proceedings of the 2016 Conference of the North American
  Chapter of the Association for Computational Linguistics: Human Language
  Technologies}, pp.\  110--119. Association for Computational Linguistics,
  2016.
\newblock \doi{10.18653/v1/N16-1014}.
\newblock URL \url{https://aclanthology.org/N16-1014/}.

\bibitem[Li et~al.(2026)Li, Yang, Zhou, Xue, Jiang, and Wang]{heapr2025}
Ke~Li, Zheng Yang, Zhongbin Zhou, Feng Xue, Zhonglin Jiang, and Wenxiao Wang.
\newblock {HEAPr}: Hessian-based efficient atomic expert pruning in output
  space.
\newblock In \emph{The Fourteenth International Conference on Learning
  Representations}, 2026.
\newblock URL \url{https://openreview.net/forum?id=JAbMgS7gl6}.

\bibitem[Li et~al.(2024)Li, Zhang, Yadav, Sung, Cheng, Bansal, and
  Chen]{mcsmoe2024}
Pingzhi Li, Zhenyu Zhang, Prateek Yadav, Yi-Lin Sung, Yu~Cheng, Mohit Bansal,
  and Tianlong Chen.
\newblock Merge, then compress: Demystify efficient {SMoE} with hints from its
  routing policy.
\newblock In \emph{The Twelfth International Conference on Learning
  Representations}, 2024.
\newblock URL \url{https://openreview.net/forum?id=eFWG9Cy3WK}.

\bibitem[Liu et~al.(2024)Liu, Zhu, Lin, Ning, Blaschko, Yan, Dai, Yang, and
  Wang]{eep2024}
Enshu Liu, Junyi Zhu, Zinan Lin, Xuefei Ning, Matthew~B. Blaschko, Shengen Yan,
  Guohao Dai, Huazhong Yang, and Yu~Wang.
\newblock Efficient expert pruning for sparse mixture-of-experts language
  models: Enhancing performance and reducing inference costs.
\newblock \emph{arXiv preprint arXiv:2407.00945}, 2024.
\newblock URL \url{https://arxiv.org/abs/2407.00945}.

\bibitem[Liu et~al.(2023)Liu, Xia, Wang, and Zhang]{evalplus2023}
Jiawei Liu, Chunqiu~Steven Xia, Yuyao Wang, and Lingming Zhang.
\newblock Is your code generated by {ChatGPT} really correct? rigorous
  evaluation of large language models for code generation.
\newblock In \emph{Advances in Neural Information Processing Systems},
  volume~36, pp.\  21558--21572. Curran Associates, Inc., 2023.
\newblock \doi{10.52202/075280-0943}.
\newblock URL
  \url{https://proceedings.neurips.cc/paper_files/paper/2023/file/43e9d647ccd3e4b7b5baab53f0368686-Paper-Conference.pdf}.

\bibitem[Liu et~al.(2026{\natexlab{a}})Liu, Chen, Tang, Shen, Wang, and
  Yuan]{aimer2026}
Zongfang Liu, Guangyi Chen, Shengkun Tang, Yifan Shen, Huan Wang, and Xin Yuan.
\newblock {AIMER}: Calibration-free task-agnostic {MoE} expert pruning.
\newblock \emph{arXiv preprint arXiv:2603.18492}, 2026{\natexlab{a}}.
\newblock URL \url{https://arxiv.org/abs/2603.18492}.

\bibitem[Liu et~al.(2026{\natexlab{b}})Liu, Zhang, Ma, Chen, and
  Yuan]{unified2026}
Zongfang Liu, Jinghui Zhang, Zijian Ma, Guangyi Chen, and Xin Yuan.
\newblock How to score experts for one-shot {MoE} expert pruning: A unified
  formulation and selection principle.
\newblock \emph{arXiv preprint arXiv:2606.15716}, 2026{\natexlab{b}}.
\newblock URL \url{https://arxiv.org/abs/2606.15716}.

\bibitem[Lu et~al.(2024)Lu, Liu, Xu, Zhou, Huang, Zhang, Yan, and
  Li]{lu2024notallexperts}
Xudong Lu, Qi~Liu, Yuhui Xu, Aojun Zhou, Siyuan Huang, Bo~Zhang, Junchi Yan,
  and Hongsheng Li.
\newblock Not all experts are equal: Efficient expert pruning and skipping for
  mixture-of-experts large language models.
\newblock In \emph{Proceedings of the 62nd Annual Meeting of the Association
  for Computational Linguistics (Volume 1: Long Papers)}, pp.\  6159--6172.
  Association for Computational Linguistics, August 2024.
\newblock \doi{10.18653/v1/2024.acl-long.334}.
\newblock URL \url{https://aclanthology.org/2024.acl-long.334/}.

\bibitem[Mao et~al.(2025)Mao, Tsao, Zhou, Patil, and Gonzalez]{bfclv4}
Huanzhi Mao, Raymond Tsao, Jingzhuo Zhou, Shishir~G. Patil, and Joseph~E.
  Gonzalez.
\newblock {BFCL V4}: Agentic part 1: Web search, July 2025.
\newblock URL
  \url{https://gorilla.cs.berkeley.edu/blogs/15_bfcl_v4_web_search.html}.

\bibitem[{Mathematical Association of America}(2026)]{maaAIME}
{Mathematical Association of America}.
\newblock American invitational mathematics examination ({AIME}) 2026.
\newblock Competition, 2026.
\newblock URL
  \url{https://maa.org/math-competitions/american-invitational-mathematics-examination-aime}.

\bibitem[Merity et~al.(2016)Merity, Xiong, Bradbury, and Socher]{wikitext2017}
Stephen Merity, Caiming Xiong, James Bradbury, and Richard Socher.
\newblock Pointer sentinel mixture models.
\newblock \emph{arXiv preprint arXiv:1609.07843}, 2016.
\newblock URL \url{https://arxiv.org/abs/1609.07843}.

\bibitem[Muzio et~al.(2024)Muzio, Sun, and He]{muzio2024seer}
Alexandre Muzio, Alex Sun, and Churan He.
\newblock {SEER-MoE}: Sparse expert efficiency through regularization for
  mixture-of-experts.
\newblock \emph{arXiv preprint arXiv:2404.05089}, 2024.
\newblock URL \url{https://arxiv.org/abs/2404.05089}.

\bibitem[{NVIDIA}(2026)]{nemotron3super2026}
{NVIDIA}.
\newblock {Nemotron 3 Super}: Open, efficient mixture-of-experts hybrid
  {Mamba-Transformer} model for agentic reasoning.
\newblock \emph{arXiv preprint arXiv:2604.12374}, 2026.
\newblock URL \url{https://arxiv.org/abs/2604.12374}.

\bibitem[Patil et~al.(2025)Patil, Mao, Yan, Ji, Suresh, Stoica, and
  Gonzalez]{bfcl2025}
Shishir~G Patil, Huanzhi Mao, Fanjia Yan, Charlie Cheng-Jie Ji, Vishnu Suresh,
  Ion Stoica, and Joseph~E. Gonzalez.
\newblock The berkeley function calling leaderboard ({BFCL}): From tool use to
  agentic evaluation of large language models.
\newblock In \emph{Proceedings of the 42nd International Conference on Machine
  Learning}, volume 267 of \emph{Proceedings of Machine Learning Research},
  pp.\  48371--48392. PMLR, 2025.
\newblock URL \url{https://proceedings.mlr.press/v267/patil25a.html}.

\bibitem[Pyatkin et~al.(2025)Pyatkin, Malik, Graf, Ivison, Huang, Dasigi,
  Lambert, and Hajishirzi]{ifbench2025}
Valentina Pyatkin, Saumya Malik, Victoria Graf, Hamish Ivison, Shengyi Huang,
  Pradeep Dasigi, Nathan Lambert, and Hannaneh Hajishirzi.
\newblock Generalizing verifiable instruction following.
\newblock In \emph{Advances in Neural Information Processing Systems}, volume
  38, Main Conference. Curran Associates, Inc., 2025.
\newblock \doi{10.52202/085713-1645}.
\newblock URL
  \url{https://proceedings.neurips.cc/paper_files/paper/2025/file/46499a0622ecf568b72d17b61e45dbd5-Paper-Datasets_and_Benchmarks_Track.pdf}.

\bibitem[{Qwen Team}(2026)]{qwen35_2026}
{Qwen Team}.
\newblock {Qwen3.5}: Towards native multimodal agents, February 2026.
\newblock URL \url{https://huggingface.co/Qwen/Qwen3.5-35B-A3B#citation}.

\bibitem[Shazeer et~al.(2017)Shazeer, Mirhoseini, Maziarz, Davis, Le, Hinton,
  and Dean]{shazeer2017}
Noam Shazeer, Azalia Mirhoseini, Krzysztof Maziarz, Andy Davis, Quoc Le,
  Geoffrey Hinton, and Jeff Dean.
\newblock Outrageously large neural networks: The sparsely-gated
  mixture-of-experts layer.
\newblock In \emph{International Conference on Learning Representations}, 2017.
\newblock URL \url{https://openreview.net/forum?id=B1ckMDqlg}.

\bibitem[{Team Olmo} et~al.(2025){Team Olmo}, Ettinger, Bertsch, Kuehl, Graham,
  Heineman, Groeneveld, Brahman, Timbers, Ivison, Morrison, Poznanski, Lo,
  Soldaini, Jordan, Chen, Noukhovitch, Lambert, Walsh, Dasigi, Berry, Malik,
  Shah, Geng, Arora, Gupta, Anderson, Xiao, Murray, Romero, Graf, Asai, Bhagia,
  Wettig, Liu, Rangapur, Anastasiades, Huang, Schwenk, Trivedi, Magnusson,
  Lochner, Liu, Miranda, Sap, Morgan, Schmitz, Guerquin, Wilson, Huff, Bras,
  Xin, Shao, Skjonsberg, Shen, Li, Wilde, Pyatkin, Merrill, Chang, Gu, Zeng,
  Sabharwal, Zettlemoyer, Koh, Farhadi, Smith, and Hajishirzi]{olmo3_2025}
{Team Olmo}, Allyson Ettinger, Amanda Bertsch, Bailey Kuehl, David Graham,
  David Heineman, Dirk Groeneveld, Faeze Brahman, Finbarr Timbers, Hamish
  Ivison, Jacob Morrison, Jake Poznanski, Kyle Lo, Luca Soldaini, Matt Jordan,
  Mayee Chen, Michael Noukhovitch, Nathan Lambert, Pete Walsh, Pradeep Dasigi,
  Robert Berry, Saumya Malik, Saurabh Shah, Scott Geng, Shane Arora, Shashank
  Gupta, Taira Anderson, Teng Xiao, Tyler Murray, Tyler Romero, Victoria Graf,
  Akari Asai, Akshita Bhagia, Alexander Wettig, Alisa Liu, Aman Rangapur, Chloe
  Anastasiades, Costa Huang, Dustin Schwenk, Harsh Trivedi, Ian Magnusson,
  Jaron Lochner, Jiacheng Liu, Lester James~V. Miranda, Maarten Sap, Malia
  Morgan, Michael Schmitz, Michal Guerquin, Michael Wilson, Regan Huff,
  Ronan~Le Bras, Rui Xin, Rulin Shao, Sam Skjonsberg, Shannon~Zejiang Shen,
  Shuyue~Stella Li, Tucker Wilde, Valentina Pyatkin, Will Merrill, Yapei Chang,
  Yuling Gu, Zhiyuan Zeng, Ashish Sabharwal, Luke Zettlemoyer, Pang~Wei Koh,
  Ali Farhadi, Noah~A. Smith, and Hannaneh Hajishirzi.
\newblock {Olmo 3}, 2025.
\newblock URL \url{https://arxiv.org/abs/2512.13961}.
\newblock Dataset card:
  \url{https://huggingface.co/datasets/allenai/Dolci-Instruct-SFT}.

\bibitem[Teknium(2023)]{openhermes25}
Teknium.
\newblock {OpenHermes 2.5}: An open dataset of synthetic data for generalist
  {LLM} assistants.
\newblock Hugging Face dataset card, 2023.
\newblock URL \url{https://huggingface.co/datasets/teknium/OpenHermes-2.5}.

\bibitem[{Tencent Hy Team}(2026)]{hy3_2026}
{Tencent Hy Team}.
\newblock {Hy3}.
\newblock Hugging Face model card, 2026.
\newblock URL \url{https://huggingface.co/tencent/Hy3}.

\bibitem[Xie et~al.(2024)Xie, Zhang, Zhou, Xie, Song, Liu, Wang, Lin, and
  Xu]{moepruner2024}
Yanyue Xie, Zhi Zhang, Ding Zhou, Cong Xie, Ziang Song, Xin Liu, Yanzhi Wang,
  Xue Lin, and An~Xu.
\newblock {MoE-Pruner}: Pruning mixture-of-experts large language model using
  the hints from its router.
\newblock \emph{arXiv preprint arXiv:2410.12013}, 2024.
\newblock URL \url{https://arxiv.org/abs/2410.12013}.

\bibitem[Yang et~al.(2024)Yang, Sui, Xiao, Huang, Gong, Duan, Jia, Yin, Cheng,
  and Yuan]{moeii2024}
Cheng Yang, Yang Sui, Jinqi Xiao, Lingyi Huang, Yu~Gong, Yuanlin Duan, Wenqi
  Jia, Miao Yin, Yu~Cheng, and Bo~Yuan.
\newblock {MoE-I\textsuperscript{2}}: Compressing mixture of experts models
  through inter-expert pruning and intra-expert low-rank decomposition.
\newblock In \emph{Findings of the Association for Computational Linguistics:
  EMNLP 2024}, pp.\  10456--10466. Association for Computational Linguistics,
  2024.
\newblock \doi{10.18653/v1/2024.findings-emnlp.612}.
\newblock URL \url{https://aclanthology.org/2024.findings-emnlp.612/}.

\bibitem[Yao et~al.(2026)Yao, Pan, Dai, Cong, Li, and Yang]{conmoe2026}
Yilun Yao, Jiaming Pan, Elsie Dai, Peizhuang Cong, Yaoming Li, and Tong Yang.
\newblock {ConMoE}: Expert-pool consolidation via prototype reassignment for
  {MoE} compression.
\newblock \emph{arXiv preprint arXiv:2605.29350}, 2026.
\newblock URL \url{https://arxiv.org/abs/2605.29350}.

\bibitem[Zhang(2026)]{shape2026}
Yuhao Zhang.
\newblock {SHAPE}: Coalition-aware expert pruning for sparse mixture-of-experts
  {LLMs}.
\newblock \emph{arXiv preprint arXiv:2606.09886}, 2026.
\newblock URL \url{https://arxiv.org/abs/2606.09886}.

\bibitem[Zhang et~al.(2026)Zhang, Ghosh, Liu, Yu, and Liu]{globalprune2026}
Zeliang Zhang, Nikhil Ghosh, Jiani Liu, Bin Yu, and Xiaodong Liu.
\newblock Does a global perspective help prune sparse {MoEs} elegantly?
\newblock \emph{arXiv preprint arXiv:2604.06542}, 2026.
\newblock URL \url{https://arxiv.org/abs/2604.06542}.

\bibitem[Zhou et~al.(2023)Zhou, Lu, Mishra, Brahma, Basu, Luan, Zhou, and
  Hou]{ifeval2023}
Jeffrey Zhou, Tianjian Lu, Swaroop Mishra, Siddhartha Brahma, Sujoy Basu,
  Yi~Luan, Denny Zhou, and Le~Hou.
\newblock Instruction-following evaluation for large language models.
\newblock \emph{arXiv preprint arXiv:2311.07911}, 2023.
\newblock URL \url{https://arxiv.org/abs/2311.07911}.

\end{thebibliography}
\bibliographystyle{iclr2027_conference}

\setcounter{topnumber}{3}

\FloatBarrier
\appendix

Appendix~\ref{app:geometry} provides additional derivations and implementation
details, and Appendix~\ref{app:evaluation} specifies the data and evaluation
protocols. Appendices~\ref{app:ablations} and~\ref{app:calibration_robustness}
analyze scoring components, expert selection, and calibration sensitivity.
Appendix~\ref{app:axis_results} reports detailed fidelity and generation
results, and Appendix~\ref{app:related_work} discusses related compression methods.

\noindent\textbf{Scope of the reported estimates.}
All reported performance intervals condition on fixed checkpoints and exclude
variability over calibration draws, pruning runs, and independent training runs.
Matching a scoring rule across two studies does not imply that they
share calibration inputs or retained expert sets, so fidelity and benchmark
results are compared within a study rather than across studies.

\section{Additional Derivations and Implementation}
\label{app:geometry}

We derive the geometry of consensus residuals, relate them to exact
single-deletion damage, and examine interactions under joint removal.
Appendix~\ref{app:implementation} describes score accumulation and physical
pruning.

\subsection{Residual decomposition}
The consensus residual depends on the angle between an expert output and the
mixture, which is the information that gates and output norms omit. Expanding
it gives
\begin{align}
 \|f_i-\con\|_2^2
 &=\|f_i\|_2^2-2\langle f_i,\con\rangle+\|\con\|_2^2,\\
 \langle f_i,\con\rangle
 &=\|f_i\|_2\,\|\con\|_2\cos\phi_i.
\end{align}
Here $\phi_i$ is the angle between nonzero $f_i$ and $\con$.
If either vector is zero, their inner product is zero without defining an angle.
At fixed norms, a positive inner product reduces the residual and a negative
one increases it. Even without collinear or near-duplicate outputs, an aligned
expert may therefore be replaceable, while one balancing the others may be
costly to lose. The mixture still includes expert $i$, so a further
self-inclusion correction is needed to obtain distance to the survivors.
Because mixture norms, gates, and routed sets vary across calibration tokens,
this geometric identity constrains each token's damage without determining
which retained set performs better.

\paragraph{Constructed example in Figure~\ref{fig:razor-intuition}.}
Let $u,v$ be orthonormal and let $E_A,E_B,E_C$ have outputs $8u$,
$3u+4v$, and $10u-4v$, with equal initial weights. Their consensus is $c=7u$,
the routed-expert mixture at one layer, not the model's final output.
Residuals are $\|f_i-c\|_2$. Without refill, deleting $E_A,E_B,E_C$ and
renormalizing the survivors gives damage $0.5$, $\sqrt{8}\approx2.83$, and
$2.5$, respectively. With refill, the same initially unselected
$E_D=10u-5v$ enters after every deletion, with survivor and replacement
masses $1,1,0.8$. For deletion of $E_C$, the bars show $-(f_C-c)$,
$0.8(f_D-c)$, and their unnormalized sum. The $v$ terms cancel, giving
$\Delta c=-0.6u/2.8$ and damage $3/14\approx0.21$, versus $1.51$ for deleting
$E_A$ and $3.66$ for deleting $E_B$. Thus magnitude, fixed-support damage,
and refill damage favor $E_B,E_A,E_C$, respectively.

In the panels, curves span output coordinates, with mixture curves using
$2.5\times$ the expert vertical scale. Dotted curves isolate $u$ components,
and shading marks coordinate-wise differences. The construction illustrates
the role of output geometry and refill rather than a measured
checkpoint-level gain.

\subsection{Survivor-distance identity}
\label{app:survivor}
For $w_i<1$, starting from the fixed-support post-removal output gives
\begin{align}
 \con^{-i}&=\frac{\con-w_if_i}{1-w_i},\\
 f_i-\con^{-i}
 &=f_i-\frac{\con-w_if_i}{1-w_i}
 =\frac{f_i-\con}{1-w_i}.
\end{align}
Taking norms gives the survivor-distance identity used in the main text,
\begin{equation}
 \|f_i-\con\|_2=(1-w_i)\,\|f_i-\con^{-i}\|_2 ,
\label{eq:survivor}
\end{equation}
and combining \eqref{eq:survivor} with Proposition~\ref{prop:loo} yields
\begin{equation}
 \|\con-\con^{-i}\|_2=w_i\|f_i-\con^{-i}\|_2,
\label{eq:loo_alt}
\end{equation}
which expresses damage as routing mass times distance to the survivors.
The distance describes local functional replaceability at the same input,
while the weight converts that distance into mixture change. Neither factor
alone measures redundancy in the final retained model.

The factor $1/(1-w_i)$ is what converts the observed consensus residual into
this survivor distance. Omitting it yields $(1-w_i)\delta_i^{\mathrm{loo}}$ at
each token, which is a different proxy rather than the same deletion damage
under a rescaling, because $w_i$ varies across tokens. The identity also makes
the local comparison translation invariant. With inputs and gates fixed, adding
a common vector $g$ to all routed outputs shifts both mixtures by $g$ and
leaves deletion damage unchanged, although the output norms that magnitude
scores rely on do change. This invariance holds for the local damage, not for
model predictions.

For $\delta_i^{\mathrm{loo}}>0$, expanding the refill numerator gives
\begin{equation}
 \frac{\delta_i}{\delta_i^{\mathrm{loo}}}
 =\frac{1-w_i}{1-w_i+w_r}\,
  \sqrt{1-2\rho_i\cos\theta_i+\rho_i^{2}},
 \qquad
 \rho_i=\frac{w_r\|\resid_r\|_2}{w_i\|\resid_i\|_2},
\label{eq:refill_expand}
\end{equation}
where $\theta_i$ is the angle between nonzero residuals. When the promoted
weighted residual vanishes, the square-root factor is defined to be one.
The denominator factor is at most one and strictly
smaller when $w_r>0$, whereas the numerator factor exceeds one exactly when
$\rho_i>0$ and $\rho_i>2\cos\theta_i$ both hold. The condition on $\rho_i$
cannot be dropped, since $\rho_i=0$ makes the factor exactly one.
Orthogonal residuals thus already raise the numerator factor for any
$\rho_i>0$, yet routing mass, residual magnitudes, and alignment jointly decide
whether refill increases the damage ratio itself. Appendix~\ref{app:refill}
reports score ratios and partial geometric summaries, which measure the
observed effect without decomposing it into these three factors.

\subsection{Exact perturbation for a selected set}
\label{app:set}
Joint removal couples the experts it deletes, so the exact single-deletion
identities do not extend additively to a selected set.
For a removed routed subset $\remset$ with $W_{\remset}<1$, the survivor mixture
is $\con^{-\remset}=(\con-\sum_{j\in\remset}w_j f_j)/(1-W_{\remset})$.
Subtracting it from $\con$ gives Eq.~\plaineqref{eq:setshift}. All residuals
$\resid_j=f_j-\con$ share the original consensus, so the squared numerator
expands as
\begin{equation}
 \Big\|\sum_{j\in\remset}w_j\resid_j\Big\|_2^2
 =\underbrace{\sum_{j\in\remset}w_j^2\|\resid_j\|_2^2}_{\text{diagonal}}
 +\underbrace{\sum_{j\ne l\in\remset}w_jw_l\langle\resid_j,\resid_l\rangle}_{
   \text{interference}} .
\label{eq:gram}
\end{equation}
Interference cross terms and the set-dependent denominator block an additive
decomposition into independent removal costs. The diagonal term also sums
experts within a token, whereas RMS aggregates tokens within an expert, so the
expansion neither derives RMS nor gives an exact joint objective obtained by
summing expert scores.

Under refill, let $\mathcal R$ contain the $|\remset|$ highest-ranked
unselected experts that remain in the retained pool. The numerator becomes
$\|\sum_{j\in\remset}w_j\resid_j-\sum_{r\in\mathcal R}w_r\resid_r\|_2$ and the
denominator becomes $1-W_{\remset}+W_{\mathcal R}$, where
$W_{\mathcal R}=\sum_{r\in\mathcal R}w_r$. Both now depend on the whole
set, and the expansion acquires cross terms within $\mathcal R$ and between
$\remset$ and $\mathcal R$ on top of those in Eq.~\plaineqref{eq:gram}.
Summing scalar single-expert scores captures none of these interactions, which
is why our selection remains a ranking heuristic under a fixed budget.

\subsection{Implementation and complexity}
\label{app:implementation}

\begin{algorithm}[htbp]
\caption{Chunked scoring and routing-aware physical pruning for the RCS family}
\label{alg:razor}
\begin{algorithmic}[1]
\REQUIRE Model, calibration data $\mathcal D$ with valid-position masks, budgets $B_\ell$, chunk size $C$, criterion $a\in\{\mathrm{RCS},\mathrm{RCS\mbox{-}LOO},\mathrm{RCS\mbox{-}Refill}\}$, denominator clamp $\epsilon=10^{-6}$
\STATE Initialize per-shard $q_{\ell i}\leftarrow0$, $n_{\ell i}\leftarrow0$ for learned-router layers
\FOR{learned-router layer input from an unpruned forward over $\mathcal D$}
  \STATE Obtain routed sets $\routeset_t$ and normalized selected weights $w_t$
  \IF{$a=\mathrm{RCS\mbox{-}Refill}$}
    \STATE Obtain the rank-$(k{+}1)$ expert $r_t$ and its pseudo-weight $w_r(t)$
  \ENDIF
  \FOR{token chunk $X$ of size at most $C$}
    \STATE Reconstruct routed outputs $f_i(t)$ and form $\con_t=\sum_{i\in\routeset_t}w_i(t)f_i(t)$ and $\resid_i(t)=f_i(t)-\con_t$
    \IF{$a=\mathrm{RCS}$}
      \STATE $d_i(t)\leftarrow w_i(t)\|\resid_i(t)\|_2$
    \ELSIF{$a=\mathrm{RCS\mbox{-}LOO}$}
      \STATE $d_i(t)\leftarrow w_i(t)\|\resid_i(t)\|_2/\max(1-w_i(t),\epsilon)$
    \ELSE
      \STATE Reconstruct $f_{r_t}(t)$, set $\resid_{r_t}(t)=f_{r_t}(t)-\con_t$
      \STATE $d_i(t)\leftarrow\|w_i(t)\resid_i(t)-w_r(t)\resid_{r_t}(t)\|_2/\max(1-w_i(t)+w_r(t),\epsilon)$
    \ENDIF
    \STATE For valid routed pairs, accumulate $q_{\ell i}\mathrel{+}=d_i(t)^2$, $n_{\ell i}\mathrel{+}=1$; discard chunk-local outputs
  \ENDFOR
\ENDFOR
\STATE Sum $q,n$ across disjoint shards; set $s_{\ell i}=\sqrt{q_{\ell i}/\max(n_{\ell i},1)}$
\STATE Retain the $B_\ell$ highest-scoring experts in each learned-router layer
\STATE For fixed-hash layers, select by table counts and remap to distinct surviving routes
\RETURN Model with consistently sliced expert tensors, router rows, and updated route indices
\end{algorithmic}
\end{algorithm}

\subsubsection{Online statistics and bounded working buffers}
Algorithm~\ref{alg:razor} changes only the token-level quantity across the three
criteria, which share chunking, conditional RMS aggregation, ranking, and
physical pruning. The \rcs and \rcsloo branches reconstruct only the routed expert
outputs, whereas \rcsrefill additionally reconstructs the rank-$(k{+}1)$
promoted expert. Selecting \rcsrefill with conditional RMS gives \method.
For each learned-router layer, the scoring pass obtains the routing context for the
batch, partitions its token positions, and locally reconstructs the required
expert outputs for each chunk. These outputs form the consensus and residuals
before advancing to the next chunk. The cost is therefore
additional local computation alongside one original forward pass, rather than a
separate end-to-end model evaluation for every candidate deletion.

Chunking bounds the working buffers.
With $T$ packed token positions, top-$k$ routing, and expert-output width $d$,
storing the routed outputs together with the promoted one requires $O(T(k+1)d)$
space. Restricting the working set to $C$ positions replaces this with
$O(C(k+1)d)$, plus $O(Cd)$ for the
consensus. Only $q_i=\sum_t d_i(t)^2$, where $d_i(t)$ is whichever of the three
token quantities the criterion selects, and the valid routed-token count $n_i$
persist for RMS scoring, requiring $O(E)$ state per layer. Disjoint calibration
shards accumulate these quantities independently. Summing them before taking
$\sqrt{q_i/\max(n_i,1)}$ gives the pooled conditional RMS, unlike averaging
shard-level RMS scores. Unobserved experts receive zero score, and we remove
the lowest $E-B$ scores without a secondary criterion for exact ties. These
bounds cover only the storage controlled by chunking, excluding model weights
and ordinary forward activations.

One implementation choice affects the computed numbers. Each denominator is
clamped from below by $10^{-6}$, giving $\max(1-w_i+w_r,10^{-6})$ for the
refill score and $\max(1-w_i,10^{-6})$ for its fixed-support special case.
Because the identities describe the unclamped quantity, an active clamp makes
the computed score differ from exact local damage.

\subsubsection{Layer-wise execution}
Output chunking bounds activation buffers but leaves the memory occupied by
model weights untouched. For models that exceed the available device capacity,
a layer-wise schedule loads one decoder layer, propagates the current
calibration subset through it, collects its statistics, and releases its weights
before loading the next layer. This is what makes the largest backbones here
scorable on a fixed device budget. The schedule preserves the score definition,
although floating-point differences can affect ties or nearly tied rankings.

\subsubsection{Architecture-specific routing}
The derivation requires normalized selected weights, not a particular router
parameterization. We preserve each architecture's native selection rule and
score function, using softmax probabilities for Qwen3.6-35B-A3B, sigmoid scores
for GLM-4.7-Flash, DeepSeek-V4-Flash-0731's native score transformation, and
sigmoid scores for Hy3. Where a correction bias determines selection, it is not
added to the mixture weights. Selected unmodified scores are renormalized
before evaluating the damage formula. The implementation retains the
routed-output scale $\lambda$, although omitting this common positive factor
from the displayed score leaves within-layer rankings unchanged.

DeepSeek-V4-Flash-0731 includes three fixed-hash layers whose expert identities
come from a frozen token-to-expert table, so they fall outside learned-router
saliency and need a separate rule. To retain $B$ experts, we select the most
frequent experts by counting occurrences over the entire table, without
weighting by calibration-token occurrence. Each table row then preserves its
surviving expert identities, and removed entries are replaced, in slot order,
by the cosine-nearest surviving router-weight vector not already used in that
row. With $B\geq k$, the result has $k$ distinct routes, and reindexing follows
the same survivor order as parameter slicing. The procedure repairs routing
rather than merging expert weights or solving an assignment problem. The
routing-stratified study shares this hash-layer selection across criteria, so
its comparisons isolate learned-router saliency
(Appendix~\ref{app:benchmark_details}).

\subsubsection{Physical pruning}
We slice expert-indexed tensors and corresponding router rows with the same
keep indices, leaving dense blocks and shared experts unchanged. Under a common
retained width, the result is a smaller MoE checkpoint requiring neither runtime
masks nor custom sparse kernels. For Qwen3.6-35B-A3B, reducing 256 to 128 routed experts per
layer yields an approximately 19B-parameter model.

Removing experts reduces stored parameters, but preserving top-$k$ does not
proportionally reduce active expert computation. We do not measure serving
latency, energy use, or end-to-end acceleration.

\section{Data and Evaluation Protocol}
\label{app:evaluation}

We use separate protocols for calibration, predictive fidelity, downstream
performance, and response behavior. The
calibration pool \calib scores experts (Appendix~\ref{app:razorcal}), the
eight-axis reference pool measures predictive fidelity
(Appendix~\ref{app:eval_data}), the nine-task suite measures downstream
performance (Appendix~\ref{app:benchmark_details}), and a separate
repeated-generation collection measures response behavior
(Appendix~\ref{app:response_diversity}). Only the first influences which
experts are retained. The fidelity pool alone splits its eight axes into four
that \calib covers and four that it does not, so calibration-covered and
held-out fidelity can be reported separately throughout
Appendix~\ref{app:axis_results}. That ID/OOD distinction applies to the
fidelity axes only. The benchmark suite and the repeated-generation collection
are not partitioned this way, since they draw on public benchmarks rather than
on axes defined relative to \calib. The calibration pool and the fidelity pool
draw mainly on open Nemotron post-training datasets, whose development and
reuse across the model family are described in the Nemotron 3 Super technical
report \citep{nemotron3super2026}, together with a few non-Nemotron
instruction sources named below. The nine benchmark tasks, and the generation
analyses using those tasks, come from their own public releases
(Appendix~\ref{app:benchmark_details}). The specific dataset versions are
listed below.

\subsection{Calibration data}
\label{app:razorcal}

\calib contains 2,048 chat-format examples across seven domains
(Table~\ref{tab:calib}), whose embedding spread and per-domain length coverage
are shown in Figure~\ref{fig:calib}. It is a calibration mixture assembled from
existing instruction datasets, not a new independently collected corpus. Coding receives
twice the allocation of each other domain to cover function completion,
competitive programming, class-based solutions, software engineering, and tool
use. Source-specific sampling applies quotas, length filters, and deduplication.
Preprocessing removes dataset-specific harness prefixes where appropriate.
The 32,000-character cap counts message content and serialized tool calls,
excluding separately stored reasoning fields and tool schemas. Each
checkpoint's chat template renders the preserved conversation roles before
tokenization.

\begin{table}[!t]
\centering
\small
\setlength{\tabcolsep}{2.8pt}
\renewcommand{\arraystretch}{1.16}
\caption{\calib domain allocations and calibration sources.
Counts denote source examples, not tokens.}
\label{tab:calib}
\begin{tabularx}{\columnwidth}{@{}>{\raggedright\arraybackslash}p{3.0cm}>{\centering\arraybackslash}p{1.25cm}>{\raggedright\arraybackslash}X@{}}
\toprule
\rowcolor{tblHeader}
\thead{Domain} & \thead{Samples} & \thead{Source} \\
\midrule
Coding & \textbf{512} & Dolci-Instruct-SFT and Nemotron competitive programming, SWE, and OpenCode \\
Mathematics & 256 & Nemotron-Math-v2, high-reasoning AoPS subset \\
Science & 256 & Nemotron-Science-v1 \\
Chinese-STEM & 256 & Nemotron-SFT-Multilingual-v1 \\
Instruction following & 256 & Dolci constraint-following mixture \\
Tool calling & 256 & Nemotron-Agentic-v1 \\
World knowledge & 256 & OpenHermes-2.5 \\
\bottomrule
\end{tabularx}
\end{table}

\paragraph{Sources and composition.}
Coding includes 205 Python-algorithm examples from Dolci-Instruct-SFT, the
instruction-tuning mixture accompanying Olmo~3 \citep{olmo3_2025}. It adds
141 Python and 115 C++ examples from Nemotron-SFT-Competitive-Programming-v2,
40 agentless software-engineering examples from Nemotron-SFT-SWE-v2, and
11 general tool-use examples from Nemotron-SFT-OpenCode-v1.
Nemotron-SFT-SWE-v2 covers code localization, repair, and test generation,
whereas Nemotron-SFT-OpenCode-v1 covers agent interactions.

Mathematics contains 256 problem--solution examples from the high-reasoning
AoPS subset of Nemotron-Math-v2, including 96 selected for competition-style
problems. This source pairs problems from Art of Problem Solving with
model-generated solutions. High denotes the teacher's reasoning mode,
not problem difficulty.
Science combines 176 chemistry reasoning questions and 80 multiple-choice
questions from Nemotron-Science-v1. Chinese-STEM
uses translated mathematics, code, and STEM examples from
Nemotron-SFT-Multilingual-v1 (86/85/85), which translates prompts and final
answers rather than all reasoning traces.
Instruction following draws 256 examples from a Dolci constraint-following pool
\citep{olmo3_2025}.
Tool calling uses 256 multi-turn agent--tool conversations from
Nemotron-Agentic-v1. The world-knowledge allocation
contains 256 OpenHermes-2.5 examples \citep{openhermes25}, drawn from the
cleaned, non-thinking subset in SmolTalk2 \citep{smoltalk2}. OpenHermes is a
general-purpose synthetic instruction corpus rather than a dedicated knowledge
test, and all quotas balance source examples rather than token counts.

\begin{figure}[tbp]
\centering
\includegraphics[width=\textwidth]{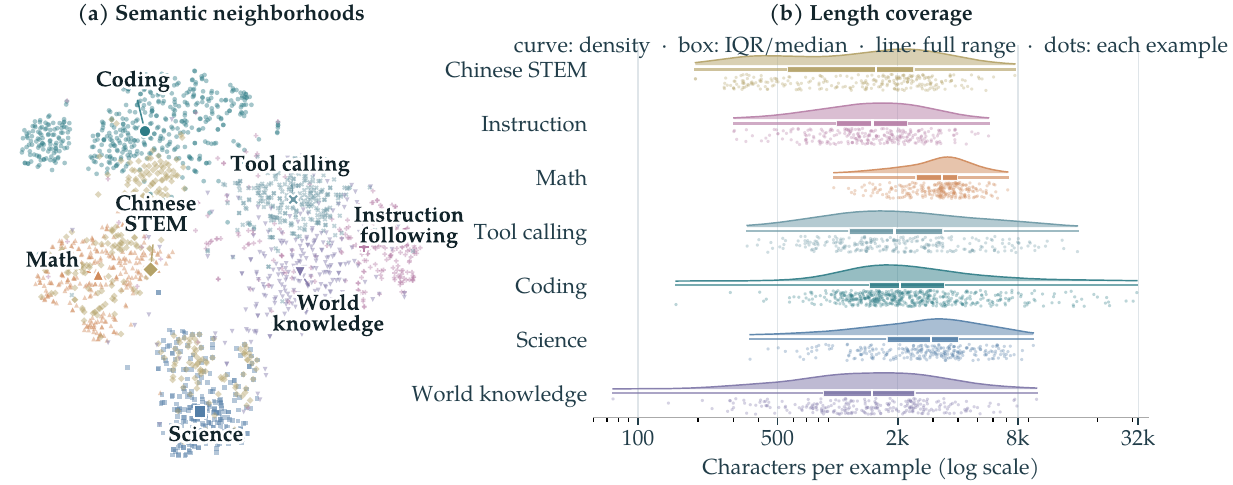}
\caption{\textbf{Overview of the \calib calibration pool.} (a) Cosine
t-SNE projection of Qwen3-Embedding-8B representations of the rendered
conversations, with color and marker shape encoding domain and labels marking
domain medians. (b) Per-domain character-length coverage over the full observed
range, counting message content only. The curve is a kernel-density estimate in log-character space, the bar
is the interquartile range with the median marked, the thin line spans minimum
to maximum, and each dot is one of the 2,048 examples. Domain counts are given in
Table~\ref{tab:calib}.}
\label{fig:calib}
\end{figure}

\paragraph{Scored positions and coverage.}
For the diagnostic collection, calibration examples are rendered with thinking
disabled, separately supplied reasoning fields omitted, and packed into
4,096-token rows. Calibration statistics include all
nonpadding conversation positions, including prompts and role headers, unlike
the assistant-only fidelity targets in Appendix~\ref{app:eval_data}.
Packing fills each row in order and stops at the row boundary, so an example
that crosses it is truncated and its tail is discarded rather than continued
into the next row. Long examples therefore contribute only their leading
tokens, and the collection stops once the token budget is filled, which is why
the counts below cover fewer than the 2,048 pool examples.
Qwen3.6-35B-A3B scores 785,703 valid tokens from 921 source examples, while
GLM-4.7-Flash scores 785,632 tokens from 992. We call these the
\emph{diagnostic calibration collections} below. They provide the
score-distribution and Gram-matrix statistics, as well as the saliency scores
used in the single-expert ablations in Appendix~\ref{app:ablations}. The
ablation damage is measured on separate evaluation inputs.
Because the two tokenizers give
different token counts and source coverage at the same row capacity, these
statistics are compared within a backbone rather than across backbones.

\subsection{Fidelity evaluation data}
\label{app:eval_data}

Reverse KL and reference-token perplexity use a pool of 8,192 conversations,
with 1,024 examples on each of eight task axes. These compare predictions on
shared reference responses rather than scoring benchmark answers. Four axes
cover calibration task types.

\noindent\textbf{Mathematics.} Mathematical problem-solving responses from the
\texttt{medium} split of Nemotron-Math-v2.
Medium denotes the teacher's reasoning mode, not problem difficulty.

\noindent\textbf{Code.} Python solutions from the \texttt{competitive\_coding\_python}
split of Nemotron-SFT-Competitive-Programming-v2.

\noindent\textbf{Instruction following.} Constraint-following conversations from the
\texttt{chat\_if} split of Nemotron-Instruction-Following-Chat-v1,
selected by its instruction-following capability label.

\noindent\textbf{Tools.} Agent--tool interaction trajectories from the
\texttt{tool\_calling} split of Nemotron-Agentic-v1.

The other four axes hold out task types.

\noindent\textbf{Chat.} Examples carrying the chat capability label in the same
Chat-v1 \texttt{chat\_if} split.

\noindent\textbf{Creative writing.} Examples selected by prompt keywords from the
\texttt{reasoning\_off} split of Nemotron-SFT-Instruction-Following-Chat-v2.
Creative writing is our selection, not an official split.

\noindent\textbf{Safety.} Reference safety-aligned responses from the \texttt{train}
split of Nemotron-SFT-Safety-v1.

\noindent\textbf{SQL.} Examples from the \texttt{text\_to\_sql} split of the
competitive-programming source, separate from its Python subset,
pairing natural-language tasks and database schemas with SQL.
ID/OOD denotes calibration task-type coverage, not disjoint source datasets or
a formal distribution shift, and both groups contain synthetic data.

Each axis retains 1,024 examples that fit the row capacity and carry enough
assistant content to score. Conversations end on an assistant response, and
plain-text assistant turns may be truncated while structured tool calls remain
intact.

Evaluation packs examples into 4,096-token rows and scores assistant positions,
excluding role headers and turn terminators. Chat rendering disables thinking and
omits separately supplied reasoning fields, so fidelity is measured on fixed
reference prefixes rather than on generated reasoning trajectories. The routing
study uses six batches per axis (Appendix~\ref{app:mechanism}), and its
evaluated batches form a subset of this pool. The fidelity pool has no
exact-message overlap with calibration, although near-duplicates and benchmark
contamination have not been ruled out.

\subsection{Evaluation configuration}
\label{app:config}

Table~\ref{tab:config} summarizes the four main backbones.
Qwen3.6-35B-A3B uses the hybrid attention--MoE architecture described for
Qwen3.5 \citep{qwen35_2026}. The GLM and DeepSeek model families are described
in their technical reports \citep{glm45_2025,deepseekv4_2026}. The configurations
below refer to the evaluated checkpoints. Hy3 is the public release
\citep{hy3_2026}. The architecture and routing settings describe the language-model decoder.
Layer counts exclude auxiliary prediction modules. GLM-4.7-Flash and Hy3
each begin with one dense feed-forward layer. All 43 DeepSeek-V4-Flash-0731 layers are
MoE layers, of which three use fixed-hash routing and the other 40 use
learned-router selection. Its score transformation is $\sqrt{\operatorname{softplus}(z)}$.
The nominal parameter counts exclude auxiliary MTP and DSpark modules.

Both removal ratios retain the original top-$k$ and shared experts. Calibration
uses the 2,048-example source pool, while the scored coverage in
Appendix~\ref{app:razorcal} refers specifically to the Qwen3.6-35B-A3B and GLM-4.7-Flash diagnostic
collection. For downstream benchmarks, we use temperature $0$, except for
SWE-bench Verified, where we use $0.7$. AIME'26 reports avg@8, and the other eight benchmarks report avg@3.
For each benchmark, sampling parameters are shared across all four backbones.
We set the maximum input length to 224K tokens for GLM-4.7-Flash and
Qwen3.6-35B-A3B, and 128K tokens for DeepSeek-V4-Flash-0731 and Hy3.
All four use a 32K-token output limit. These are evaluation limits, not the
models' maximum context capacities. The Qwen3.6-35B-A3B and GLM-4.7-Flash
benchmark settings enable thinking. Fidelity and response diversity use their
own protocols, with generation analysis using temperature $0.7$.
Cross-study comparability is discussed in Appendix~\ref{app:benchmark_details}.

Comparisons among the residual criteria match calibration inputs and
layerwise budgets within each study. Benchmark and diagnostic protocols use
separate checkpoints and calibration draws, as specified in
Appendix~\ref{app:benchmark_details}.

\begin{table}[!t]
\centering
\small
\setlength{\tabcolsep}{4pt}
\renewcommand{\arraystretch}{1.10}
\caption{Architectures and pruning settings of the four main backbones.
Parameter counts are nominal backbone sizes, and expert counts are per MoE layer.}
\label{tab:config}
\begin{tabularx}{\columnwidth}{@{}>{\raggedright\arraybackslash}p{4.9cm}*{4}{>{\centering\arraybackslash}X}@{}}
\toprule
& \thead{Qwen3.6-} & \thead{GLM-4.7-} & \thead{DeepSeek-V4-} &
\multirow{2}{*}{\thead{Hy3}} \\
& \thead{35B-A3B} & \thead{Flash} & \thead{Flash-0731} & \\
\midrule
\multicolumn{5}{@{}l}{\textit{Architecture}} \\
Total parameters & 35B & 30B & 284B & 295B \\
Active parameters per token & $\sim$3B & $\sim$3B & $\sim$13B & $\sim$21B \\
Decoder layers & 40 & 47 & 43 & 80 \\
MoE layers & 40 & 46 & 43 & 79 \\
Routed experts & 256 & 64 & 256 & 192 \\
Expert intermediate width & 512 & 1,536 & 2,048 & 1,536 \\
Shared experts & 1 & 1 & 1 & 1 \\
\multicolumn{5}{@{}l}{\textit{Routing}} \\
Top-$k$ & 8 & 4 & 6 & 8 \\
Router scores & softmax & sigmoid & sqrtsoftplus & sigmoid \\
Routed scale $\lambda$ & 1.0 & 1.8 & 1.5 & 2.826 \\
Fixed-hash layers & 0 & 0 & 3 & 0 \\
\multicolumn{5}{@{}l}{\textit{Pruning}} \\
Pruned fractions & \multicolumn{4}{c}{25\% and 50\%} \\
Retained routed experts (25\%) & 192 & 48 & 192 & 144 \\
Retained routed experts (50\%) & 128 & 32 & 128 & 96 \\
\bottomrule
\end{tabularx}
\end{table}

\subsection{Metric definitions}
\label{app:metrics}

Let $p_t$ and $q_t$ be the unpruned and pruned next-token distributions on the
same reference prefix. Reverse KL is
\begin{equation}
 D_{\mathrm{KL}}(q_t\|p_t)=\sum_v q_t(v)\log\frac{q_t(v)}{p_t(v)}.
\end{equation}
The main fidelity sweep weights assistant tokens within each evaluation axis
and balances the eight axes equally.

For reference token $y_t$, define
$\ell_p(t)=-\log p_t(y_t)$ and $\ell_q(t)=-\log q_t(y_t)$. Averaging these
losses under the specified evaluation weights gives reference-token NLL and
\begin{equation}
\begin{aligned}
 \mathrm{PPL}_{p}&=\exp(\mathrm{NLL}_{p}),\qquad
 \mathrm{PPL}_{q}=\exp(\mathrm{NLL}_{q}),\\
 \Delta\mathrm{NLL}&=\mathrm{NLL}_{q}-\mathrm{NLL}_{p},\qquad
 \frac{\mathrm{PPL}_{q}}{\mathrm{PPL}_{p}}-1
 =\exp(\Delta\mathrm{NLL})-1.
\end{aligned}
\end{equation}
The final quantity is excess perplexity. These likelihood metrics use observed
reference tokens, not samples from $p_t$ or $q_t$. Negative
$\Delta\mathrm{NLL}$ indicates higher reference-token likelihood under the
pruned model, not necessarily better downstream capability.

For normalized selected routing weights and $k>1$, routing entropy is
\begin{equation}
 H_r(x)=-\frac{\sum_{j\in\routeset(x)}w_j(x)\log w_j(x)}{\log k}.
\end{equation}
Smaller $H_r$ means more concentrated routing. Predictive entropy instead
summarizes the vocabulary distribution,
\begin{equation}
 H(p_t)=-\sum_v p_t(v)\log p_t(v),\qquad
 \Delta H_t=H(q_t)-H(p_t).
\end{equation}
Absolute entropy drift averages predictive entropies within routing cells
and across valid layers, takes the absolute pruned--original difference in
each batch--decile cell, and then averages these differences. Taking the
absolute value after cell aggregation distinguishes this measure from the
mean per-token $|\Delta H_t|$. Predictive entropy also differs from
reference-token NLL, so its exponential is not the PPL reported here.
Appendix~\ref{app:axis_results} specifies the sampling units and uncertainty
estimates for each output analysis.

\subsection{Downstream benchmark protocol and scope}
\label{app:benchmark_details}

The benchmark evaluates four backbones at two removal ratios each.
Table~\ref{tab:main} covers GLM-4.7-Flash and Qwen3.6-35B-A3B, and
Table~\ref{tab:additional} covers DeepSeek-V4-Flash-0731 and Hy3. The full method
comparison includes Frequency, EAN, REAP, and all three residual scores on
GLM-4.7-Flash and Qwen3.6-35B-A3B. \rcs, \rcsloo, and \method use
conditional RMS. DeepSeek-V4-Flash-0731 and Hy3 compare the three residual
scores against their unpruned references only, so they enter the
\rcsloo-relative and unpruned-relative benchmark comparisons rather than the
REAP-relative ranges and win counts.

\paragraph{Baseline scores.}
For $n_i=|\mathcal D_i|>0$ routed calibration tokens, the baseline scores
are, up to common positive layer scales that do not affect ranking,
\begin{equation}
 s_i^{\mathrm{Frequency}}=n_i,\qquad
 s_i^{\mathrm{EAN}}=\frac{1}{n_i}\sum_{x\in\mathcal D_i}\|f_i(x)\|_2,\qquad
 s_i^{\mathrm{REAP}}=\frac{1}{n_i}\sum_{x\in\mathcal D_i}w_i(x)\|f_i(x)\|_2.
\end{equation}
Frequency counts hard routing selections, as in frequency-based pruning
\citep{muzio2024seer,fantasticexperts2025}. EAN is our conditional-mean
adaptation of activation-norm pruning \citep{fantasticexperts2025,reap2026}.
We divide by each expert's routed-token count, whereas the accumulated-norm
EAN in REAP's baseline definition does not. This expert-specific normalization
is not a common rescaling and can change the ranking. REAP follows the
routed-token conditional mean in its original definition \citep{reap2026}.
The conditional-mean EAN score is algebraically identical to MAN in
\citet{unified2026}. We retain the EAN label in the tables. All three scores
are computed without recovery training, and zero-observation experts receive
zero score.

\paragraph{Mathematics and instruction following.}
The nine tasks cover six capability categories. Each is graded on its final
answer or artifact rather than on the reasoning that produced it, so a task
that requires many dependent steps is still scored by whether the endpoint is
correct. AIME'26 uses the 2026
American Invitational Mathematics Examination \citep{maaAIME} to assess
competition mathematics. IFEval \citep{ifeval2023} checks compliance with
programmatically verifiable instructions. IFBench \citep{ifbench2025}
extends this evaluation to additional verifiable constraints, testing
instruction-following generalization beyond IFEval.

\paragraph{Knowledge, tools, and long context.}
SuperGPQA \citep{supergpqa2025} evaluates graduate-level knowledge across
285 disciplines. The Berkeley Function Calling Leaderboard, version 4
(BFCL v4, \citealp{bfcl2025,bfclv4}), evaluates function calling and agentic
tool use. LongBench v2 \citep{longbench2024} tests comprehension and reasoning
over long contexts.

\paragraph{Coding tasks and versions.}
The three coding tasks cover function-level generation, competition programming,
and repository-level issue resolution. HumanEval+ \citep{evalplus2023} uses
EvalPlus's expanded tests to assess the functional correctness of generated
Python functions. LiveCodeBench \citep{livecodebench2024} draws from time-indexed
programming competitions. We evaluate LiveCodeBench 2026, abbreviated
LCB'26 in the tables, using the latest version in our evaluation setup.
This evaluation is distinct from the
175-question subset used for the generation diagnostics
(Appendix~\ref{app:distinct_n_protocol}). SWE-bench Verified
\citep{swebench2024}, abbreviated SWE, is the 500-issue
subset of SWE-bench screened by software developers for well-specified issues
and appropriately scoped tests.\footnote{\url{https://huggingface.co/datasets/princeton-nlp/SWE-bench_Verified}}
Scores report the percentage of resolved
issues averaged over three evaluations.

\paragraph{Score aggregation.}
Instruction-following and coding averages give equal weight to their two and
three tasks, respectively, while Overall weights all nine equally. Averages and
paired macro gaps are computed from task scores rather than rounded displayed means.
The task scores yield a 33/0/3 win/tie/loss count for
\rcsloo against REAP and 36/0/0 for \method across GLM-4.7-Flash and
Qwen3.6-35B-A3B at two budgets. Two properties qualify these counts. Both sets
of 36 comparisons reuse one task suite rather than replicate it, and neither
extends to the two backbones without a REAP benchmark run. The benchmark
comparisons report no task-level uncertainty. AIME'26 reports avg@8 and the other eight
benchmarks report avg@3. The three residual criteria and REAP use matched \calib
inputs, whereas REAP$^\star$ is an external GLM-4.7-Flash checkpoint calibrated
with 24,576 samples.

\paragraph{Additional task contrasts.}
On Hy3, \method scores 63.4 and 59.5 at 25\% and 50\% removal, respectively,
versus 63.7 unpruned. On DeepSeek-V4-Flash-0731, the corresponding scores are
61.0 and 55.6 versus 59.2 unpruned (Table~\ref{tab:additional}). The macro
ordering is neither uniform across tasks nor monotonic across scoring
refinements. At 25\% on Qwen3.6-35B-A3B, \rcs slightly exceeds
\rcsloo in macro average (61.0 versus 60.8). At 50\% on GLM-4.7-Flash, EAN exceeds
\method on LCB'26 (33.9 versus 29.8).

For \rcsloo versus REAP, both backbones carrying that comparison show larger
macro gaps at 50\% than at 25\% (0.80 to 3.01 on GLM-4.7-Flash and 1.02 to
1.96 on Qwen3.6-35B-A3B), leaving open whether the widening is general.
On DeepSeek-V4-Flash-0731 at 25\%, \rcsloo gains on SWE
(12.4 to 18.8) but loses on HE+ (91.5 to 89.6) relative to \rcs, the
nearest measured criterion on that backbone.
At 50\%, SWE suffers the largest proportional loss from the original for
\rcsloo on both Hy3 and DeepSeek-V4-Flash-0731. Thus even favorable macro comparisons
coexist with capability-specific trade-offs.

\paragraph{Comparability.}
Benchmark and diagnostic results are compared within a study rather than
across studies, since the two use separate checkpoints and calibration
draws even when they share a scoring criterion. On DeepSeek-V4-Flash-0731,
where the routing study matches fixed-hash selections across criteria, its
benchmark gap therefore reflects the full pruned model rather than
learned-router saliency alone. The LiveCodeBench comparison uses the same
release across methods. Comparisons with external results require matching
the task release and evaluation protocol. Input and output limits and
shared sampling settings appear in Appendix~\ref{app:config}.

\section{Scoring Proxies and Expert Selection}
\label{app:ablations}

A scoring rule makes two choices that the main text reports jointly, namely
which quantity to measure at a routed token and how to reduce that quantity
across tokens. This appendix separates them. Appendices~\ref{app:rms_detail}
through~\ref{app:factorial} compare these choices and their interactions.
Conditional RMS yields lower KL than the conditional mean for most tested
token quantities, while the benefit of the residual reference depends on
aggregation and backbone.
Appendix~\ref{app:selection} then asks how the scoring choices change the
retained set itself, through three probes that measure selection consequences
rather than score values.

The analyses below use distinct evaluation collections. Comparisons are
interpreted within each collection, except for the explicitly identified
mean-refill contrasts in Appendix~\ref{app:factorial}, which are not fully
matched.

\subsection{Scoring criteria and budget comparisons}

Table~\ref{tab:instances} lists the three consensus-residual scoring rules and
the magnitude baseline \reaprms, all with conditional RMS, alongside their
experimental coverage.

\begin{table}[!t]
\centering
\small
\setlength{\tabcolsep}{4pt}
\renewcommand{\arraystretch}{1.14}
\caption{\textbf{Scoring criteria and experimental coverage.}}
\label{tab:instances}
\begin{tabularx}{\textwidth}{@{}p{2.0cm}p{3.5cm}>{\raggedright\arraybackslash}X@{}}
\toprule
\thead{Instance} & \thead{Token quantity} & \thead{Evidence in this paper} \\
\midrule
\rcsrefill\newline (\method) &
$\dfrac{\|w_i\resid_i-w_r\resid_r\|_2}{1-w_i+w_r}$ &
Nine-task benchmarks on all four backbones, with baselines on two
(Tables~\ref{tab:main}, \ref{tab:additional}). Reverse KL
(Figure~\ref{fig:radar-four-criteria}, Table~\ref{tab:loo_ablation}),
score distributions (Table~\ref{tab:cv_stats}), and
refill mechanism (Table~\ref{tab:refill}). Mean--RMS reduction and factorial
comparisons (Figure~\ref{fig:aggregation}, Table~\ref{tab:factorial_complete}).
Per-axis, routing-stratified and PPL fidelity, selection probes, and
Qwen3.6-35B-A3B response behavior (Table~\ref{tab:behaviour-lcb}). \\
\rcsloo &
$\dfrac{w_i\|\resid_i\|_2}{1-w_i}$ &
Nine-task benchmarks on all four backbones, with baselines on two
(Tables~\ref{tab:main}, \ref{tab:additional}). Output-fidelity and diversity figures, response behavior,
reverse KL, score distributions, and calibration-budget and
corpus diagnostics. \\
\rcs &
$w_i\|\resid_i\|_2$ &
Nine-task benchmarks on all four backbones, with baselines on two
(Tables~\ref{tab:main}, \ref{tab:additional}). Reverse KL
(Figure~\ref{fig:radar-four-criteria}, Table~\ref{tab:loo_ablation}) and
the reduction axis (Figure~\ref{fig:aggregation}). \\
\reaprms &
$w_i\|f_i\|_2$ &
Reverse KL in the separate factorial collection
(Appendix~\ref{app:factorial}). No benchmarks. \\
\bottomrule
\end{tabularx}
\end{table}

Each criterion fixes a token quantity, and every criterion here conditions its
aggregation on tokens routed to the expert. \rcs uses the routing-weighted
consensus residual, \rcsloo adds survivor renormalization, and \rcsrefill
additionally models the promoted expert. The token quantities of \rcsloo and
\rcsrefill are exact under their respective deletion assumptions, yet the
fixed-support identity fixes only the combination of consensus residual and LOO
factor. It leaves both the aggregation rule and the performance of the selected
checkpoint open, so the three rules need not form a performance hierarchy.

Magnitude and residual criteria can be compared under the same aggregation.
\reaprms uses the output norm and differs from REAP solely in replacing the
conditional mean with RMS, isolating aggregation from the reference change.
Our EAN adaptation averages $\|f_i\|_2$ and REAP averages $w_i\|f_i\|_2$, with
source definitions and the EAN normalization difference in
Appendix~\ref{app:benchmark_details}. The \reaprms results belong to the
factorial collection of Appendix~\ref{app:factorial}, whose
Table~\ref{tab:factorial_complete} crosses $f_i$ against $f_i-\con$ and $w_i$
against $w_i/(1-w_i)$ over the available sum, mean, and RMS reductions, with
refill damage as an additional quantity.

Table~\ref{tab:loo_ablation} compares REAP, \rcs, \rcsloo, and \method
under the same checkpoint-evaluation protocol. Moving from REAP to \rcs
lowers KL on both models, but changes both the reference and mean-to-RMS
aggregation, so it does not isolate consensus subtraction. The matched
mean-aggregation contrast appears in Appendix~\ref{app:matched_components}.
Adding the LOO denominator raises KL on Qwen3.6-35B-A3B, while adding refill
lowers it in three of four settings. These estimates pool tokens within each
axis and weight the eight axes equally. The $\Delta$ vs REAP column gives
\rcsloo's relative change from REAP, while refill gain gives \method's
relative reduction from \rcsloo. Negative values favor \rcsloo in both
columns, and percentages use unrounded KL.

\begin{table}[!t]
\centering
\small
\setlength{\tabcolsep}{4pt}
\renewcommand{\arraystretch}{1.14}
\caption{\textbf{Reverse KL across scoring rules in nats (lower is better).}}
\label{tab:loo_ablation}
\begin{tabularx}{\textwidth}{@{}>{\raggedright\arraybackslash}X*{6}{>{\raggedleft\arraybackslash}X}@{}}
\toprule
\thead{Removal} & \thead{REAP} & \thead{\rcs} & \thead{\rcsloo} &
\thead{\method} &
\thead{$\Delta$ vs REAP} & \thead{Refill gain} \\
\midrule
\multicolumn{7}{@{}l}{\textit{Qwen3.6-35B-A3B}} \\
25\% & 0.07461 & 0.06669 & 0.07141 & 0.06872 & $-4.3\%$ & $+3.8\%$ \\
50\% & 0.45654 & 0.40337 & 0.45108 & 0.44397 & $-1.2\%$ & $+1.6\%$ \\
\midrule
\multicolumn{7}{@{}l}{\textit{GLM-4.7-Flash}} \\
25\% & 0.22850 & 0.20481 & 0.20351 & 0.20459 & $-10.9\%$ & $-0.5\%$ \\
50\% & 1.01690 & 0.92310 & 0.87103 & 0.85457 & $-14.3\%$ & $+1.9\%$ \\
\bottomrule
\end{tabularx}
\end{table}

\subsection{RMS aggregation and score distributions}
\label{app:rms_detail}

For routed calibration tokens $\mathcal D_i=\{x\in\mathcal D:i\in\routeset(x)\}$,
\method uses the conditional root mean square of refill damage,
\begin{equation}
 s_i
 =\sqrt{\frac{1}{|\mathcal{D}_i|}
   \sum_{x\in\mathcal{D}_i}
   \left(
     \frac{\|w_i(x)\resid_i(x)-w_r(x)\resid_r(x)\|_2}{1-w_i(x)+w_r(x)}
   \right)^2}.
\label{eq:score}
\end{equation}
This definition applies when $|\mathcal D_i|>0$, with unseen experts handled
as specified in Appendix~\ref{app:implementation}. The token damage follows
from Proposition~\ref{prop:refill}, but conditioning and RMS are aggregation
choices, not consequences of that identity. The same aggregation is applied
to $w_i\|\resid_i\|_2$ for \rcs and $\delta_i^{\mathrm{loo}}$ for \rcsloo.

RMS differs from the conditional mean by an explicit variance term, which is
what gives larger observed shifts more weight.
With empirical conditional mean $\mu_i$ and population standard deviation
$\sigma_i$ of token-level damage over $\mathcal D_i$,
\begin{equation}
 s_i=\sqrt{\mu_i^2+\sigma_i^2}
 =\mu_i\sqrt{1+\mathrm{CV}_i^2},
 \qquad \mathrm{CV}_i=\sigma_i/\mu_i \quad (\mu_i>0).
\label{eq:cv}
\end{equation}
For $\mu_i<\mu_j$, RMS ranks $i$ above $j$ exactly when
$\sigma_i^2-\sigma_j^2>\mu_j^2-\mu_i^2$. Such reversals change the retained
set only across the budget boundary. This preference for variable damage
motivates RMS as an empirical design choice rather than a guarantee of better
pruning. Appendix~\ref{app:matched_components} tests that choice by comparing
mean and RMS while holding the token quantity fixed.

For the score-distribution analysis, we additionally accumulate the first
moment of token-level damage. Together with the routed-token count and second
moment used for RMS scoring, this yields each expert's mean and standard
deviation with $O(E)$ additional state per layer. We collect these statistics
for both fixed-support and refill damage.
Table~\ref{tab:cv_stats} compares \rcsloo and \method. Equal-expert quantiles
cover all observed experts without a count threshold, comprising 10,240 on
Qwen3.6-35B-A3B and 2,944 on GLM-4.7-Flash, and the CV panel of
Figure~\ref{fig:selection-diagnostics} uses the same experts. Within-layer
dispersion is
$\operatorname{median}_{\ell}[\operatorname{std}_i(\mathrm{CV}_{\ell i})/
\operatorname{mean}_i(\mathrm{CV}_{\ell i})]$, using population standard
deviations.

What changes a ranking is the premium's variation across experts rather than its
absolute size. The median fixed-support premium is 1.189 on Qwen3.6-35B-A3B and
1.155 on GLM-4.7-Flash. We quantify the resulting boundary swaps by comparing
direct top-$B$ mean and RMS rankings on the same experts and averaging swap
counts equally across layers, with each entry reporting the RMS-only retained
count over the retained-set size. Qwen3.6-35B-A3B has more exchanges at both
budgets. Under refill, the median premium is 1.166 on Qwen3.6-35B-A3B and 1.153
on GLM-4.7-Flash, and the layer-averaged swap counts differ from the
fixed-support averages by less than one expert. The two counterfactuals therefore show similar relative
variability and similar mean--RMS selection differences, which is compatible
with unequal damage magnitudes and unequal second moments between them.

\begin{table}[!t]
\centering
\small
\setlength{\tabcolsep}{4.5pt}
\renewcommand{\arraystretch}{1.12}
\caption{Damage variability and mean--RMS selection differences for
\rcsloo and \method.}
\label{tab:cv_stats}
\begin{tabularx}{\columnwidth}{@{}>{\raggedright\arraybackslash}p{5.4cm}*{4}{>{\centering\arraybackslash}X}@{}}
\toprule
& \multicolumn{2}{c}{\thead{Qwen3.6-35B-A3B}}
& \multicolumn{2}{c}{\thead{GLM-4.7-Flash}} \\
& \thead{\rcsloo}
& \thead{\method}
& \thead{\rcsloo}
& \thead{\method} \\
\midrule
\multicolumn{5}{@{}l}{\textit{Sample}} \\
Scored tokens & \multicolumn{2}{c}{785,703} & \multicolumn{2}{c}{785,632} \\
\multicolumn{5}{@{}l}{\textit{Coefficient of variation}} \\
$\mathrm{CV}$ median & 0.643 & 0.599 & 0.577 & 0.574 \\
$\mathrm{CV}$ $p_{5}$ / $p_{95}$ & 0.370 / 1.255 & 0.373 / 1.184
& 0.350 / 1.002 & 0.362 / 0.978 \\
Within-layer dispersion of $\mathrm{CV}$ & 0.317 & 0.305 & 0.221 & 0.211 \\
\multicolumn{5}{@{}l}{\textit{RMS premium $\sqrt{1+\mathrm{CV}^2}$}} \\
Median & 1.189 & 1.166 & 1.155 & 1.153 \\
$p_{95}$ & 1.605 & 1.550 & 1.416 & 1.398 \\
\multicolumn{5}{@{}l}{\textit{Boundary swaps}} \\
25\% removal & 7.95 / 192 & 7.53 / 192 & 1.39 / 48 & 1.33 / 48 \\
50\% removal & 10.80 / 128 & 11.00 / 128 & 1.70 / 32 & 1.85 / 32 \\
\bottomrule
\end{tabularx}
\end{table}

One boundary crossing shows the mechanism concretely. We take Qwen3.6-35B-A3B
layer 0, the first layer with a mean--RMS disagreement at 25\% removal in the
4,096-token-row diagnostic, and within it the opposite-status pair with the
closest conditional means, selected without reference to downstream outcomes.
Expert 208 has mean/CV/RMS 0.03323/0.741/0.04136 over 32,041 routed tokens,
against 0.03359/0.637/0.03982 over 23,766 tokens for expert 38. Their
mean-to-RMS ranks move from 194 to 190 and from 189 to 201, respectively,
crossing the 192-expert retention boundary. Higher variability thus flips
selection across a conditional-mean margin of $0.00036$, for one pair whose
utility and presence in the benchmark checkpoints are not established here.

\subsection{Conditional RMS, mean, and corpus-sum reductions}
\label{app:aggregation}

The previous subsection compared conditional RMS against the conditional mean,
which isolates the second moment. A second and independent choice remains,
namely whether to condition on expert use at all, and the two are easily
conflated because both appear to make RMS ``weight large damage more.''
Separating them requires a third reduction that drops the conditioning. Let $g_i$
be any non-negative token quantity with conditional RMS $s_i$, let
$\pi_i=|\mathcal D_i|/|\mathcal D|$, and set $g_i$ to zero on tokens not routed
to $i$. The empirical average squared damage over \emph{all} calibration
tokens is
\begin{equation}
 \frac{1}{|\mathcal D|}\sum_{x\in\mathcal D}
 \mathbf 1\{i\in\routeset(x)\}g_i(x)^2
 =\pi_i s_i^2.
\label{eq:severity_exposure}
\end{equation}
An all-token RMS criterion would therefore rank $\sqrt{\pi_i}s_i$, which
differs from ranking $s_i$ by the exposure factor $\sqrt{\pi_i}$. Ranking
$s_i$ measures severity conditional on use, so an infrequent but severe
deletion is not discounted for being rare. This score differs from the summed
squared token quantity over all calibration tokens. Neither aggregation
guarantees preservation of capabilities absent from the calibration data.

To isolate aggregation, we fix the \rcs token quantity
$g_i(t)=w_i(t)\|f_i(t)-\con(t)\|_2$ and compare three reductions of the observed
damage at four pruning budgets,
\begin{equation}
\begin{aligned}
 \text{conditional RMS}\ &\ \sqrt{\tfrac{1}{n_i}\textstyle\sum_{t\in\mathcal D_i}g_i(t)^2},
 &\qquad \text{conditional mean}\ &\ \tfrac{1}{n_i}\textstyle\sum_{t\in\mathcal D_i}g_i(t),\\
 \text{corpus sum}\ &\ \textstyle\sum_{t\in\mathcal D}g_i(t)^2=|\mathcal D|\,\pi_is_i^2,
\end{aligned}
\label{eq:reductions}
\end{equation}
where $\mathcal D_i$ contains tokens routed to expert $i$, $n_i=|\mathcal D_i|$,
and $g_i(t)=0$ elsewhere. The corpus sum differs from the all-token mean squared
damage in Eq.~\plaineqref{eq:severity_exposure} only by the common factor
$|\mathcal D|$, so they induce the same ranking. We exclude selection frequency
$n_i$ because it discards $g_i$ entirely. Including it would conflate a change
in token quantity with a change in aggregation. Table~\ref{tab:main} compares
frequency as a criterion in its own right.

For the \rcs quantity, reverse KL follows the strict ordering
$\text{RMS}<\text{mean}<\text{sum}$ in all eight model--budget settings
(Figure~\ref{fig:aggregation}). Both choices thus point the same way under this
protocol, as conditioning on use beats weighting squared severity by exposure,
and squaring within the conditional reduction beats not squaring. The
all-token quantity measures summed squared residual scores across calibration
tokens. It remains a proxy and does not equal the error of
jointly pruning the selected set.

How much the reduction matters is easier to appreciate against the spread among
criteria, which the main text reports as the headline comparison. Between the
extreme reductions, KL spans $2.4\times$ and $1.7\times$
on Qwen3.6-35B-A3B and $1.15\times$ and $1.05\times$ on GLM-4.7-Flash at 25\%
and 50\% removal. The corresponding spans among REAP, \rcs, and \rcsloo in
Table~\ref{tab:loo_ablation} are $1.12\times$, $1.13\times$, $1.12\times$, and
$1.17\times$. The reduction span is thus larger on Qwen3.6-35B-A3B, comparable
on GLM-4.7-Flash at 25\%, and smaller there at 50\%. These spans describe
different sets of tested configurations. Because the REAP--\rcs--\rcsloo
comparison changes aggregation as well as token quantity, they do not establish
which factor contributes more. The factorial grid in
Appendix~\ref{app:factorial} separates these choices.

The ordering largely carries over to the two exact deletion quantities.
These comparisons use equal layerwise budgets on the same four shards as
Table~\ref{tab:loo_ablation}, applying the two conditional reductions to the
leave-one-out and refill residuals. For refill damage, RMS lowers reverse KL
relative to the conditional mean at all four GLM-4.7-Flash budgets, by
4.4\%--11.4\%, and at 12.5\% and 25\% on Qwen3.6-35B-A3B, while raising it at
37.5\% and 50\% on Qwen3.6-35B-A3B (0.20744 versus 0.18874, and 0.44397 versus
0.44143), for six of eight settings improved. Applied to the leave-one-out
residual, RMS lowers KL in seven of eight settings, failing only at 37.5\% on
Qwen3.6-35B-A3B (0.20504 versus 0.19196). RMS therefore improves KL in most
settings for both deletion quantities, with exceptions on Qwen3.6-35B-A3B
at higher removal budgets.

\begin{figure}[!t]
\centering
\includegraphics[width=\textwidth]{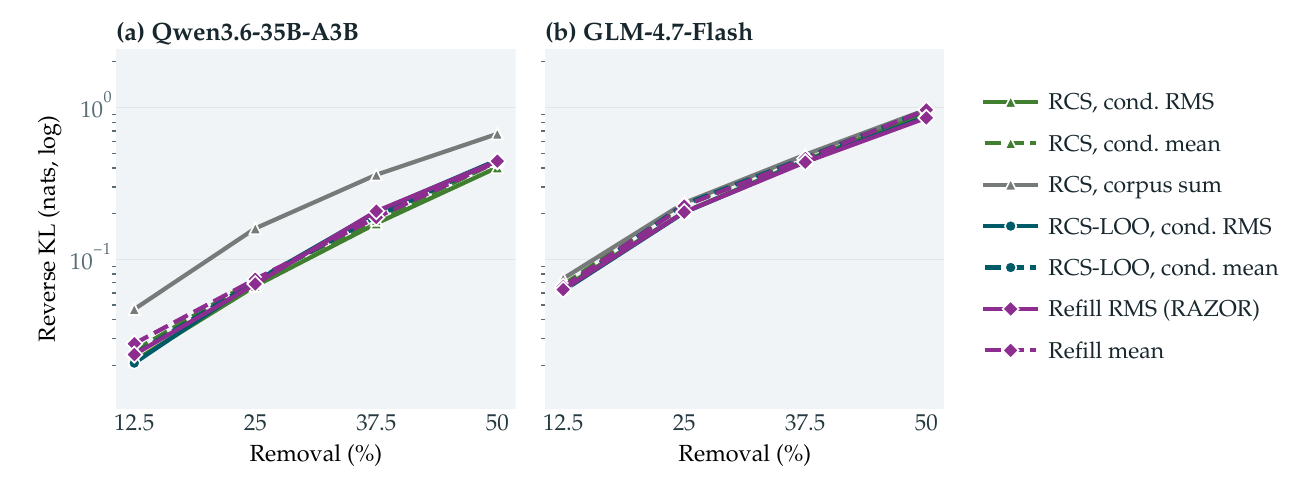}
\caption{\textbf{Reverse KL under different calibration reductions (lower is better).}
The \rcs quantity uses conditional RMS, conditional mean, and corpus sum.
The \rcsloo and \rcsrefill quantities each use the two conditional reductions.
Selection frequency is excluded because it discards the token quantity rather
than reducing it.}
\label{fig:aggregation}
\end{figure}

\subsection{What the refill term changes}
\label{app:refill}

Refill improves macro task performance across the four evaluated backbones.
To examine its local effect, Table~\ref{tab:refill} compares fixed-support and
refill damage on the two diagnostic backbones using the calibration statistics
of Table~\ref{tab:cv_stats}. Refill changes score scale in opposite directions
while preserving high token-profile similarity for most experts. Router averages are
token-weighted over routed events, whereas score ratios and token-axis cosines
use unweighted expert quantiles. The routed-event-weighted mean $w_i=1/k$ is a
renormalization consistency check, since each token's $k$ selected weights sum
to one.

\paragraph{Score scale and residual geometry.}
The median refill-to-fixed-support score ratio corresponds to a $2.81\%$
increase on Qwen3.6-35B-A3B and a $2.56\%$ decrease on GLM-4.7-Flash. Refill raises scores for
$85.2\%$ of Qwen3.6-35B-A3B experts and $14.3\%$ of GLM-4.7-Flash experts. Substituting the
reported mean routing weights into $(1-w_i)/(1-w_i+w_r)$ gives $0.922$ and
$0.850$, respectively. These values illustrate the denominator's attenuation
at average weights. They are neither token-averaged factors nor a decomposition
of the median score ratio.

For the reported residual-cosine statistics, the denominator
$\|\resid_i\|_2\|\resid_r\|_2$ is clamped below at $10^{-9}$.
The statistic is zero when either residual vanishes.
The mean residual cosines are $0.108$ on Qwen3.6-35B-A3B and $0.103$ on
GLM-4.7-Flash. By Eq.~\plaineqref{eq:refill_expand}, such a low positive cosine
is compatible with either an increase or a decrease in the numerator, depending
on the relative residual magnitude $\rho_i$. These aggregate statistics do
not identify $\rho_i$ or separate the contributions of direction and residual
magnitude. They also cannot recover the nonlinear factors, because
the score ratios summarize experts whereas the routing statistics summarize
routed events.

\paragraph{Damage-profile similarity and selection.}
Treating $\delta_i^{\mathrm{loo}}$ and $\delta_i$ as vectors over each expert's
routed tokens, their cosine has median $0.9942$ on Qwen3.6-35B-A3B and $0.9952$ on GLM-4.7-Flash,
with fifth percentiles $0.9630$ and $0.9715$. Most profiles are therefore close
in direction, although the minima of $0.7062$ and $0.5440$ show substantial
changes for some experts. Directional similarity alone does not fix the
ranking, since nearly proportional profiles with different scale factors still
reorder experts. The direction of the median score change does not determine
the KL change.
Despite the opposite median score shifts on the two models, refill lowers KL at
both Qwen3.6-35B-A3B budgets and at only one GLM-4.7-Flash budget, so these
local statistics do not predict its contribution to checkpoint quality.

\begin{table}[!t]
\centering
\small
\setlength{\tabcolsep}{4pt}
\renewcommand{\arraystretch}{1.14}
\caption{\textbf{Calibration statistics of routing weights and refill-induced score changes.}}
\label{tab:refill}
\begin{tabularx}{\columnwidth}{@{}>{\raggedright\arraybackslash}p{6.4cm}*{2}{>{\raggedleft\arraybackslash}X}@{}}
\toprule
\thead{Quantity} & \thead{Qwen3.6-35B-A3B} & \thead{GLM-4.7-Flash} \\
\midrule
\multicolumn{3}{@{}l}{\textit{Routing geometry (token-weighted)}} \\
Selected weight $w_i$ ($=1/k$) & 0.1250 & 0.2500 \\
Promoted pseudo-weight $w_r$ & 0.0740 & 0.1325 \\
$\cos(\resid_i,\resid_r)$ & 0.1080 & 0.1034 \\
\midrule
\multicolumn{3}{@{}l}{\textit{Score ratio $s_i^{\mathrm{rf}}/s_i^{\mathrm{loo}}$ (per expert)}} \\
Median & 1.0281 & 0.9744 \\
Experts with ratio $>1$ & 85.2\% & 14.3\% \\
\midrule
\multicolumn{3}{@{}l}{\textit{Token-axis $\cos(\delta_i^{\mathrm{loo}},\delta_i)$ (per expert)}} \\
Median & 0.9942 & 0.9952 \\
$p_5$ & 0.9630 & 0.9715 \\
Minimum & 0.7062 & 0.5440 \\
\midrule
\multicolumn{3}{@{}l}{\textit{Collection}} \\
Top-$k$ / experts per layer & 8 / 256 & 4 / 64 \\
Layers / scored experts & 40 / 10,240 & 46 / 2,944 \\
Routed token positions & 785,703 & 785,632 \\
\bottomrule
\end{tabularx}
\end{table}

\subsection{Reference, routing factor, and aggregation}
\label{app:factorial}

Table~\ref{tab:factorial_complete} reports fourteen scoring configurations for
two models at 25\% and 50\% removal. Its core grid crosses two output
references with two routing factors. Rows within each block compare token
quantities, while blocks compare reductions of the same quantity. The
conditional mean of $w_i\|f_i\|_2$ is REAP, and the conditional RMS of
$\frac{w_i}{1-w_i}\|f_i-c\|_2$ is \rcsloo. Adding refill damage gives five
token quantities across the corpus-sum $\sum_t g_i(t)^2$, conditional-mean,
and conditional-RMS blocks, giving fourteen rows per model.

One row is not exactly matched to the rest. The conditional-mean refill row
comes from the main sweep of Figure~\ref{fig:aggregation} rather than this
grid's evaluation collection, so comparisons involving it are cross-collection
contrasts, unlike the matched comparisons among the other thirteen rows.

\begin{table}[!t]
\centering
\small
\setlength{\tabcolsep}{3pt}
\renewcommand{\arraystretch}{1.15}
\caption{\textbf{Fourteen scoring configurations varying reference, routing factor, and aggregation.}
Reverse KL is in nats at 25\% and 50\% removal, with lower values better.
Thirteen rows share four evaluation shards. The conditional-mean refill
row comes from the main sweep in Figure~\ref{fig:aggregation}, which
differs in one shard. Comparisons involving this row are not fully matched.
Sum denotes $\sum_t g_i(t)^2$. No corpus-sum result is available for
$\frac{w_i}{1-w_i}\|f_i\|_2$. Refill damage with RMS is \method.}
\label{tab:factorial_complete}
\begin{tabularx}{\textwidth}{@{}p{3.5cm}>{\centering\arraybackslash}p{1.7cm}*{2}{>{\raggedleft\arraybackslash}X}@{\hspace{12pt}}*{2}{>{\raggedleft\arraybackslash}X}@{}}
\toprule
\thead{Token quantity} & \thead{Aggregate} & \multicolumn{2}{c}{\thead{Qwen3.6-35B-A3B}} & \multicolumn{2}{c}{\thead{GLM-4.7-Flash}} \\
\cmidrule(lr){3-4}\cmidrule(l){5-6}
 & & \thead{25\%} & \thead{50\%} & \thead{25\%} & \thead{50\%} \\
\midrule
$w_i\|f_i\|_2$ & Sum & 0.14712 & 0.67725 & 0.22309 & 0.89425 \\
$w_i\|f_i-c\|_2$ & Sum & 0.16005 & 0.67173 & 0.23513 & 0.96880 \\
$\frac{w_i}{1-w_i}\|f_i-c\|_2$ & Sum & 0.17659 & 0.69294 & 0.23339 & 0.91110 \\
$\frac{\|w_i\resid_i-w_r\resid_r\|_2}{1-w_i+w_r}$ & Sum & 0.18798 & 0.70264 & 0.23176 & 0.88086 \\
\midrule
$w_i\|f_i\|_2$ & Mean & 0.07469 & 0.45691 & 0.22852 & 1.01692 \\
$\frac{w_i}{1-w_i}\|f_i\|_2$ & Mean & 0.07882 & 0.48975 & 0.22922 & 0.97162 \\
$w_i\|f_i-c\|_2$ & Mean & 0.07357 & 0.44075 & 0.22804 & 0.94483 \\
$\frac{w_i}{1-w_i}\|f_i-c\|_2$ & Mean & 0.07151 & 0.45341 & 0.23017 & 0.95748 \\
$\frac{\|w_i\resid_i-w_r\resid_r\|_2}{1-w_i+w_r}$ & Mean & 0.07386 & 0.44143 & 0.22481 & 0.96456 \\
\midrule
$w_i\|f_i\|_2$ & RMS & 0.06869 & 0.43948 & 0.20813 & 0.86866 \\
$\frac{w_i}{1-w_i}\|f_i\|_2$ & RMS & 0.09535 & 0.64640 & 0.20865 & 0.90296 \\
$w_i\|f_i-c\|_2$ & RMS & 0.06674 & 0.40355 & 0.20483 & 0.92318 \\
$\frac{w_i}{1-w_i}\|f_i-c\|_2$ & RMS & 0.07144 & 0.45134 & 0.20353 & 0.87107 \\
$\frac{\|w_i\resid_i-w_r\resid_r\|_2}{1-w_i+w_r}$ & RMS & 0.06874 & 0.44409 & 0.20461 & 0.85460 \\
\bottomrule
\end{tabularx}
\end{table}

\paragraph{Evaluation and aggregation.}
Within each model, thirteen of the fourteen rows share four evaluation shards
with matched domain sequences and token counts. Let $L_{s,d}$ denote mean
reverse KL in shard $s$ and domain $d$, with $n_{s,d}$ scored tokens. The
estimator pools tokens within domains and then weights the eight domains
equally,
\begin{equation}
 L=\frac{1}{8}\sum_{d=1}^{8}
 \frac{\sum_{s=0}^{3} n_{s,d}L_{s,d}}{\sum_{s=0}^{3} n_{s,d}}.
\label{eq:factorial_estimator}
\end{equation}
The entries are shard-based point estimates, not estimates of variability
across independent runs.

\paragraph{Relation to the matched component study.}
The factorial and matched component studies use overlapping evaluation data
but differ in token coverage within one of their four subsets. We therefore
report each study separately and do not treat their agreement as an
independent replication. The mean-refill row is explicitly identified as a
cross-study comparison.

\paragraph{No single scoring component is uniformly beneficial.}
The effect of each component depends on the other scoring choices.
Even with mean aggregation fixed, the effect of the routing factor depends on
the output reference. On GLM-4.7-Flash at 50\% removal, replacing $w_i$ with
$w_i/(1-w_i)$ lowers reverse KL from 1.01692 to 0.97162 for output magnitudes,
but raises it from 0.94483 to 0.95748 for consensus residuals. For fixed-support
deletion damage, RMS lowers reverse KL relative to the conditional mean in all four
displayed model--budget settings, although the Qwen3.6-35B-A3B differences are small.

\paragraph{The reduction ordering depends on the token quantity.}
The grid reproduces the \rcs ordering
$\text{RMS}<\text{mean}<\text{sum}$ from Appendix~\ref{app:aggregation} at both
displayed budgets on both models, but this ordering does not hold for every
quantity. Conditional RMS has the lowest KL in 15 of the 16 cells with all
three reductions, including four cross-collection comparisons involving mean
refill. The exception is refill at 50\% removal on Qwen3.6-35B-A3B.
On GLM-4.7-Flash, corpus sum yields lower KL than conditional mean for output
magnitudes at both budgets (0.22309 versus 0.22852, and 0.89425 versus 1.01692)
and for both residual quantities at 50\% removal (0.91110 versus 0.95748, and
0.88086 versus 0.96456), reversing the middle two terms in four of the sixteen
cells. On Qwen3.6-35B-A3B, mean yields lower KL than sum in all eight available
comparisons, with sum yielding 1.48--2.55$\times$ the KL of mean. RMS beats sum
in all sixteen matched comparisons, but the mean--sum ordering depends on the
quantity and backbone.

\paragraph{The reference and the reduction interact.}
For the four non-refill quantities in the core grid, RMS lowers reverse KL
relative to mean in 14 of the 16 quantity--model--budget cells. Both exceptions are the routing factor without
the consensus reference, $\frac{w_i}{1-w_i}\|f_i\|_2$, on Qwen3.6-35B-A3B, where RMS raises
KL from 0.07882 to 0.09535 at 25\% removal and from 0.48975 to 0.64640 at 50\%.
Read in the other direction, the benefit of consensus subtraction depends on
which reduction is in use. With
the routing factor held at $w_i$, subtracting the consensus lowers KL in all
four settings under the mean but in three under RMS, raising it from 0.86866
to 0.92318 on GLM-4.7-Flash at 50\% removal. With the leave-one-out factor,
it lowers KL in all four settings under RMS but in three under the mean,
raising it from 0.22922 to 0.23017 on GLM-4.7-Flash at 25\% removal. The two
components are therefore not separable additive improvements.

\paragraph{An aggregation-matched baseline narrows the residual advantage.}
Comparing our criteria against REAP confounds the reference with the
reduction, so the grid also aggregates REAP's token quantity by conditional
RMS. Under that matched baseline, \reaprms gives lower KL than \rcsloo in
both Qwen3.6-35B-A3B settings and slightly lower KL on GLM-4.7-Flash at 50\%,
so the benefit of RMS over mean aggregation for fixed-support deletion damage
does not imply that the complete \rcsloo criterion outperforms \reaprms.
This comparison changes both the output reference and the
survivor-renormalization factor. \rcsrefill fares
better against the same baseline. It attains the lowest KL of all fourteen
rows on GLM-4.7-Flash at 50\% (0.85460) and beats \reaprms on GLM-4.7-Flash at
both budgets, while trailing it on Qwen3.6-35B-A3B at both budgets (0.06874
versus 0.06869 and 0.44409 versus 0.43948), where \rcs remains lowest.
With both quantities aggregated by the conditional mean instead, refill yields
lower KL than REAP in all four settings (0.07386 versus 0.07469 and 0.44143
versus 0.45691 on Qwen3.6-35B-A3B, 0.22481 versus 0.22852 and 0.96456 versus
1.01692 on GLM-4.7-Flash), although mean \rcsloo still edges out mean refill
on Qwen3.6-35B-A3B at 25\% (0.07151). These mean-refill contrasts use the
separate main-sweep shards rather than the identical batches used elsewhere in
this table.

Taken together, the grid shows that local exactness of a token quantity does
not determine its empirical ordering after joint pruning, which is the same
backbone dependence visible in the main-sweep comparison of
Table~\ref{tab:loo_ablation}. This is why the choice of \method rests on the
benchmark results rather than on these checkpoint-fidelity contrasts, which
do not identify the source of the downstream gains.

\subsection{Matched component contrasts at two removal budgets}
\label{app:matched_components}

The matched component study uses four evaluation subsets at 25\% and 50\%
removal, pools tokens within each
axis, weights the eight axes equally, and varies one scoring choice at a time
with the others held fixed. This separates the effects of the output reference,
survivor renormalization, refill, and aggregation under a common protocol.

Consensus subtraction lowers reverse KL in every matched setting. With routing
weights and mean aggregation fixed, it does so by 1.5\%/3.5\% on
Qwen3.6-35B-A3B and 0.2\%/7.1\% on GLM-4.7-Flash at 25\%/50\% removal. With
residuals and the LOO factor fixed, RMS likewise lowers KL in all four
settings, by 11.6\%/9.0\% on GLM-4.7-Flash but by only 0.05\%/0.44\% on
Qwen3.6-35B-A3B, which is too small a margin to read as separation. Beyond
these two components the picture splits by backbone. Relative to RMS without
LOO, \rcsloo lowers GLM-4.7-Flash's reverse KL but raises Qwen3.6-35B-A3B's at
both budgets (Table~\ref{tab:loo_ablation}). For refill damage, RMS changes
reverse KL by $-7.0\%$/$+0.6\%$ on Qwen3.6-35B-A3B against
$-9.0\%$/$-11.4\%$ on GLM-4.7-Flash.

The two fidelity metrics also disagree, which matters because each is a
defensible summary of the same predictions. In the RMS contrast at fixed
residuals and LOO factor, RMS raises Qwen3.6-35B-A3B's excess perplexity by
16.5\%/2.7\% while lowering its reverse KL, whereas it lowers
GLM-4.7-Flash's by 12.5\%/13.0\% in agreement with KL. The refill contrast
shows the same split, changing excess perplexity by $-8.5\%$/$+6.9\%$ on
Qwen3.6-35B-A3B and $-8.9\%$/$-13.2\%$ on GLM-4.7-Flash.
Appendix~\ref{app:ppl_view} examines this disagreement directly on shared
routing groups. Here it means that no ordering of the proxies holds under both
metrics at once.

Whether the gains reach beyond calibration-covered task types is a separate
question, which we address by averaging the four covered and four held-out axes
separately with equal weight within each group, using excess perplexity
$\exp(\Delta\mathrm{NLL})-1$. REAP's held-out reverse KL is $2.1$--$2.9\times$ its in-domain
value across the two budgets and its held-out excess perplexity is
$3.4$--$3.8\times$ its in-domain value. Domain-specific comparisons appear in
Appendices~\ref{app:per_axis} and~\ref{app:domain_fidelity}.

Across all eight axes, GLM-4.7-Flash improves on both metrics, with \rcsloo reducing KL by
10.9\%/14.3\% and excess perplexity by 8.9\%/17.1\% relative to REAP, and
\method reducing them by 10.5\%/16.0\% and 8.7\%/19.3\%. Qwen3.6-35B-A3B again
splits, as \rcsloo lowers its all-axis KL by 4.3\%/1.2\% while changing excess
perplexity by $-0.7\%$/$+12.6\%$, and \method lowers KL by 7.9\%/2.8\% while
changing excess perplexity by $-9.5\%$/$+6.6\%$. Refill is the one component
that helps that model on both metrics, lowering its excess perplexity by
8.9\%/5.3\% relative to \rcsloo.

\subsection{Selection diagnostics}
\label{app:selection}

The comparisons above measure score values and checkpoint fidelity, neither of
which shows what changed in the retained set. Three probes address that
question from different angles. Conditioning on keep-set disagreements asks
whether the tokens where two criteria retain different experts are where their
outputs diverge. Joint residual statistics ask whether the residuals inside a
selected deletion set cancel or reinforce. Single-expert ablations ask whether
local score order survives deployment, once the router reselects from the
remaining pool. These probes characterize selection and local interactions.
They do not measure redundancy among the final retained experts or establish
the mechanism behind the benchmark gains.

\subsubsection{Output differences conditioned on selection disagreements}
\label{app:selection_conditions}

We compare \rcsloo and REAP at 50\% removal on
16 batches spanning eight axes for each of Qwen3.6-35B-A3B and GLM-4.7-Flash. This collection is
separate from the four-shard main sweep and is subject to the checkpoint-matching
limits in Appendix~\ref{app:benchmark_details}. It contains no \method
predictions, so this probe is not repeated for refill.

Each conditioning variable uses routed events from a specified layerwise
keep-set disagreement. For routing concentration, eligible experts are retained
by \rcsloo but not by \rcs. We select the event with
the largest $1/(1-w_i)$ across layers for each token and use its normalized
routing entropy. For output consensus, eligible experts are retained by residual
RMS without LOO but not by REAP. We select the event with the largest residual
damage relative to the expert's calibration mean and use its
$\kappa=\|c\|_2^2/\sum_j w_j\|f_j\|_2^2$ and relative residual norm.
For relative impact, eligible experts are retained by \rcsloo but not by
conditional-mean fixed-support scoring. We take the largest
$\delta_i^{\mathrm{loo}}/\mathbb{E}_{\mathrm{cal}}[\delta_i^{\mathrm{loo}}\mid i\in\routeset]$ across their
routed events, or zero if none is observed. This ratio measures shift relative
to an expert's mean damage, not event rarity or preservation of rare capabilities.

All diagnostics use model-wide quantiles on a common finite-token subset.
Within each bin, we average \rcsloo-minus-REAP reverse KL within batches and
then equally across batches, with 5,000 paired-batch bootstrap resamples.
The most concentrated routing, highest-consensus, and highest-relative-impact
bins give Qwen3.6-35B-A3B and GLM-4.7-Flash differences of $-0.14/-0.33$, $-0.18/-0.28$, and
$-0.25/-0.47$, respectively. Tokens where disagreeing experts carry the most
concentrated routing, highest consensus, or highest relative impact tend to
show the largest criterion gaps. These associations are conditional on
selection disagreements and do not generalize to unconditioned tokens.

\subsubsection{Interactions within the selected deletion sets}
\label{app:joint_numerator}

The joint identity in Eq.~\plaineqref{eq:setshift} motivates a separate test
of whether cross-expert residuals cancel or reinforce within score-selected
deletion sets. We select sets with \rcsloo at 25\% and 50\% removal and
accumulate their interaction Gram matrices on the diagnostic calibration
collections of Appendix~\ref{app:razorcal}. Define
$u_i(x)=w_i(x)(f_i(x)-c(x))$ for routed experts and zero otherwise, and let
$G_{ij}=\sum_x\langle u_i(x),u_j(x)\rangle$ within a layer. For deletion set
$\mathcal A_\ell$, the interaction ratio is
\begin{equation}
 \gamma_\ell=
 \frac{\sum_{i,j\in\mathcal A_\ell}G_{ij}}
      {\sum_{i\in\mathcal A_\ell}G_{ii}},
\label{eq:joint_ratio}
\end{equation}
provided the denominator is positive. The common layer output scale cancels
in the ratio. Values below one indicate
net cancellation, while values above one indicate reinforcement. We compute
one ratio per layer and report equal-layer quantiles over 40 Qwen3.6-35B-A3B and 46 GLM-4.7-Flash
layers.

At 25\% and 50\% removal, median ratios are 1.080/0.921 on Qwen3.6-35B-A3B and
0.992/0.811 on GLM-4.7-Flash. At 50\%, 29 of 40 Qwen3.6-35B-A3B layers and all 46 GLM-4.7-Flash layers
show net cancellation. Sets selected by \method from the same accumulators
give median ratios of 1.077/0.925 on Qwen3.6-35B-A3B and 0.997/0.831 on
GLM-4.7-Flash, with cancellation in 29 of 40 and 42 of 46 layers at 50\%.
The degree of cancellation is therefore similar across criteria at both
budgets. These ratios characterize only the squared numerator of the
fixed-support perturbation. They omit the token-dependent factor
$(1-W_{\mathcal A}(x))^{-2}$ and router refill, and the identity
$\sum_i u_i(x)=0$ constrains their cross terms. They therefore do not
measure joint deletion loss. The fixed-support counterfactual is undefined
when no routing mass survives.

\subsubsection{Single-expert removal with deployed routing}
\label{app:per_expert_ablation}

This probe tests whether local saliency transfers to end-to-end single-expert
damage under deployed routing. We uniformly sample
32 experts and intervene on each in six layers spanning shallow, middle, and
deep positions, giving 192 ablations per model. Each intervention masks one
expert's router score to $-\infty$ and reselects and renormalizes the top-$k$
from the remaining pool, potentially admitting previously inactive experts.

We compare the intervened model $q$ with the unpruned model $p$ on 16 shared
batches, two per evaluation axis, using assistant-token reverse KL
$D_{\mathrm{KL}}(q\|p)$. The resulting damage estimate pools 78,091 scored
tokens on Qwen3.6-35B-A3B and 79,975 on GLM-4.7-Flash, weighting tokens rather
than axes equally. For each scoring rule, we
compute Spearman correlation over the 32 interventions within each layer
and average the six correlations equally. The saliency statistics come from the
diagnostic calibration collections of Appendix~\ref{app:razorcal}.

Mean correlations for \rcsloo, REAP, and EAN are 0.31, 0.34, and 0.31 on
Qwen3.6-35B-A3B, and 0.19, 0.18, and 0.10 on GLM-4.7-Flash, respectively. Every Qwen3.6-35B-A3B correlation
is positive, but several GLM-4.7-Flash values are near zero and EAN is negative in
two layers. The refill score \method gives 0.31 on Qwen3.6-35B-A3B and 0.18 on
GLM-4.7-Flash. The point estimates show no consistent advantage across models.
This probe measures how well local output shifts under single-expert removal
transfer to sample-wide prediction KL after router reselection. Better ordering
of all single deletions does not guarantee better budgeted selection, and the
probe accounts for neither the fixed-support identity nor joint deletion sets.

\section{Calibration Stability and Sensitivity}
\label{app:calibration_robustness}

We examine how expert selection changes with calibration budget and how
performance varies across calibration corpora. The budget study compares
expert sets selected from disjoint token subsets for REAP and all three
residual criteria. The corpus study evaluates reference-token PPL for
24 \rcsloo checkpoints. Its configurations also differ in token budget
and conversation structure, so the PPL contrasts do not isolate corpus choice.

\subsection{Calibration budget and selection stability}
\label{app:calibration_budget}

The Qwen3.6-35B-A3B collection contains 32 shards and 2,151,112 tokens. At each
budget, we draw pairs of disjoint subsets with equal shard counts, merge
statistics within each subset, and select experts. Agreement is the sum of
layerwise retained-set intersection sizes divided by the total number of
retained experts across scored layers. We report averages over four subset-pair
draws. Additive sufficient statistics allow scores to be recomputed for each
subset without additional model evaluations. All four criteria use the same subsets.
Subset agreement compares the two disjoint selections. Full-set agreement
compares each selection with that from all 32 shards and averages the two
values. Only the former measures agreement from disjoint calibration data,
because the full-set reference includes both subsets.

\begin{figure}[!t]
\centering
\includegraphics[width=\textwidth]{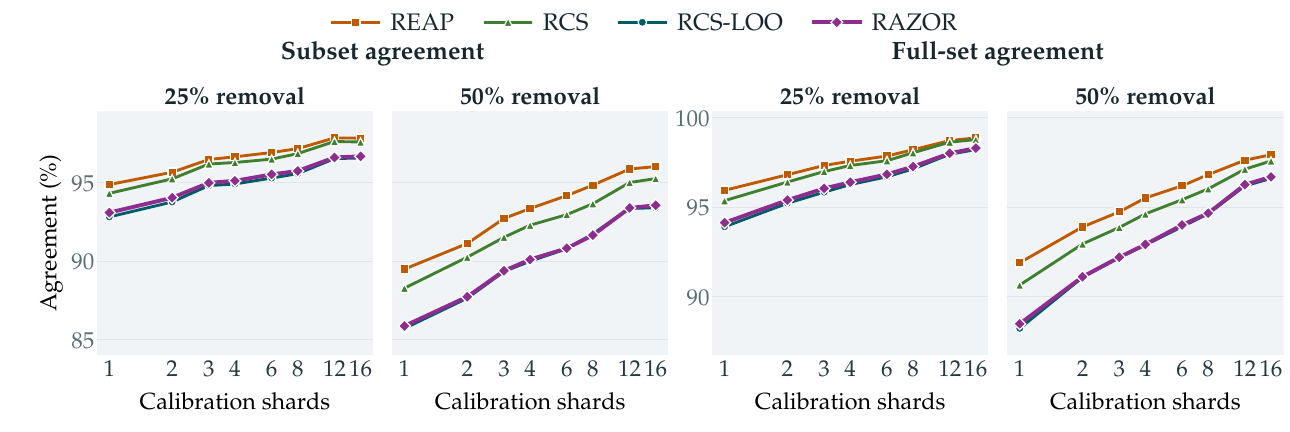}
\caption{\textbf{Calibration budget and selection agreement on Qwen3.6-35B-A3B.}
Subset agreement (left pair) compares two disjoint calibration subsets,
while full-set agreement (right pair) compares each against the full 32-shard
selection. Tokens per shard count are given in the text.}
\label{fig:calibration_budget}
\end{figure}

Selection agreement generally increases with calibration budget for all four
criteria (Figure~\ref{fig:calibration_budget}). For \rcsloo, disjoint-subset
agreement rises from 85.72\% to 93.39\% at 50\% removal as the budget grows from
approximately 65,500 to 1.08 million tokens. At 25\% removal, it rises from
92.82\% to 96.58\%. In the final budget step, disjoint-subset agreement
increases by only 0.01 percentage points for \rcsloo at 50\% removal.
At 25\%, the corresponding changes across criteria range from $-0.03$ to
$+0.07$ percentage points. These small changes across four draws do not
establish whether agreement has plateaued.

Across eight calibration budgets, two removal ratios, and two agreement
measures, REAP has the highest selection agreement and \rcs the second
highest in all 32 settings. \method exceeds \rcsloo in 31 settings.
At the smallest budget and 50\% removal, disjoint-subset agreement is 89.50\%
for REAP and 85.72\% for \rcsloo. At approximately 1.08 million tokens,
their disjoint-subset gap narrows to 2.64 percentage points, alongside a
narrowing full-set agreement gap. The residual criteria therefore produce less
stable retained sets than REAP throughout this collection, and more calibration
data reduces but does not close that difference.

Selection stability and task performance measure different properties. These
means over four subset-pair draws do not establish whether the lower stability
of residual criteria affects fidelity or downstream performance. That question
requires evaluating the checkpoints selected at each calibration budget.

\subsection{Corpus configurations and evaluation comparability}

The composition study compares full \calib, a tool-filtered subset,
self-generated text, WikiText, and two complementary halves of the filtered
subset at 25\% and 50\% removal. Filtering retains 1,790 of 2,048 examples by
excluding 249 tool-calling and nine coding conversations with a tool-role
message before the final assistant turn. The complementary halves combine
alternating shards, $\{0,2\}$ and $\{1,3\}$. The filtered corpus retains
\calib's mixed-source provenance and is not a human-authored control.
Self-generated variants use the corresponding unpruned backbone to continue
the prefix before the first assistant turn. Since later dialogue turns are not
preserved, these variants change conversation structure as well as response
content. For article-style language-modeling text, we use the raw WikiText-2
training split of curated Wikipedia articles \citep{wikitext2017}.

\paragraph{Perplexity across configurations.}
We evaluate 24 \rcsloo checkpoints, one per configuration, model and removal
ratio. Each is scored on 128 sequences of 2,048 tokens, yielding 262,016
next-token targets. PPL exponentiates the mean reference-token NLL over all
next-token positions, not only assistant turns. The unpruned references are
$5.781$ on Qwen3.6-35B-A3B and $3.563$ on GLM-4.7-Flash. Comparisons across
configurations are subject to the input-matching limits below.

\paragraph{Input matching.}
The Qwen3.6-35B-A3B full-\calib and self-generated runs share the original model's
tokenization and evaluation rows. Their conversation pool differs from
the eight-axis fidelity pool. It contains the first 128 conversations in source
order that fill the token window, each truncated to its first 2,048 tokens.
On these matched inputs, self-generated calibration yields lower PPL than
full \calib at both removal ratios ($6.468$ versus $7.544$ at 25\%, and
$6.991$ versus $9.028$ at 50\%). This contrast still changes calibration
token budgets and conversation structure as well as response content.

Across the larger collection, self-generated calibration yields lower PPL than
tool-filtered calibration on Qwen3.6-35B-A3B but higher PPL on GLM-4.7-Flash at both
budgets. WikiText calibration produces a 50\%-removal Qwen3.6-35B-A3B checkpoint with
PPL $4.831$, below the unpruned reference. The complementary half-splits differ
by $0.295$ on Qwen3.6-35B-A3B at 25\% removal and by $0.170$ on GLM-4.7-Flash at 50\%.
Outside the matched Qwen3.6-35B-A3B contrast above, identical evaluation
inputs have not been established across all configurations. Together with
differences in calibration token budget and conversation structure, this
limits interpretation of the PPL values as effects of calibration source.
The study does not compare calibration robustness against REAP.

\section{Output Fidelity and Response Diversity}
\label{app:axis_results}

We examine fidelity across evaluation axes and routing conditions, and
compare it with response diversity and form. The analyses show domain-level
regressions alongside aggregate KL gains (Appendix~\ref{app:per_axis}), differences between KL and
reference-token likelihood, and generation changes that do not follow the
task-performance ranking.

Resampling units differ across analyses, so their intervals are not
interchangeable. Per-axis fidelity resamples paired evaluation shards,
domain-wise and routing-stratified analyses resample paired batches, and
response diversity resamples questions. All reported 95\% intervals are
pointwise and unadjusted for multiple comparisons.

\subsection{Per-axis fidelity results}
\label{app:per_axis}
The main sweep pools four disjoint stratified shards, weighting tokens within
axes and the eight axes equally. Table~\ref{tab:loo_ablation} reports its
absolute and relative reverse KL at 25\% and 50\% removal. The \rcsloo
checkpoints have lower reverse KL than REAP on both models at both budgets,
with larger relative reductions on GLM-4.7-Flash.

At 25\% and 50\% removal, the axis-wise comparison separates four
calibration-covered axes (\emph{math, code, instruction following, tool use})
from four held-out axes (\emph{chat, creative, safety, SQL}). \rcsloo has lower
reverse-KL point estimates in 11 of 16 axis--ratio cells on Qwen3.6-35B-A3B and 13 of 16
on GLM-4.7-Flash. To distinguish favorable and adverse directions, we enumerate all
$4^4=256$ ordered resamples of the four paired shards. In each resample we
recompute the token-weighted KL of both methods within the axis and take
$100(L_{\mathrm{method}}/L_{\mathrm{REAP}}-1)$. The 2.5th and 97.5th percentiles
give pointwise intervals, of which 23 lie below zero, four lie above zero,
and five include zero. Per setting, the below/above/includes-zero counts are 5/1/2 for
Qwen3.6-35B-A3B at both budgets, 6/1/1 for GLM-4.7-Flash at 25\%, and 7/1/0 for GLM-4.7-Flash at 50\%.
The four adverse cells are
creative writing on Qwen3.6-35B-A3B at both budgets, SQL on GLM-4.7-Flash at 25\%, and mathematics
on GLM-4.7-Flash at 50\%. Thus the aggregate improvement coexists with identifiable
axis-level regressions. The same procedure applied to \method gives lower
point estimates in 11 of 16 cells on Qwen3.6-35B-A3B and 14 of 16 on GLM-4.7-Flash,
with 24 intervals below zero, three above zero, and five including zero
(6/0/2 and 5/1/2 on Qwen3.6-35B-A3B at 25\%/50\%, 6/1/1 and 7/1/0 on GLM-4.7-Flash).
Refill removes the adverse Qwen3.6-35B-A3B creative-writing interval at 25\%,
but the other three adverse cells remain. All comparisons share the same evaluation
shards and ordered resamples within each axis. Their intervals are paired
contrasts, not estimates of variation across checkpoints or calibration draws.
This view therefore decomposes the same predictions rather than replicating the
experiment.

\subsection{Domain-wise output fidelity}
\label{app:domain_fidelity}

Figure~\ref{fig:radar-four-criteria} resolves fidelity by domain for four
backbones at both removal ratios, using predictions collected by
applying criterion-specific expert masks to each original model. These
mask-based diagnostics are separate from the downstream benchmark runs.
The figure compares REAP, \rcs, \rcsloo and \rcsrefill using reverse KL $D_{\mathrm{KL}}(q\|p)$ in nats over four
calibration-covered and four held-out domains. Each domain estimate averages
token means equally over six batches. ID/OOD summaries then give equal weight
to domain means, and intervals use 10,000 paired-batch bootstrap draws, shared
across criteria so every comparison stays paired on the same resampled batches.

The residual criteria improve on REAP in most domains, although the
held-out results on Qwen3.6-35B-A3B are mixed. For \rcsloo versus REAP, lower-KL domain counts
are 6/4/7/6 at 25\%
removal and 7/5/7/7 at 50\%, in GLM-4.7-Flash, Qwen3.6-35B-A3B, DeepSeek-V4-Flash-0731, and Hy3 order, each
out of eight domains. Qwen3.6-35B-A3B's held-out equal-domain mean is higher than REAP's
by 7.3\% at 25\% and 2.3\% at 50\%, with relative-change intervals crossing
zero at both budgets, whereas the other three models have lower held-out
means at 50\%. Refill improves the point estimates in this group. For \method the
corresponding counts are 5/6/6/6 at 25\% and 7/5/8/7
at 50\%, and its Qwen3.6-35B-A3B held-out mean changes by $-3.9\%$ at 25\% and
$+0.2\%$ at 50\% relative to REAP, again with intervals crossing zero and the
other three models again lower at 50\%. Held-out fidelity on that backbone is
therefore where our advantage is weakest, and it is the same backbone whose
excess perplexity moves against reverse KL in
Appendix~\ref{app:matched_components}.

Both removal ratios share reference predictions and token positions, with at
most 1,024 valid positions per batch and no further subsampling. For
\rcsloo the 50\% row uses the same predictions as
Figure~\ref{fig:routing-four-criteria}, but aggregates them by domain rather
than routing decile. Neither view evaluates full
generated trajectories.

Each spoke uses an independent KL scale spanning all four point estimates,
padded by the median bootstrap half-width across criteria. Curves are
comparable along a spoke, but distances are not comparable across spokes,
panels, or removal ratios. Eight of 256 whiskers extend beyond the frame, so
their upper ends are clipped rather than omitted. Intermediate labels give
mid-axis KL values.

\subsection{Routing-stratified output fidelity}
\label{app:mechanism}

The LOO factor increases with an expert's routing weight, motivating an
analysis of whether residual criteria improve fidelity most under
concentrated routing.
Figure~\ref{fig:routing-four-criteria} groups assistant tokens by routing
concentration within each learned-router MoE layer. We rank tokens by negative
normalized routing entropy $-H_r$ and form ten quantile groups with shared
ID/OOD boundaries. A common finite-value mask pairs methods and metrics,
including the accompanying entropy and PPL measurements. We exclude
layer--batch--decile cells with fewer than 20 tokens, then average within-cell
token means equally over valid layers and batches. This grouped estimator
differs from the main sweep's equal-axis estimator. The horizontal axis runs
from diffuse to concentrated routing.

Four calibration-covered and four held-out domains are evaluated separately.
Within each ID/OOD group, 2,000 paired-batch bootstrap resamples preserve all
ten deciles. The 25\% and 50\% curves share original-model routing profiles,
token masks, quantile boundaries, and valid layer--batch--decile cells, but
compute intervals separately.

Learned-router saliency scoring does not apply to DeepSeek-V4-Flash-0731's
three fixed-hash-router layers, which are excluded from the curves. Across
criteria, these layers use identical frozen keep sets based on occurrence
counts in the token-to-expert table, not calibration-token frequencies.
Remapping preserves $k$ distinct routes per token.

At 50\% removal, held-out reverse KL is $2.1$--$3.0\times$ its corresponding
in-distribution value. From the least to the most concentrated routing decile,
\rcsloo's relative advantage widens by $15.6/17.2$ percentage points on Qwen3.6-35B-A3B
ID/OOD and by $4.3$ points on GLM-4.7-Flash ID. Nominal paired-bootstrap intervals exclude
zero for these three endpoint contrasts but cross zero for the other five, giving mixed
evidence for a concentration-dependent advantage.

Aggregate point estimates favor \rcsloo by $14.4\%$ on GLM-4.7-Flash OOD,
$15.4/10.3\%$ on Hy3 ID/OOD, and $7.5/8.9\%$ on DeepSeek-V4-Flash-0731 ID/OOD. Qwen3.6-35B-A3B's OOD
KL reduction is near zero ($-1.5\%$, CI $[-5.3,+2.7]$). The same estimator applied to
\method, whose 50\% predictions share the reference predictions and token
mask, favors it by $15.6\%$ on GLM-4.7-Flash OOD, $17.2/12.9\%$ on Hy3 ID/OOD, and
$10.0/13.1\%$ on DeepSeek-V4-Flash-0731 ID/OOD, with Qwen3.6-35B-A3B OOD again nearly tied
($+0.5\%$, CI $[-3.7,+4.9]$). Relative to REAP, \rcsloo's mean absolute
entropy drift is lower by $15\%$, $22\%$, $11\%$, and $21\%$ on GLM-4.7-Flash,
Qwen3.6-35B-A3B, DeepSeek-V4-Flash-0731, and Hy3, respectively. Unlike the
KL reductions above, these entropy figures average over all eight axes rather
than an ID or OOD group, so they do not describe held-out behavior.

Adding the leave-one-out factor does not uniformly improve routing-stratified
fidelity under conditional RMS. Averaged over deciles, \rcs attains
the largest KL reduction against REAP in three of the eight model--group
panels. These are both Qwen3.6-35B-A3B panels ($+15.9\%$ ID and $+11.9\%$ OOD, against
$+9.4\%$ and $-0.4\%$ for \method) and DeepSeek-V4-Flash-0731 ID ($+11.4\%$
versus $+9.7\%$). \method leads the remaining five panels by $1.2$--$4.3$
percentage points over the better of \rcs and \rcsloo. The largest
\rcs--\rcsloo gap occurs on Qwen3.6-35B-A3B OOD, where \rcsloo and \method both fall below REAP
on average, while \rcs improves on it. The gap narrows at the tenth decile,
where the reductions are $+16.5\%$, $+11.8\%$, and $+12.4\%$ for \rcs,
\rcsloo, and \method, respectively. These equal-decile means at 50\% removal
do not establish a downstream ranking or replace the matched equal-axis
estimates in Table~\ref{tab:loo_ablation}. The \rcs--\rcsloo contrast shows
that the gate factor's effect on fidelity depends on the backbone even with
aggregation held fixed.

Entropy drift complements reverse KL by measuring changes
in predictive dispersion. Predictive entropy itself is neither accuracy,
calibration, nor response diversity, and neither higher nor lower entropy
alone is preferable.

\begin{figure*}[!t]
\centering
\includegraphics[width=\textwidth]{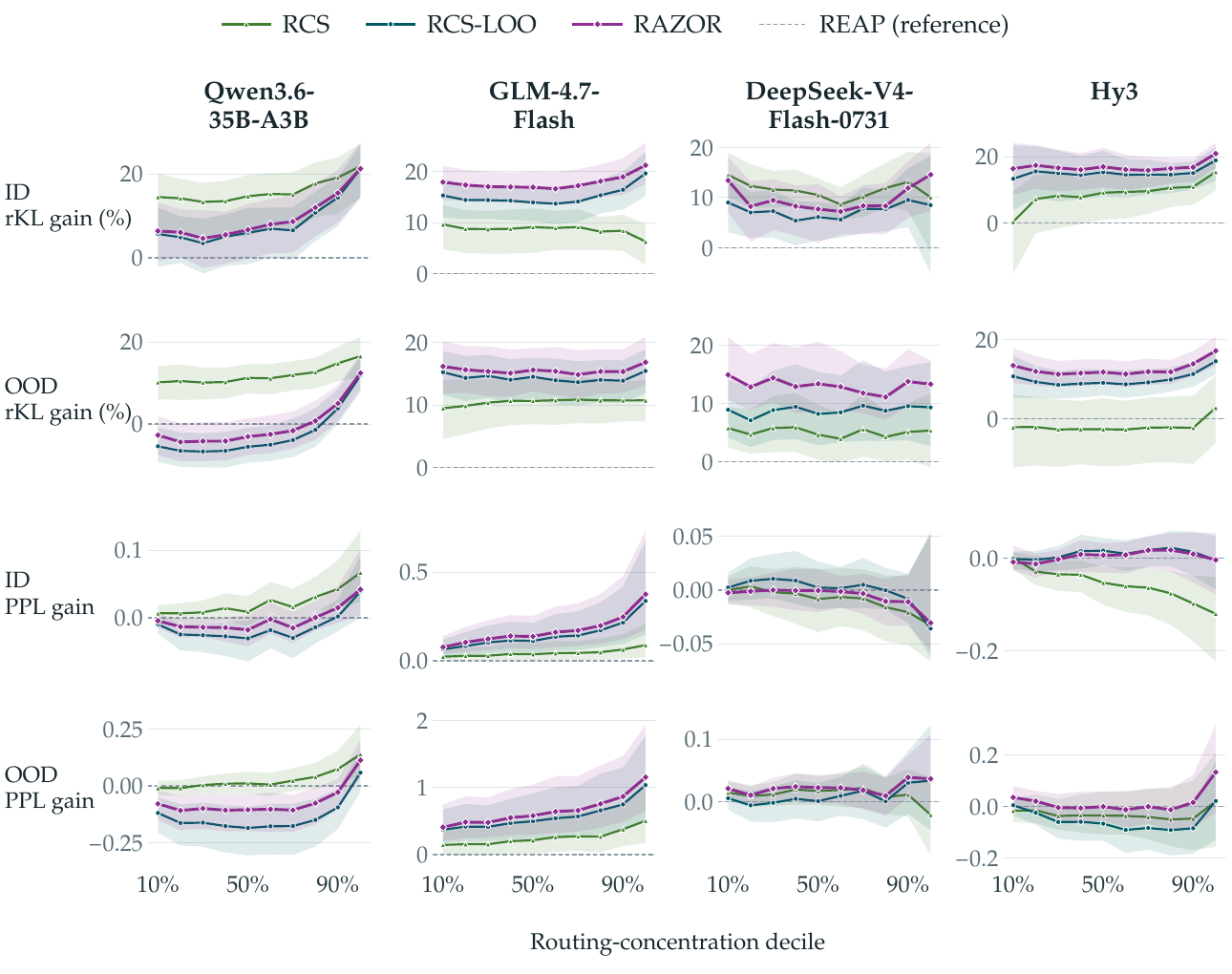}
\caption{\textbf{Routing-stratified fidelity at 50\% removal.}
The three residual criteria use conditional RMS and are evaluated on the same
batches, reference prefixes, and token positions.
All panels compare against REAP at the same decile, with a dashed
zero line and higher values indicating better fidelity. The top two rows show
reverse-KL reduction in percent. The bottom two show REAP's
$|\mathrm{PPL}_q-\mathrm{PPL}_p|$ minus the criterion's, in PPL points
(Equation~\ref{eq:story_ppl}), so positive values mean closer to the original.
Differences avoid unstable ratios where REAP's deviation approaches zero
($0.0016$ on Hy3 ID). Bands are pointwise 95\% paired batch-bootstrap intervals,
and panels use independent vertical scales.}
\label{fig:routing-four-criteria}
\end{figure*}

Reverse KL and perplexity deviation give different rankings. \method lowers
reverse KL relative to REAP in 73 of 80 deciles, with all seven exceptions in
Qwen3.6-35B-A3B's held-out group. On perplexity deviation, however,
it is farther from the original than REAP in $7/10$ ID deciles on
Qwen3.6-35B-A3B and $9/10$ on DeepSeek-V4-Flash-0731, while remaining closer
in all twenty GLM-4.7-Flash ID and OOD deciles. Across all groups it preserves
PPL more closely in 47 of 80 deciles. These are the same paired point
comparisons summarized in Appendix~\ref{app:ppl_view}, not separate
experiments.

The incremental effect of refill depends on the budget and domain group.
At 50\% removal, \method exceeds \rcsloo by $1.0$--$4.3$ percentage points
in REAP-relative KL reduction, averaging deciles equally within each of the
eight model--domain groups. Their marginal bands overlap throughout on
Qwen3.6-35B-A3B. At 25\% removal, the ordering varies by model. In the
calibration-covered groups, \method leads by $8.2$ points on Qwen3.6-35B-A3B
but trails \rcsloo by $3.7$ on GLM-4.7-Flash and by $1.5$ on
DeepSeek-V4-Flash-0731. The refill margin also does not increase consistently
with concentration, as its linear slope at 50\% ranges from $-0.30$ to
$+0.06$ percentage points per decile across the eight groups. Refill therefore
has model- and budget-dependent effects, without a consistent trend across
routing concentration.

\subsection{Reference-token perplexity on the same routing groups}
\label{app:ppl_view}

To distinguish distributional fidelity from reference-token likelihood, we
compute PPL at 50\% removal for all four backbones on the same predictions,
finite-token mask, and layer--batch--decile cells as
Figure~\ref{fig:routing-four-criteria}, whose lower two rows plot the result.
Using the reference-token losses in
Appendix~\ref{app:metrics}, let $\Delta\ell(t)=\ell_q(t)-\ell_p(t)$. Grouped
PPL is
\begin{equation}
 \mathrm{PPL}_{p}=\exp\!\big(\mathcal A[\ell_p]\big),\qquad
 \mathrm{PPL}_{q}=\exp\!\big(\mathcal A[\ell_p+\Delta\ell]\big),
\label{eq:story_ppl}
\end{equation}
where $\mathcal A$ is the routing-stratified average over valid token cells,
layers, and batches. Exponentiation follows NLL aggregation rather than
preceding it. The result is neither mean per-token perplexity nor exponentiated
predictive entropy, and need not equal whole-corpus token-weighted PPL.
We measure preservation by the absolute deviation
$|\mathrm{PPL}_{q}-\mathrm{PPL}_{p}|$, so ``closer'' means a smaller deviation
than REAP on the same group, not simply a lower PPL.

The bootstrap pairs the original, REAP, and \rcsloo predictions on the same
retained positions. Within each ID/OOD group, shared whole-batch indices
preserve all ten deciles. Recomputing and exponentiating each resampled NLL
average gives marginal intervals and paired PPL differences.

The two fidelity views disagree widely. Across 80 point
estimates from four models, two ID/OOD groups, and ten deciles, \rcsloo has
lower reverse KL than REAP in 72 but PPL closer to the original in only 47, and
\method gives 73 and 47 on the same cells. The disagreement is concentrated
rather than diffuse. Both measures favor \rcsloo throughout GLM-4.7-Flash's
groups, whereas on Qwen3.6-35B-A3B PPL is closer in only two ID and one OOD
decile despite lower reverse KL in every ID decile. DeepSeek-V4-Flash-0731
illustrates why the measures can part ways, since its ID NLL changes are
negative, so a lower PPL there means a larger deviation from the original
rather than a smaller one. Improving the full predictive distribution and
improving the likelihood of the particular reference continuation are therefore
distinct objectives, and our scores optimize neither directly.

\subsection{Response diversity}
\label{app:response_diversity}

Fidelity on fixed reference prefixes does not determine diversity under
free-running generation. We measure lexical and embedding diversity across
repeated responses using two sampling protocols. A
three-benchmark comparison covers all four criteria on Qwen3.6-35B-A3B, and a
larger collection covers fewer criteria on four tasks and two
backbones.

\paragraph{Criterion-family comparison.}
\label{app:distinct_n_protocol}
The left panel of Figure~\ref{fig:diversity-grid} compares REAP, \rcs,
\rcsloo, and \method on Qwen3.6-35B-A3B using Distinct-$n$ for $n=2,3,4$.
For each benchmark, all four criteria at both budgets and the unpruned
reference use 16 fixed response positions per question, drawn from the first
256 positions for AIME and 32 for each coding task, the largest pools common
to every criterion. This comparison draws from different sampling pools than
the four-task analysis below, so their percentages are not directly comparable.

Within each benchmark, values are
$100(\bar M/\bar M_{\mathrm{Original}}-1)$ for question-averaged metrics.
AIME 2026, HumanEval+, and LiveCodeBench receive equal weight, so AIME contributes
one third of the weight on 30 of 369 questions. For pointwise 95\% paired
question-bootstrap intervals, we average the resampled relative changes across
benchmarks before taking percentiles. These intervals condition on the
16 fixed responses rather than measure variation over new generations.
Of twelve intervals per budget (four criteria and three orders), all twelve
exclude zero at 50\% and nine do so at 25\%. They describe changes from
Original, not the \rcsloo--REAP difference. In per-benchmark point estimates,
\rcsloo exceeds REAP in all nine benchmark--order cells at 50\% and six at
25\%. Across the six benchmark-averaged budget--order cells, its median
advantage over \rcs is 4.02 percentage points in relative change from Original.
Refill lowers Distinct-$n$ relative to \rcsloo in five of these six cells.

\paragraph{Four-task collection and sampling.}
The repeated-generation collection is separate from the nine-task
benchmark. It contains 30 AIME 2026 \citep{maaAIME}, 219 Minerva
\citep{minerva2022}, 164 HumanEval+ \citep{evalplus2023}, and 175 LiveCodeBench
\citep{livecodebench2024} questions. The Minerva evaluation uses a
219-problem subset of the 272 OCWCourses undergraduate STEM problems
assembled from MIT OpenCourseWare. The 175 LiveCodeBench task IDs
are fixed across all variants compared here, which is what the paired
comparisons require, and are drawn from a different release than the
nine-task benchmark.

It covers Original plus EAN, Frequency, REAP, and \rcsloo at 25\% and 50\%
removal on Qwen3.6-35B-A3B, and Original, REAP, and \rcsloo at both budgets on
GLM-4.7-Flash. Unlike the three-benchmark comparison in
Figure~\ref{fig:diversity-grid}, this analysis includes
Minerva and GLM-4.7-Flash.

Mathematics has 2,048 generations per question, HumanEval+ has 512, and
LiveCodeBench has 256. Generation uses temperature $0.7$, top-$p=0.95$,
top-$k=20$, repetition penalty $1.0$, an 8,192-token output cap, and thinking
disabled, which differs from benchmark decoding in
Appendix~\ref{app:config}. We sample 16 response positions per question
without replacement, reusing the same positions across methods so that all
comparisons are paired. This yields 131,712 responses. Of these, 53,136
Qwen3.6-35B-A3B responses cover the three tasks shared with
Figure~\ref{fig:diversity-grid}.

\paragraph{Metric definitions.}
Metrics are computed per question and then averaged over questions.
We adapt the distinct-$n$ approach of \citet{li2016diversity} to model tokens,
using $n\in\{2,4\}$ in the four-task analysis and $n\in\{2,3,4\}$ in
Figure~\ref{fig:diversity-grid}. Distinct-$n$ divides the number of unique token $n$-grams
pooled across 16 responses by their total occurrences, without crossing
response boundaries. Jaccard distance averages
$1-|G_i\cap G_j|/|G_i\cup G_j|$ over all 120 response pairs, where $G_i$
and $G_j$ are response-level token 4-gram sets. Both lexical measures use
each model family's own tokenizer.

Embedding distance averages pairwise cosine distances using the fixed
all-MiniLM-L6-v2 encoder.\footnote{\url{https://huggingface.co/sentence-transformers/all-MiniLM-L6-v2}}
Our full-response extension
splits text into non-overlapping chunks of at most 254 WordPieces, mean-pools
token representations, and normalizes each chunk vector. We then normalize
their length-weighted mean. This retains content beyond the first chunk but
not chunk order. The resulting distances do not measure execution behavior or
mathematical-strategy equivalence.

\paragraph{Normalization and uncertainty.}
For each method, family, and task, this comparison reports
$100(\bar M/\bar M_{\mathrm{Original}}-1)$ using question means $\bar M$,
where Original denotes the unpruned MoE. Higher values indicate more diversity.
We recompute this ratio over 2,000 paired question-bootstrap draws and report
pointwise percentile 95\% intervals, conditional on the 16 fixed responses
and evaluated checkpoints. They exclude new-response and checkpoint-level
variation.

\paragraph{Length control and interpretation.}
The four diversity metrics are correlated, so they are not independent
evidence. We assess length
sensitivity by repeating the analysis on exactly the first 128 tokens of each
response. Within a family, we retain only questions eligible for every
available method at both pruning ratios. In AIME 2026, Minerva, HumanEval+,
and LiveCodeBench order, the resulting question counts are 30/219/134/143
for Qwen3.6-35B-A3B and 29/176/82/142 for GLM-4.7-Flash.

At 50\% removal, the four-task mean Distinct-4 gap between \rcsloo and REAP
changes from $+4.30\%$ on full responses to $-0.37\%$ under prefix control on
Qwen3.6-35B-A3B. The corresponding gaps on GLM-4.7-Flash are $+10.35\%$ and $+16.50\%$. These means weight task-relative
changes against REAP equally, unlike the Original-normalized diversity panel
of Figure~\ref{fig:diversity-grid}.
The control changes both length and question eligibility, so Qwen3.6-35B-A3B's reversal
under prefix control reflects both factors and is not interpretable as a pure
length effect. Lexical and embedding diversity measure surface variation in token
sequences and do not measure correctness, token efficiency, or the number of
distinct correct solutions.

\subsection{Response delimiters, formatting, and termination}
\label{app:behaviour}

We measure stray closing thinking delimiters, code-fence structure, and
length-limited finishes. These observable response properties complement task
scores without assessing semantic correctness. Their association with removal
budget differs across backbones, and some change substantially at 75\% removal.

\paragraph{Protocol and operational definitions.}
This analysis follows the repeated-generation protocol in
Appendix~\ref{app:response_diversity}, using temperature $0.7$, top-$p=0.95$,
top-$k=20$, an 8{,}192-token cap, and thinking disabled. The tables and
the three response-form panels of Figure~\ref{fig:diversity-grid} use 256 shared
sample positions per question, giving 44,800 completions per variant on
LiveCodeBench. These measurements characterize response form, not the
benchmark accuracy of
Table~\ref{tab:main}.

A completion contains a \emph{stray} \thinkclose when that literal string
appears at least once. We count each such completion once, regardless of the
number of occurrences. In the Qwen3.6-35B-A3B tokenizer it is a single vocabulary entry that is not
flagged as a special token, so ordinary detokenization leaves it in the text.
The chat template already closes the thinking block inside the prompt, so any
further occurrence is emitted by the model and is not evidence about latent
reasoning.
\emph{No fence} denotes the absence of a line-delimited code fence, while
\emph{prefix} denotes a nonempty prefix before such a fence. Both are shares
of all completions, and neither is a semantic code or narration classifier.
Unfenced responses can contain code, which the evaluator may extract through
its bare-code fallback. \emph{Length-limited} uses the decoder's finish reason.
The rightmost panel of Figure~\ref{fig:diversity-grid} reports a \emph{loop
share}, the fraction of length-limited completions in which fewer than one
quarter of the token 4-grams are unique. The threshold is a heuristic
definition of high repetition. The share measures how often length-limited
completions also satisfy this criterion. It does not identify why generation
reached the output cap or measure repetition among all completions.

Response lengths in this table count Unicode characters, whereas
Figure~\ref{fig:diversity-grid} counts tokens under the backbone's own
tokenizer, matching the units of its Distinct-$n$ panel. The ratio of median
character length to median token length falls from $3.44$ for the unpruned
model to $3.04$ for REAP at 50\% removal. Consequently, the proportional
increase in median character length is smaller than the increase in median
token length. Intervals resample questions
to retain dependence among completions sharing a prompt.

\begin{table}[!t]
\centering
\small
\setlength{\tabcolsep}{4pt}
\renewcommand{\arraystretch}{1.14}
\caption{\textbf{Response form on LiveCodeBench by criterion and budget,}
\textbf{Qwen3.6-35B-A3B.}
Rates are percentages except stray \thinkclose per 10,000. Lengths are in kchar.
Brackets give pointwise 95\% question-bootstrap intervals.
GLM-4.7-Flash is measured but not printed here because that collection covers
only REAP and \rcsloo, with no \method row for comparison.
Its \rcsloo ladder appears in Table~\ref{tab:behaviour-ladder}.}
\label{tab:behaviour-lcb}
\begin{tabularx}{\columnwidth}{@{}>{\raggedright\arraybackslash}p{1.7cm}c>{\raggedleft\arraybackslash}p{2.7cm}*{4}{>{\raggedleft\arraybackslash}X}@{}}
\toprule
\thead{Criterion} & \thead{Cut} & \thead{Stray \thinkclose/10k} & \thead{No fence} & \thead{Prefix} & \thead{Len-lim.} & \thead{Med. kchar} \\
\midrule
\multicolumn{7}{@{}l}{\textit{Qwen3.6-35B-A3B, 44,800 completions per checkpoint}} \\
Original & -- & 3.6\,[1.6, 6.0] & 7.1 & 61.4 & 16.0 & 8.3 \\
EAN & 25\% & 0.4\,[0.0, 1.1] & 8.7 & 52.3 & 19.5 & 9.7 \\
Frequency & 25\% & 2.0\,[0.7, 3.6] & 5.2 & 60.5 & 15.1 & 9.2 \\
REAP & 25\% & 1.6\,[0.4, 3.1] & 8.4 & 55.1 & 19.1 & 10.3 \\
\rcsloo & 25\% & 0.2\,[0.0, 0.7] & 8.4 & 52.3 & 17.0 & 8.6 \\
\method & 25\% & 0.4\,[0.0, 1.1] & 7.7 & 54.7 & 16.3 & 8.4 \\
EAN & 50\% & 4.2\,[2.2, 6.9] & 18.3 & 40.2 & 33.9 & 9.6 \\
Frequency & 50\% & 16.1\,[10.9, 22.3] & 2.9 & 30.4 & 13.0 & 6.7 \\
REAP & 50\% & 13.4\,[7.6, 20.1] & 11.5 & 66.3 & 24.2 & 12.0 \\
\rcsloo & 50\% & 68.5\,[22.8, 133.7] & 10.2 & 16.6 & 25.1 & 7.3 \\
\method & 50\% & 8.5\,[4.2, 13.6] & 8.2 & 19.6 & 25.8 & 7.8 \\
\bottomrule
\end{tabularx}
\end{table}

\begin{table}[!t]
\centering
\small
\setlength{\tabcolsep}{4pt}
\renewcommand{\arraystretch}{1.14}
\caption{\textbf{The \rcsloo budget ladder on LiveCodeBench.}
Spearman correlations use the nine pruned checkpoints, not only the four shown
and not the 0\% reference. Units follow Table~\ref{tab:behaviour-lcb}.}
\label{tab:behaviour-ladder}
\begin{tabularx}{\columnwidth}{@{}>{\raggedright\arraybackslash}p{2.9cm}*{4}{>{\raggedleft\arraybackslash}X}>{\raggedleft\arraybackslash}p{1.0cm}>{\raggedleft\arraybackslash}p{1.2cm}@{}}
\toprule
\thead{Quantity} & \thead{0\%} & \thead{25\%} & \thead{50\%} & \thead{75\%} & \thead{$\rho$} & \thead{$p$} \\
\midrule
\multicolumn{7}{@{}l}{\textit{Qwen3.6-35B-A3B}} \\
Stray \thinkclose/10k & 3.6 & 0.2 & 68.5 & 42.0 & $+0.75$ & 0.020 \\
No fence (\%) & 7.1 & 8.4 & 10.2 & 25.8 & $+0.82$ & 0.007 \\
Prefix (\%) & 61.4 & 52.3 & 16.6 & 43.6 & $-0.28$ & 0.460 \\
Length-limited (\%) & 16.0 & 17.0 & 25.1 & 42.0 & $+0.82$ & 0.007 \\
Median kchar & 8.3 & 8.6 & 7.3 & 8.4 & $+0.43$ & 0.244 \\
\midrule
\multicolumn{7}{@{}l}{\textit{GLM-4.7-Flash}} \\
Stray \thinkclose/10k & 0.4 & 0.0 & 2.9 & 70.5 & $+0.89$ & 0.001 \\
No fence (\%) & 0.2 & 0.5 & 1.7 & 17.0 & $+0.93$ & $<$0.001 \\
Prefix (\%) & 66.5 & 68.9 & 58.3 & 14.3 & $-0.88$ & 0.002 \\
Length-limited (\%) & 17.5 & 19.6 & 18.6 & 78.7 & $+0.58$ & 0.099 \\
Median kchar & 3.8 & 4.0 & 3.0 & 30.4 & $+0.17$ & 0.668 \\
\bottomrule
\end{tabularx}
\end{table}

\paragraph{Sample-pool sensitivity.}
Rare-event estimates depend on the number of sampled completions. REAP's
stray-delimiter rate is $8.93$ per $10{,}000$ with 32 completions per question
versus $13.39$ with 256, although the criteria retain the same point-estimate
ordering at 50\% removal under either pool. Both criteria in the main-text
panel use the full
44,800-completion pool, whose rates and intervals are reported below.

\paragraph{Budget-dependent changes.}
We evaluate the unpruned reference and nine \rcsloo checkpoints with removal
ratios up to 75\% on each backbone.
Table~\ref{tab:behaviour-ladder} shows the quarter budgets and reports each
rank correlation over the nine pruned checkpoints, excluding the unpruned
0\% reference, which is shown for comparison but is not a removal budget.
Only \rcsloo covers every measured
budget in this collection, while Table~\ref{tab:behaviour-lcb} compares other
criteria at shared budgets. On Qwen3.6-35B-A3B, the unfenced and
length-limited shares each have Spearman $\rho=+0.82$ with removal budget
($p=0.007$), and the rate of stray \thinkclose has $\rho=+0.75$ ($p=0.020$).
GLM-4.7-Flash gives $\rho=+0.93$ ($p<0.001$), $+0.58$ ($p=0.099$), and
$+0.89$ ($p=0.001$), respectively. These descriptive associations are not
stepwise monotonic. Among the undisplayed budgets, Qwen3.6-35B-A3B's
length-limited share falls at 40\% and GLM-4.7-Flash's stray-delimiter rate
peaks at 60\%. The displayed values also show Qwen3.6-35B-A3B's stray rate
higher at 50\% than at 75\%.

The remaining two measures show no such trend. Median response length has
$\rho=+0.43$ ($p=0.24$) on Qwen3.6-35B-A3B and $+0.17$ ($p=0.67$) on
GLM-4.7-Flash, and the prefix share has $\rho=-0.28$ ($p=0.46$) and $-0.88$
($p=0.002$), respectively. Nonsignificance here is weak evidence either way,
since all tests are exploratory, use nine pruned checkpoints on shared questions, report
two-sided $p$-values uncorrected for multiple comparisons, and do not estimate
independent-run variability.

At 75\% removal,
Qwen3.6-35B-A3B produces $25.8\%$ unfenced and $42.0\%$ length-limited
completions, while GLM-4.7-Flash produces $78.7\%$ length-limited completions
at a median length of 30{,}444 characters. These rates identify a substantial
change in termination behavior under the evaluated decoding protocol.

\paragraph{Refill and delimiter emissions on Qwen3.6-35B-A3B.}
At 25\% removal, every criterion's tag-rate point estimate is at or below the
unpruned Qwen3.6-35B-A3B rate of $3.6$ per 10{,}000, with \rcsloo lowest at $0.2$
$[0.0, 0.7]$ (Table~\ref{tab:behaviour-lcb}). At 50\%, \rcsloo reaches
$68.5$ $[22.8, 133.7]$, versus REAP's $13.4$ $[7.6, 20.1]$. The refill
checkpoint reduces the observed rate to $8.5$ $[4.2, 13.6]$, below REAP's point
estimate but with overlapping marginal intervals.

The \rcsloo--REAP gap also appears when each variant's generated responses
are binned by their own lengths using cutoffs from the unpruned length
distribution. In the fourth interval, rates are $100.7$ and $14.2$ per
10{,}000, with \method at $14.4$. Matching question and sample position against
Original gives 307 newly occurring tag events for \rcsloo, 60 for REAP, and
38 for \method. The event comparison pairs question and sample position,
whereas the length-stratified comparison uses common cutoffs with
variant-specific bin membership. Neither isolates the effect of changed
response length or termination. On the same collection, GLM-4.7-Flash's corresponding
\rcsloo rate is $2.9$ $[1.1, 4.9]$, as shown at 50\% removal in
Table~\ref{tab:behaviour-ladder}. REAP, which is not included in that table,
has a rate of $1.6$ $[0.4, 2.7]$. The magnitude of the Qwen3.6-35B-A3B gap
therefore does not carry over. Across variants, $98$--$100\%$ of tag-containing
completions also contain code output, but this does not establish answer
correctness or preserved utility.

\paragraph{Length shifts on Qwen3.6-35B-A3B.}
At every evaluated 25\% Qwen3.6-35B-A3B checkpoint, the median paired length change relative to
Original, matched by question and sample position, is at most 17 characters
in magnitude. At 50\%, REAP
lengthens responses by a median 542 characters ($67.3\%$ of completions longer),
\rcsloo by 126 and \method by 114, while Frequency shortens them by 129. The share of completions with a
prefix before the code fence falls to $16.6\%$ for \rcsloo and $19.6\%$ for
\method but rises to $66.3\%$ for REAP, against $61.4\%$ unpruned.

\paragraph{Task and backbone dependence.}
HumanEval+, the shortest task on both backbones, is the only task nearly free
of visible tags throughout, with one tag-containing completion out of 671{,}744.
On the other tasks, the two backbones differ. Qwen3.6-35B-A3B emits tags on
LiveCodeBench but almost nowhere else, with one AIME 2026 completion out of
84{,}480 and none of 616{,}704 on Minerva. GLM-4.7-Flash instead shows this
behavior on mathematics. At 50\% removal, $2.90\%$ $[1.25, 5.18]$ of its
AIME 2026 completions carry a tag under REAP and $7.25\%$ $[4.11, 11.22]$
under \rcsloo. The corresponding rates are $0.67\%$ and $0.54\%$ on Minerva,
against at most $0.03\%$ on LiveCodeBench.

Task-level median length alone does not order these rates. Qwen3.6-35B-A3B's
AIME 2026 responses are longer than its LiveCodeBench responses (10{,}862
versus 8{,}332 characters) yet contain almost no tags. This comparison does
not isolate length from other task differences.

\paragraph{Matched examples.}
For one LiveCodeBench question at a fixed sample position, the 50\%
checkpoints produce 1{,}894 and 1{,}897 characters under Frequency and \method
(both opening with code), 4{,}228 under REAP, 4{,}875 under EAN, and 6{,}923
under \rcsloo, versus 3{,}423 unpruned. Only \rcsloo emits a closing
thinking tag, preceded by 6{,}233 characters ending with
``\texttt{Let\textquotesingle s code accordingly.}'' and followed by program text.
This describes the visible response sequence, not a latent reasoning process.
The median pre-tag length among this checkpoint's tag-containing completions is
12{,}507 characters.

A separate \rcsloo 70\% completion reaches the cap after 21{,}966 characters
without a code fence, contributing to both the unfenced and length-limited
counts. The two examples were selected by median preamble length within their
respective categories. The tag-containing example terminates normally.
They illustrate the measured categories rather than establish comparative
answer quality.

\section{Extended Related Work}
\label{app:related_work}

MoE compression methods differ along three axes, namely the removal unit, the
importance signal, and the selection or adaptation procedure applied
afterwards. Sorting the literature this way clarifies what our comparison does
and does not cover, because \method varies only the importance signal. It
removes whole experts, ranks them independently under a fixed layerwise budget,
and does not update learned expert or router weights. The four subsections below
group prior work by how far it departs from that setting, moving from
alternative whole-expert saliency signals, through methods that model expert
relations or search over candidate sets, to methods that change the surviving
functions or the router itself, and finally to compression below the level of a
whole expert.

\subsection{Whole-expert saliency and conditional aggregation}

Routing, activation, and parameter statistics offer inexpensive rankings.
SEER-MoE uses activation counts or routing-probability soft counts with
layerwise or global removal \citep{muzio2024seer}. AIMER instead ranks experts
without calibration data using a normalized $\ell_1/\ell_2$ concentration
statistic over their weights \citep{aimer2026}. \citet{fantasticexperts2025}
compare several expert-dropping criteria, including activation norms. These
frequency, activation-norm, and weight-concentration criteria do not reconstruct
the routed mixture after deletion.

REAP, our closest magnitude-based comparator, averages $w_i\|f_i\|_2$ over
tokens selecting expert $i$ \citep{reap2026}. Its analysis includes promoted
substitution and survivor renormalization, but its saliency uses only the
removed expert's weighted output norm.
In \citet{unified2026}, the exact token-level perturbation also contains both
survivor reweighting and the promoted replacement. The authors describe direct
candidate-wise rerouting as costly and report high overlap between selections
induced by exact damage and its magnitude proxy. Our refill identity evaluates
the token-level perturbation algebraically under the stated assumptions, using
the original routed outputs and one promoted expert output. This requires
additional local computation but avoids separately reconstructing every
candidate pruned mixture. The fixed-support identity covers deletion without
refill. Our conditional RMS aggregation differs from the corpus-summed damage
used in that analysis.

Token-level scoring and cross-token aggregation remain separate choices.
\method uses conditional RMS rather than REAP's conditional mean, while the
component comparisons hold the token quantity fixed when varying aggregation
(Appendix~\ref{app:ablations}). Neither conditioning nor RMS follows from the
local deletion identity, and no aggregation rule is uniformly preferred by
the available results.

\subsection{Expert relations and candidate-set selection}

Relational information is not unique to consensus-residual scoring. STUN
clusters router-derived behavior before representative selection or selective
reconstruction and unstructured pruning \citep{stun2024}. HC-SMoE instead
clusters average expert outputs and merges each group \citep{hcsmoe2024}.
SHAPE uses routing co-occurrence in a coalition-utility proxy
\citep{shape2026}. ConMoE combines routing-conditioned contribution with
nearest-expert parameter distance before prototype reassignment
\citep{conmoe2026}. MAESTRO derives importance from cross-layer routing
transitions before layerwise selection \citep{maestro2026}, with its main
results using attention-only LoRA recovery while experts and routers remain
frozen.

Our scores instead evaluate output change under specified deletion
counterfactuals at a fixed layer input. The reference is the routed mixture,
not a cluster representative or a routing graph. This distinction concerns
the local score, not a claim that independent expert ranking captures all
interactions. Residuals can reinforce or cancel under joint removal
(Appendix~\ref{app:set}), so exact single-deletion quantities do not yield an
exact set-selection algorithm.

Other approaches evaluate candidate sets directly.
\citet{lu2024notallexperts} select retained combinations by layer-output
reconstruction, EEP searches pruning and merging configurations
\citep{eep2024}, and MoE-I$^2$ evaluates layerwise deletion sets before jointly
choosing candidates across short layer blocks \citep{moeii2024}. We instead
retain the highest-scoring experts under a fixed layerwise budget. Our local
single-deletion identities neither solve combinatorial selection nor establish
superiority to subset search. \citet{provablepruning2024} give theoretical
conditions under a stylized classification model in which pruning experts
from a fine-tuned sparse MoE preserves performance. Our local surrogate
does not provide such a guarantee.

\subsection{Merging, router adaptation, and budget allocation}

Expert merging changes the surviving functions, rather than only selecting
which original experts remain. MC-SMoE groups experts around high-usage
representatives using router-logit similarity, aligns neurons, and averages
expert parameters before low-rank and structured compression
\citep{mcsmoe2024}. REAM protects high-saliency experts as centroids and merges
similar non-centroid experts into them \citep{ream2026}. Neither operation is
the router refill modeled by \rcsrefill, which promotes an existing unselected
expert without averaging its parameters with the removed expert.

Router KD adapts only router parameters after compression, distilling the
uncompressed model's next-token distribution while freezing the experts and
other backbone parameters \citep{routercalib2026}. It is a recovery procedure,
not an expert-ranking criterion. Budget allocation is another distinct
choice. GRAPE compares residual redundancy across layers to allocate
non-uniform budgets, merging similar expert pairs within the selected layer
\citep{globalprune2026}. MC-SMoE also permits adaptive layerwise merging ratios
\citep{mcsmoe2024}. These examples show that merging, adaptation, and budget
allocation are overlapping design choices rather than mutually exclusive
method families.
Our experiments use uniform layerwise removal ratios, leave retained expert
weights unchanged, and perform no recovery training. Combining residual-based
scores with merging, router adaptation, or non-uniform allocation is outside
the present comparison, so our results do not establish their compatibility
or joint benefit.

\subsection{Finer-grained compression and comparison scope}

MoE-Pruner sparsifies weights within experts using weight magnitudes and
router-weighted input activation statistics \citep{moepruner2024}. HEAPr treats
feed-forward intermediate neurons as atomic expert units and uses output-space
second-order information to estimate their importance \citep{heapr2025}.
Both operate at a finer granularity than removing entire routed experts. An
atomic expert in this terminology is not one of the router's original
selectable experts. Fisher-MoE also studies a finer-grained variant that ranks
FFN intermediate dimensions by Fisher importance and reports capability
concentration in small sets of those dimensions \citep{fishermoe2026}.

These approaches share an interest in preserving model behavior, but their
removal units and compression procedures differ from ours, so their reported
sparsity levels are not matched whole-expert budgets.

How pruned experts are judged is itself contested, which bears on our decision
to report generation behavior alongside task scores.
\citet{halfexperts2026} prune two open-weight MoE backbones for coding under
five selection strategies and report that the winning strategy changes between
model families, and that perplexity can rate a broken model above an intact
one. These findings concern cross-model sensitivity and disagreement between
perplexity and functional performance. Our experiments examine related
distinctions among predictive fidelity, downstream task scores, and generation
behavior (Appendix~\ref{app:ppl_view}). That study evaluates recovery tuning
separately, while our experiments use no recovery training and cover nine
reasoning-intensive tasks.

Our comparisons hold the removal unit, layerwise budget, and learned
parameters fixed while applying each architecture's routing rule to the
retained pool. They vary the scoring criterion among \rcs, \rcsloo, and \rcsrefill against the
magnitude-based and frequency-based baselines, with \method denoting
\rcsrefill under conditional RMS. Our evidence therefore concerns which local
signal best identifies replaceable experts, and it leaves open how such a
signal would combine with merging, router adaptation, non-uniform budget
allocation, or sub-expert granularity. Their joint benefit with residual-based ranking remains untested.

\end{document}